\newcommand{\eps}{\epsilon}
\newcommand{\on}{\{-1,1\}}
\newcommand{\pr}{\mathop{\mathbf{Pr}}}
\newcommand{\E}{\mathop{\mathbf E}}
\newcommand{\R}{\mathbb{R}}
\newcommand{\D}{\mathcal{D}}
\newcommand{\ORR}{\mathsf{OR}}
\newcommand{\ANDD}{\mathsf{AND}}
\newcommand{\U}{\mathcal{U}}
\newcommand{\N}{\mathbb{N}}
\newcommand{\A}{\mathcal{A}}
\newcommand{\I}{I}
\newcommand{\ie}{i.e.\text{ }}
\newcommand{\eg}{e.g.}
\newcommand{\size}{\mathsf{size}}
\newcommand{\T}{\mathbb{T}}
\newcommand{\CQ}{\mathbf{CQ}}
\newcommand{\C}{\mathcal{C}}
\newcommand{\poly}{\mathsf{poly}}
\newcommand{\zo}{\{0, 1\}}
\renewcommand{\S}{\mathbb{S}}
\newcommand{\F}{\mathcal{F}}
\renewcommand{\P}{\mathcal{P}}
\newtheorem{fact}{Fact}[section]
\newtheorem{definition}[fact]{Definition}
\newtheorem{theorem}[fact]{Theorem}
\newtheorem{lemma}[fact]{Lemma}
\newtheorem{corollary}[fact]{Corollary}
\newtheorem{proposition}{Proposition}[section]
\newtheorem{remark}{Remark}[section]
\newcommand{\eat}[1]{}
\newcommand{\onlyfull}[1]{#1}
\newcommand{\onlycolt}[1]{}
\newcommand{\cond}{\ |\ }
\newcommand{\alequ}[1]{\begin{align} #1 \end{align}}
 \author{Vitaly Feldman \\ vitaly@post.harvard.edu \and
% \addr IBM Research - Almaden
% \AND
Pravesh Kothari \\ kothari@cs.utexas.edu}
\title{Learning Coverage Functions and Private Release of Marginals}
\begin{document}
\date{\empty}
\maketitle

\begin{abstract}
We study the problem of approximating and learning coverage functions. A function $c: 2^{[n]} \rightarrow \R^{+}$ is a coverage function, if there exists a universe $U$ with non-negative weights $w(u)$ for each $u \in U$ and subsets $A_1, A_2, \ldots, A_n$ of $U$ such that $c(S) = \sum_{u \in \cup_{i \in S} A_i} w(u)$. Alternatively, coverage functions can be described as non-negative linear combinations of monotone disjunctions. They are a natural subclass of submodular functions and arise in a number of applications.

We give an algorithm that for any $\gamma,\delta>0$, given random and uniform examples of an unknown coverage function $c$, finds a function $h$ that approximates $c$ within factor $1+\gamma$ on all but $\delta$-fraction of the points in time $\poly(n,1/\gamma,1/\delta)$. This is the first fully-polynomial algorithm for learning an interesting class of functions in the demanding PMAC model of  \citet{BalcanHarvey:12full}.
Our algorithms are based on several new structural properties of coverage functions.
Using the results in \citep{FeldmanKothari:14symm}, we also show that coverage functions are learnable agnostically with excess $\ell_1$-error $\eps$ over all product and symmetric distributions in time $n^{\log(1/\eps)}$.
In contrast, we show that, without assumptions on the distribution, learning coverage functions is at least as hard as learning polynomial-size disjoint DNF formulas, a class of functions for which the best known algorithm runs in time $2^{\tilde{O}(n^{1/3})}$ \citep{KS04}.

As an application of our learning results, we give simple differentially-private algorithms for releasing monotone conjunction counting queries with low {\em average} error. 
%The first algorithm leads to a polynomial time algorithm for the uniform distribution over monotone conjunctions and the second algorithm leads to a $n^{O(\log(1/\bar{\alpha}))}$-time algorithm for getting the average error $\bar{\alpha}$ on any product or symmetric distribution over monotone conjunctions. 
In particular, for any $k \leq n$, we obtain private release of $k$-way marginals with average error $\bar{\alpha}$ in time $n^{O(\log(1/\bar{\alpha}))}$.
\end{abstract}

%\begin{keywords}
%PAC learning, Coverage Functions, Differential Privacy
%\end{keywords}

%\thispagestyle{empty}

%\setcounter{page}{1}

\newcommand{\Cv}{\mathcal{CV}}
\section{Introduction}
We consider learning and approximation of the class of \emph{coverage} functions over the Boolean hypercube $\on^n$. A function $c: 2^{[n]} \rightarrow \R^{+}$ is a coverage function if there exists a family of sets $A_1, A_2, \ldots, A_n$ on a universe $U$ equipped with a weight function $w:U \rightarrow \R^{+}$ such that for any $S \subseteq [n]$, $c(S) = w( \cup_{i \in S} A_i),$ where $w(T) = \sum_{u \in T} w(u)$ for any $T \subseteq U$. We view these functions over $\on^n$ by associating each subset $S \subseteq [n]$ with vector $x^S \in \on^n$ such that $x^S_i = -1$ iff $i \in S$. We define the size (denoted by $\size(c)$) of a coverage function $c$ as the size of a smallest-size universe $U$ that can be used to define $c$. As is well-known, coverage functions also have an equivalent and natural representation as non-negative linear combinations of monotone disjunctions with the size being the number of disjunctions in the combination. %(see Lemma \ref{disj-rep} for a proof).

%$c$ is said to have a small representation if $\size(c)$ is polynomial in the dimension $n$.
Coverage functions form a relatively simple but important subclass of the broad class of submodular functions. Submodular functions have been studied in a number of contexts and play an important role in combinatorial optimization \citep{Lov83,GW95,FFI01,Edm70,Fra97,Fei98} with several applications to machine learning \citep{GKS05,KGGK06,KG11,IyerBilmes:13} and in algorithmic game theory, where they are used to model valuation functions \citep{LLN06,DS06,Vondrak08}. Coverage functions themselves figure in several applications such as facility location \citep{CFN77}, private data release of conjunctions \citep{GHRU11} and algorithmic game theory where they are used to model the utilities of agents in welfare maximization and design of combinatorial auctions \citep{DV11}.

In this paper, we investigate the learnability of coverage functions from random examples. The study of learnability from random examples of the larger classes of functions such as submodular and fractionally-subadditive functions has been initiated by \citet{BalcanHarvey:12full} who were motivated by applications in algorithmic game theory. They introduced the PMAC model of learning in which, given random and independent examples of an unknown function, the learner is required to output a hypothesis that is multiplicatively close (which is the standard notion of approximation in the optimization setting) to the unknown target on at least $1-\delta$ fraction of the points. This setting is also considered in \citep{BalcanCIW:12,BDFKNR12}. Learning of submodular functions with less demanding (and more common in machine learning) additive guarantees was first considered by \citet{GHRU11}, who were motivated by problems in private data release. In this setting the goal of the learner is equivalent to producing a hypothesis that $\epsilon$-approximates the target function in $\ell_1$ or $\ell_2$ distance. That is for functions $f,g$, $\E_{x \sim \D} [|f(x)-g(x)|]$ or $\sqrt{\E_{x \sim \D}[(f(x)-g(x))^2]}$ where $\D$ is the underlying distribution on the domain (with the uniform distribution being the most common). The same notion of error and restriction to the uniform distribution are also used in several subsequent works on learning of submodular functions \citep{CKKL12,RY13,FeldmanKV:13,FeldmanVondrak:13}. We consider both these models in the present work. For a more detailed survey of submodular function learning the reader is referred to \citep{BalcanHarvey:12full}.

\subsection{Our Results}
\subsubsection{Distribution-independent learning} Our main results are for the uniform, product and symmetric distribution learning of coverage functions. However it is useful to first understand the complexity of learning these functions without any distributional assumptions (for a formal definition and details of the models of learning see Sec. \ref{sec:prelims}). We prove (see Sec. ~\ref{app:distind}) that distribution-independent learning of coverage functions is at least as hard as PAC learning the class of polynomial-size disjoint DNF formulas over arbitrary distributions (that is DNF formulas, where each point satisfies at most 1 term). Polynomial-size disjoint DNF formulas is an expressive class of Boolean functions that includes the class of polynomial-size decision trees, for example. Moreover, there is no known algorithm for learning polynomial-size disjoint DNFs that runs faster than the algorithm for learning general DNF formulas, the best known algorithm for which runs in time $2^{\tilde{O}(n^{1/3})}$ \citep{KS04}. Let $\Cv$ denote the class of coverage functions over $\on^n$ with range in $[0,1]$.
\begin{theorem}
\label{th:dnf-reduction-intro}
Let $\A$ be an algorithm that learns all coverage functions in $\Cv$ of size at most $s$ with $\ell_1$-error $\epsilon$ in time $T(n, s, \frac{1}{\epsilon})$. Then, there exists an algorithm $\A'$ that PAC learns the class of $s$-term disjoint-DNF in time $T(2n, s, \frac{2s}{\epsilon})$.
\end{theorem}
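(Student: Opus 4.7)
The plan is to reduce distribution-independent PAC learning of $s$-term disjoint DNFs on $\on^n$ to distribution-independent $\ell_1$-learning of coverage functions on $\on^{2n}$. The core trick is to double the variables so that every literal becomes a single coordinate, and then to exploit De Morgan: the complement of a conjunction of literals is a \emph{monotone} disjunction on the doubled space, so the sum of those complements, which by disjointness of the terms equals $s - f$, is itself a coverage function.

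Concretely, write a disjoint DNF $f = C_1 \vee \cdots \vee C_s$ over $[n]$ with term $C_j$ asserting $P_j \subseteq S$ and $N_j \cap S = \emptyset$. View the ground set of the coverage function as two disjoint copies $[n] \sqcup [n]'$ of $[n]$, and for each $j$ let $D_j$ be the monotone disjunction on $[2n]$ indexed by $N_j \cup P_j'$, where $P_j' \subseteq [n]'$ is the copy of $P_j$. Set $c := \tfrac{1}{s}\sum_{j=1}^{s} D_j$; this is a non-negative combination of $s$ monotone disjunctions whose coefficients sum to $1$, so $c \in \Cv$ and $\size(c) \leq s$. On the ``paired'' input $T(S) := S \cup (\overline{S})'$ (include $i \in [n]$ iff $i \in S$ and include its copy $i' \in [n]'$ iff $i \notin S$), unfolding definitions gives $D_j(T(S)) = 1 - C_j(S)$; summing over $j$ and using the disjointness of the $C_j$'s yields $c(T(S)) = 1 - f(S)/s$.

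Next I turn random samples of $f$ into random samples of $c$ under the paired distribution $\D$ induced by pushing $\D'$ through $T$: on input $(S, f(S))$ from the $f$-oracle, emit $(T(S), 1 - f(S)/s)$. Running $\A$ on this simulated oracle with parameter $\epsilon' = \epsilon/(2s)$ outputs $h$ with $\E_{T \sim \D}[|c(T) - h(T)|] \leq \epsilon'$. Define the Boolean hypothesis $\hat{f}(S) := \1\!\left[\,s(1 - h(T(S))) \geq 1/2\,\right]$. Since $f$ is $\zo$-valued and $f(S) = s(1 - c(T(S)))$ on the support of $\D$, any misclassification of $S$ forces $|c(T(S)) - h(T(S))| \geq 1/(2s)$, so a Markov step gives $\pr_{S \sim \D'}[\hat{f}(S) \neq f(S)] \leq 2s \cdot \epsilon' = \epsilon$. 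The total running time is $T(2n, s, 2s/\epsilon)$, as claimed.

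The only nontrivial step is the structural observation in the second paragraph: a naive attempt to encode each $C_j$ directly as a coverage function fails because a conjunction is not a non-negative combination of monotone disjunctions. Doubling the variables and working with the \emph{complement} $1 - C_j$, which is a monotone OR of $|P_j|+|N_j|$ of the doubled literals, sidesteps this obstacle; the resulting additive $1/s$ per term in $c$ is exactly what forces the $2s/\epsilon$ blow-up in the error parameter.
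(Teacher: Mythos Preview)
Your proof is correct and follows essentially the same approach as the paper's: double the variables so every literal becomes a monotone coordinate, use De Morgan to write $1-C_j$ as a monotone disjunction on the doubled space, average these to obtain a size-$s$ coverage function with $c(T(S))=1-f(S)/s$, simulate examples, run $\A$ with error $\epsilon/(2s)$, and threshold the output. The only differences are cosmetic (set notation versus $\pm 1$ vectors, and your pairing $T(S)=S\cup(\overline{S})'$ in place of the paper's map $y_{2i-1}=x_i,\;y_{2i}=-x_i$).
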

This reduction gives a computational impediment to fully-polynomial PAC (and consequently PMAC) learning of coverage functions of polynomial size or any class that includes coverage functions. Previously, hardness results for learning various classes of submodular and fractionally-subadditive functions were information-theoretic \citep{BalcanHarvey:12full,BDFKNR12,BalcanCIW:12} or required encodings of cryptographic primitives in the function \citep{BalcanHarvey:12full}. %based and, unlike our results, do not apply when the unknown function has a polynomial-size representation.
\onlyfull{
On the positive side, in Sec.~\ref{sec:thresholdreduction} we show that learning (both distribution-specific and distribution-independent) of coverage functions of size $s$ is at most as hard as learning the class of linear thresholds of $s$ monotone Boolean disjunctions (which for example include monotone CNF with $s$ clauses). A special case of this simple reduction appears in \citep{HRS12}.% in the context of privately releasing conjunction queries that we discuss later.
}

%\smallskip
\subsubsection{PAC and PMAC learning over the uniform distribution} Learning of submodular functions becomes substantially easier when the distribution is restricted to be uniform (denoted by $\U$). For example, all submodular functions are learnable with $\ell_1$-error of $\eps$ in time $2^{O(1/\eps^4)} \cdot \poly(n)$ \citep{FeldmanKV:13} whereas there is a constant $\alpha$ and a distribution $\D$ such that no polynomial-time algorithm can achieve $\ell_1$-error of $\alpha$ when learning submodular functions relative to $\D$ \citep{BalcanHarvey:12full}. At the same time achieving fully-polynomial time is often hard even under this strong assumption on the distribution. For example, polynomial-size disjoint DNF or monotone DNF/CNF are not known to be learnable efficiently in this setting and the best algorithms run in $n^{O(\log{(n/\eps)})}$ time. But, as we show below, when restricted to the uniform distribution, coverage functions are easier than disjoint DNF and are PAC learnable efficiently. Further, they are learnable in fully-polynomial time even with the stronger multiplicative approximation guarantees of the PMAC learning model \citep{BalcanHarvey:12full}. We first state the PAC learning result which is easier to prove and serves as a step toward the PMAC algorithm.
\begin{theorem}
There exists an algorithm which, given $\eps > 0$ and access to random uniform examples of any coverage function $c \in \Cv$, with probability at least $2/3$, outputs a hypothesis $h$ such that $\E_\U[ |h(x) -c(x)|] \leq \epsilon$. The algorithm runs in  $\tilde{O}(n/\epsilon^4 + 1/\eps^8)$ time and uses $\log{n} \cdot \tilde{O}(1/\epsilon^4)$ examples. \label{thm:pacintro}
\end{theorem}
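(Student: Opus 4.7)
The plan is to establish three structural properties of the Fourier spectrum of a coverage function and then run a Kushilevitz--Mansour style greedy search that exploits these properties to keep both sample and time complexity small.

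First, I would derive an explicit Fourier formula. Writing $c(S) = \sum_{u \in U} w(u)\, \1[u \in \bigcup_{i \in S} A_i]$ and using the identity $\1[u \in \bigcup_{i \in S} A_i] = 1 - \prod_{i \in T_u}\1[i \notin S]$ for $T_u = \{i : u \in A_i\}$, together with the $\{-1,1\}$ encoding under which $(1+x_i)/2 = \1[i \notin S]$, a direct expansion yields $\hat{c}(V) = -\sum_{u\,:\,T_u \supseteq V} w(u)\, 2^{-|T_u|}$ for every $V \neq \emptyset$, and $\hat{c}(\emptyset) = \E[c]$. When $c$ takes values in $[0,1]$, so that $\sum_u w(u) = c([n]) \leq 1$, this formula gives three properties: (i) bounded spectral $\ell_1$-norm, $\sum_V |\hat c(V)| \leq O(1)$; (ii) monotonicity under supersets, i.e., $V \subseteq V'$ implies $|\hat c(V)| \geq |\hat c(V')|$; and (iii) exponential decay in $|V|$, $|\hat c(V)| \leq 2^{-|V|}$.

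Set $\tau = \Theta(\eps^2)$ and let $L_\tau = \{V : |\hat c(V)| \geq \tau\}$. The three properties together give $|L_\tau| \leq O(1/\tau)$, the set $L_\tau$ is closed under taking subsets, every $V \in L_\tau$ satisfies $|V| \leq \log_2(1/\tau)$, and the set of ``relevant'' variables $R = \{i : |\hat c(\{i\})| \geq \tau\}$ has $|R| \leq O(1/\tau)$ and contains $\bigcup_{V \in L_\tau} V$. The algorithm is then: (a) draw $m_1 = O(\log n / \eps^4)$ uniform examples; (b) estimate $\hat c(\{i\})$ for each $i \in [n]$ by empirical averaging on these samples to identify $R$; (c) draw a fresh sample of size $m_2 = \tilde O(1/\eps^4)$ and perform a breadth-first search on the subset lattice of $R$ starting from the singletons in $R$, enqueueing $V \cup \{j\}$ for every $j \in R \setminus V$ whenever the empirical estimate $\tilde{\hat c}(V)$ exceeds $\tau/2$; (d) return $h = \sum_{V \in L} \tilde{\hat c}(V) \chi_V$ where $L$ is the collection of visited sets with large empirical estimates. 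Downward-closure of $L_\tau$ ensures every truly large coefficient is visited, since every subset of such a $V$ is also in $L_\tau$ and triggers further exploration.

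For accuracy, decompose $\|h-c\|_2^2 = \sum_{V \in L}(\tilde{\hat c}(V) - \hat c(V))^2 + \sum_{V \notin L} \hat c(V)^2$: the first sum is $|L| \cdot O(\tau^2) = O(\tau)$ since $|L| = O(1/\tau)$ and each estimate has error $O(\tau)$, and the second is at most $\tau \sum_V |\hat c(V)| = O(\tau)$ by property (i). Hence $\|h-c\|_1 \leq \|h-c\|_2 = O(\sqrt{\tau}) = O(\eps)$. Hoeffding's inequality with a union bound over the $n$ phase-(b) estimates requires $m_1 = O(\log n/\eps^4)$ samples, and over the $O(1/\eps^4)$ phase-(c) estimates (which involve only variables in $R$) requires $m_2 = \tilde O(1/\eps^4)$ samples, yielding $\log n \cdot \tilde O(1/\eps^4)$ examples overall. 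Phase (b) touches all $n$ coordinates of the $m_1$ samples in $\tilde O(n/\eps^4)$ time, and phase (c) performs $O(1/\eps^4)$ estimations each of cost $O(m_2 \log(1/\eps)) = \tilde O(1/\eps^4)$, contributing $\tilde O(1/\eps^8)$.

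The main obstacle will be arguing BFS correctness under noisy comparisons: the algorithm must visit every $V \in L_\tau$ yet avoid letting $|L|$ blow up beyond $O(1/\tau)$. Maintaining a gap between the extension threshold ($\tau/2$) and the true cutoff ($\tau$), backed by estimation accuracy $\tau/4$, gives unambiguous behavior on the extreme coefficients while keeping the ``medium'' band of coefficients with $|\hat c(V)| \in [\tau/4, \tau)$ small enough to be absorbed into the $O(\tau)$ error budget.
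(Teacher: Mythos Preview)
Your proposal is correct and follows essentially the same approach as the paper: you derive the same three Fourier properties (bounded spectral $\ell_1$-norm, anti-monotonicity $|\hat c(V)| \leq |\hat c(T)|$ for $T \subseteq V$, and the decay $|\hat c(V)| \leq 2^{-|V|}$), identify relevant variables via singleton coefficients, run the same Kushilevitz--Mansour style level-by-level search using downward-closure of the large-coefficient set, and bound the error via the same $\ell_2$ decomposition. The thresholding with a gap to handle noisy estimates and the resulting time and sample bounds match the paper's Algorithm~1 and Theorem~3.5 analysis almost line for line.
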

We note that for general submodular functions exponential dependence on $1/\eps$ is necessary information-theoretically \citep{FeldmanKV:13}. To obtain an algorithm with multiplicative guarantees we show that for every monotone submodular (and not just coverage) function multiplicative approximation can be easily reduced to additive approximation. The reduction decomposes $\on^n$ into $O(\log(1/\delta))$ subcubes where the target function is relatively large with high probability, specifically the value of $f$ on each subcube is $\Omega(1/\log(1/\delta))$ times the maximum value of $f$ on the subcube. The reduction is based on concentration results for submodular functions \citep{BLM00,Von10,BalcanHarvey:12full} and the fact that for any non-negative monotone submodular function $f$, $\E_\U[f] \geq \|f\|_\infty/2$ \citep{Feige:06}.
This reduction together with Thm.~\ref{thm:pacintro} yields our PMAC learning algorithm for coverage functions.
\begin{theorem} \label{thm:PMACintro}
There exists an algorithm which, given $\gamma,\delta > 0$ and access to random uniform examples of any coverage function $c$, with probability at least $2/3$, outputs a hypothesis $h$ such that $\pr_\U[ h(x) \leq c(x) \leq(1+\gamma) h(x)] \geq 1-\delta$. The algorithm runs in  $\tilde{O}(\frac{n}{\gamma^4 \delta^4} + \frac{1}{\gamma^8\delta^8})$  time and uses $\log{n} \cdot \tilde{O} (\frac{1}{\gamma^4 \delta^4})$ examples.
\end{theorem}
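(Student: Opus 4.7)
The plan is to reduce the PMAC statement (Theorem~3) to the additive PAC guarantee of Theorem~2 via a cube decomposition. The reduction works for any monotone non-negative submodular function, not just coverage: for each coverage function $c$, I will partition $\on^n$ (up to a $\delta$ fraction) into $O(\log(1/\delta))$ subcubes on which $c$ is comparable to its local maximum, and then learn $c$ additively on each piece.

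\textbf{Structural decomposition.} The technical heart is the following lemma: for every monotone submodular $f:\on^n\to\R^+$ and every $\delta\in(0,1)$ there exist pairwise-disjoint subcubes $B_1,\ldots,B_k$ (each obtained by fixing the values of a subset of coordinates) with $k=O(\log(1/\delta))$ and $\sum_i\U(B_i)\geq 1-\delta$, such that on each $B_i$ one has $f(x)\geq c_0 M_i/K$ on a $(1-\delta/(3k))$-fraction of $B_i$, where $M_i=\|f|_{B_i}\|_\infty$, $K=c_1\log(1/\delta)$, and $c_0,c_1>0$ are absolute constants. I build the decomposition iteratively: set $A_1=\on^n$, and at step $j$ let $M_j=\|f|_{A_j}\|_\infty$ and $r_j$ be the largest marginal $f(\{i\}\cup F^-_j)-f(F^-_j)$ over unfixed $i$, where $F^-_j$ is the set of coordinates already fixed to $-1$ inside $A_j$. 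If $r_j\geq M_j/K$ (case a), pick the maximizing coordinate $i^*$, emit $B_j=A_j\cap\{x_{i^*}=-1\}$ (on which every point has $f\geq r_j\geq M_{B_j}/K$, giving pointwise density $1/K$), and recurse on $A_{j+1}=A_j\cap\{x_{i^*}=+1\}$. If $r_j<M_j/K$ (case b), then all marginals on $A_j$ are small, so combining Feige's $\E_{\U}[f|_{A_j}]\geq M_j/2$ with the Vondrak/BLM self-bounding concentration gives $\pr_{A_j}[f<M_j/4]\leq\exp(-\Omega(K))\leq\delta/(3k)$; emit $A_j$ as a subcube of density $1/4$ and terminate. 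Each case~(a) iteration halves $\U(A_j)$, so after $T=\lceil\log(1/\delta)\rceil$ iterations the residual has measure $\leq\delta$ and is absorbed into the excluded mass, bounding $k\leq T+1=O(\log(1/\delta))$.

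\textbf{Learning, gluing, complexity.} For each $B_i$, $c|_{B_i}$ is a coverage function on the unfixed coordinates, and rejection-sampling uniform examples of $c$ yields uniform examples on $B_i$ with expected overhead $\leq 2^k=\poly(1/\delta)$. Estimate $\widehat M_i\in[M_i/2,2M_i]$ from a few samples via Feige's bound, then invoke Theorem~2 on $c|_{B_i}$ with additive target $\eps_i=\Theta(\gamma\delta\widehat M_i/(kK))$ and confidence $1/(3k)$, obtaining $h_i$; set $\widetilde h_i=\max(0,h_i-2\eps_i)$ and define the global hypothesis $h(x)=\widetilde h_i(x)$ on $B_i$ and $h(x)=0$ outside $\bigcup_i B_i$. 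A union bound shows that, except on a $\delta$-fraction of $\on^n$, a uniform $x$ lies in some $B_i$ where both (i) the density bound $c(x)\geq c_0 M_i/K$ and (ii) the Markov-upgraded pointwise bound $|h_i(x)-c(x)|\leq \eps_i$ hold; on such $x$, $\widetilde h_i\leq c\leq\widetilde h_i+3\eps_i\leq(1+\gamma)\widetilde h_i$ because $c(x)\geq c_0 M_i/K\geq 3\eps_i/\gamma$. Choosing $\eps=\Theta(\gamma\delta/\log^2(1/\delta))$, substituting into Theorem~2's $\tilde O(n/\eps^4+1/\eps^8)$ bound, summing across the $O(\log(1/\delta))$ subcubes and accounting for the $\poly\log(1/\delta)$ rejection factor gives runtime $\tilde O(n/(\gamma\delta)^4+1/(\gamma\delta)^8)$ and sample complexity $\log n\cdot\tilde O(1/(\gamma\delta)^4)$, as claimed.

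\textbf{Main obstacle.} The decomposition lemma is the delicate step: a single application of the submodular concentration inequality only yields constant-probability density around the maximum, so boosting to $1-\delta$ density forces the case~(a)/case~(b) dichotomy together with the measure-halving argument that caps the number of pieces at $O(\log(1/\delta))$. The key insight enabling the dichotomy is that when the largest marginal $r_j$ is comparable to $M_j/K$, fixing the corresponding coordinate to $-1$ produces a subcube with pointwise density $1/K$, while when all marginals are small the concentration bound itself is already strong enough to give $(1-\delta/k)$-density on the whole active cube.
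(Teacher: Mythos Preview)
Your approach is essentially the paper's. Both hinge on the same dichotomy (stated as a separate lemma in the paper): for a monotone nonnegative submodular $f$ with maximum $M$, either every marginal $f(e_i)$ is below $M/(16\ln(1/\delta))$, in which case $f$ has small Lipschitz constant and the self-bounding concentration together with $\E[f]\ge M/2$ give $\Pr[f<M/4]\le\delta$; or some marginal is large, and fixing that coordinate to $-1$ yields a half-cube with a pointwise lower bound of order $M/\log(1/\delta)$. Both then recurse on the $+1$ half to depth $O(\log(1/\delta))$, PAC-learn on each resulting subcube with additive target proportional to the local maximum, and convert additive to multiplicative via Markov exactly as you describe.

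The one place your writeup needs a fix is algorithmic: you define the case-(a)/case-(b) split via the marginal $r_j=f(\{i\}\cup F_j^-)-f(F_j^-)$, which is a single point value of $f$ and hence cannot be estimated from random uniform examples. The paper instead estimates the probability $p=\Pr[c\le\tilde M/4]$ directly from samples to decide which case holds, and when a split is called for, locates the coordinate $j$ by drawing $O(\log(n/\eta)/\delta)$ examples and selecting any $j$ for which every sampled $x$ with $x_j=-1$ has label above the threshold. This is an easy patch, but it is required to turn your existential decomposition into an actual algorithm. (Minor aside: your $F_j^-$ is always empty, since your recursion only ever fixes coordinates to $+1$; the notation is harmless but confusing.)
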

This is the first fully-polynomial (that is polynomial in $n$, $1/\eps$ and $1/\delta$) algorithm for PMAC learning a natural subclass of submodular functions even when the distribution is restricted to be uniform. As a point of comparison, the sketching result of  \citet{BDFKNR12} shows that for every coverage function $c$ and $\gamma >0$, there exists a coverage function of size $\poly(n,1/\gamma)$ size that approximates $c$ within factor $1+\gamma$  everywhere.  Unfortunately, it is unknown how to compute this strong approximation even in subexponential time and even with value queries\footnote{A value query on a point in a domain returns the value of the target function at the point. For Boolean functions it is usually referred to as a membership query.} and the distribution is restricted to be uniform. . Our result shows that if one relaxes the approximation to be over $1-\delta$ fraction of points then in time polynomial in $n, 1/\gamma$ and $1/\delta$ one can find a $(1+\gamma)$-approximating function using random examples alone.

The key property that we identify and exploit in designing the PAC algorithm is that the Fourier coefficients of coverage functions have a form of (anti-)monotonicity property.
\begin{lemma}
\label{lem:monotonicity}
For a coverage function $c: \on^n \rightarrow [0,1]$ and non-empty $T \subseteq V$, $|\hat{c}(T)| \geq |\hat{c}(V)|$.
\end{lemma}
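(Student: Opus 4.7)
The plan is to use the defining representation of a coverage function as a non-negative combination of monotone disjunctions and read off an explicit formula for $\hat c(T)$ on non-empty subsets. Write $c(x) = \sum_{i=1}^m w_i \cdot \ORR_{S_i}(x)$ with $w_i \geq 0$ and $S_i \subseteq [n]$, where, under the convention that $x_j = -1$ iff $j$ is in the input set, $\ORR_{S_i}$ is $0$ iff $x_j = 1$ for every $j \in S_i$. Thus
\[
\ORR_{S_i}(x) = 1 - \prod_{j \in S_i} \frac{1+x_j}{2}.
\]

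Next I would expand the product via $\prod_{j \in S_i}(1+x_j)/2 = 2^{-|S_i|} \sum_{U \subseteq S_i} \chi_U(x)$, where $\chi_U(x) = \prod_{j \in U} x_j$. This yields the Fourier expansion of a single disjunction with coefficients
\[
\hat{\ORR}_{S_i}(U) \;=\; \begin{cases} 1 - 2^{-|S_i|} & U = \emptyset,\\ -\,2^{-|S_i|} & \emptyset \neq U \subseteq S_i,\\ 0 & U \not\subseteq S_i. \end{cases}
\]
By linearity of the Fourier transform, for any non-empty $T$,
\[
\hat c(T) \;=\; -\sum_{i:\, T \subseteq S_i} w_i\, 2^{-|S_i|}.
\]

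The conclusion then follows immediately from the containment $T \subseteq V$: every index $i$ contributing to $\hat c(V)$ satisfies $V \subseteq S_i$ and so also $T \subseteq S_i$, hence contributes to $\hat c(T)$ as well. Since $w_i \geq 0$ and $2^{-|S_i|} > 0$, both sums are non-positive and the sum defining $-\hat c(T)$ dominates the sum defining $-\hat c(V)$ term-by-term, giving $|\hat c(T)| \geq |\hat c(V)|$.

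I do not anticipate a real obstacle here; the only things to watch are (i) keeping the $\{-1,1\}$ encoding consistent so that the sign of $\hat{\ORR}_S(U)$ on non-empty $U \subseteq S$ comes out the same for every disjunction (this uniform sign is what makes non-negativity of the $w_i$ propagate into a monotone bound on $|\hat c(\cdot)|$), and (ii) noting that the representation as a non-negative combination of monotone disjunctions used here is exactly the equivalent description of coverage functions recalled in the introduction, so no size or normalization assumption beyond $c \geq 0$ is actually needed for the inequality.
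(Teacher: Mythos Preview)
Your proposal is correct and essentially identical to the paper's own proof: both write $c$ as a non-negative combination of monotone disjunctions, expand $\ORR_S(x) = 1 - 2^{-|S|}\sum_{U \subseteq S} \chi_U(x)$ to get $\hat c(T) = -\sum_{S \supseteq T} \alpha_S\, 2^{-|S|}$ for non-empty $T$, and conclude by term-by-term domination using $T \subseteq V$. Your remark that the normalization $\sum w_i \leq 1$ is not needed for the monotonicity inequality is also correct (the paper only uses it for the additional bound $|\hat c(T)| \leq 2^{-|T|}$ in its restated version).
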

This lemma allows us to find all significant Fourier coefficients of a coverage function efficiently using a search procedure analogous to that in the Kushilevitz-Mansour algorithm \citep{KushilevitzMansour:93} (but without the need for value queries). %This contrasts strongly with Theorem \ref{th:dnf-reduction-intro} since disjoint DNF are not known to be learnable efficiently even under the uniform distribution (with the best known algorithm requiring $n^{O(\log n)}$ time).
An additional useful property we prove is that any coverage function can be approximated by a function of few variables (referred to as {\em junta}).
\begin{theorem}
\label{thm:junta-coverageintro}
For any coverage function $c:\on^n \rightarrow [0,1]$ and $\eps >0$, there exists a coverage function $c'$, that depends only on $O(1/\epsilon^2)$ variables and satisfies $\E_\U[|c(x)-c'(x)|] \leq \epsilon$.
\end{theorem}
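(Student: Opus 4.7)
The plan is to exploit the explicit Fourier expansion of a coverage function, together with Lemma~\ref{lem:monotonicity}, to identify a small junta $J$ for which the conditional expectation $\E_{x_{\bar J}}[c]$ is a good $\ell_2$-approximation of $c$, and then to modify this approximation into an actual coverage function without losing too much $\ell_1$-error.

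Writing $c = \sum_T \alpha_T \ORR_T$ with $\alpha_T \geq 0$ and using the expansion $\ORR_T(x) = 1 - \prod_{i \in T} \frac{1+x_i}{2}$, a direct calculation gives $|\hat c(V)| = \sum_{T \supseteq V} \alpha_T \cdot 2^{-|T|}$ for every $V \neq \emptyset$. Summing over $V \neq \emptyset$ yields $\sum_{V \neq \emptyset} |\hat c(V)| = \sum_T \alpha_T(1 - 2^{-|T|}) = \E_\U[c] \leq 1$, so $c$ has Fourier $\ell_1$-mass at most $1$. I would then set $J = \{i : |\hat c(\{i\})| \geq \eps^2/8\}$, padding with arbitrary variables if needed to guarantee $|J| \geq \log_2(2/\eps)$; the Fourier $\ell_1$-bound forces $|J| = O(1/\eps^2)$. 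By Lemma~\ref{lem:monotonicity}, any $V \not\subseteq J$ contains some $i \in \bar J$ with $|\hat c(V)| \leq |\hat c(\{i\})| < \eps^2/8$. Letting $c^*(x) = \E_{x_{\bar J}}[c(x)]$, Parseval gives $\|c - c^*\|_2^2 = \sum_{V \not\subseteq J} \hat c(V)^2 \leq (\eps^2/8) \sum_{V \neq \emptyset} |\hat c(V)| \leq \eps^2/8$, so $\|c - c^*\|_1 \leq \|c - c^*\|_2 \leq \eps/2$.

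The main obstacle is that $c^*$ is usually \emph{not} itself a coverage function: the value $c^*(\emptyset_J) = \E_{x_{\bar J}}[c(\mathbf{1}_J, x_{\bar J})]$ is typically positive, while every coverage function vanishes at $\emptyset$. Splitting each $T$ as $T = T_1 \cup T_2$ with $T_1 = T \cap J$ and $T_2 = T \cap \bar J$, and using $\E_{x_{\bar J}}[\prod_{i \in T_2} \frac{1+x_i}{2}] = 2^{-|T_2|}$, one obtains the decomposition
\[ c^*(x_J) \;=\; C \;+\; \tilde c(x_J), \]
where $C = \sum_T \alpha_T(1 - 2^{-|T_2|}) = c^*(\emptyset_J) \in [0,1]$ and $\tilde c(x_J) = \sum_{T' \subseteq J, T' \neq \emptyset} \beta_{T'} \ORR_{T'}(x_J)$ with $\beta_{T'} = \sum_{T : T \cap J = T'} \alpha_T \cdot 2^{-|T_2|} \geq 0$ is itself a coverage function on $J$.

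To absorb the constant I would define $c'(x_J) = \tilde c(x_J) + C \cdot \ORR_J(x_J)$, a non-negative linear combination of monotone disjunctions over $J$ and thus coverage on $J$. Since $\ORR_J(x_J) = 1$ whenever $S \cap J \neq \emptyset$, $c'$ coincides with $c^*$ off the subcube $\{S : S \cap J = \emptyset\}$ of mass $2^{-|J|}$ and equals $0$ on it; consequently $\|c' - c^*\|_1 = C \cdot 2^{-|J|} \leq 2^{-|J|} \leq \eps/2$ by the padding condition. The triangle inequality then gives $\|c - c'\|_1 \leq \eps$. The one step that is not routine is this constant-absorption trick: recognizing that the single atom $C \cdot \ORR_J$ approximates the constant $C$ everywhere except on an exponentially small subcube, which is exactly the price we can afford within the $\ell_1$ budget and what turns an $\ell_2$-optimal function approximation into a bona fide coverage function.
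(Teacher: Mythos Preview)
Your proof is correct, and the core idea---take $J$ to be the set of singletons with large Fourier weight and average out the remaining variables---is exactly what the paper does (there the averaged function is called $c_I$). The difference is that you then go to considerable trouble to fix up what you perceive as an obstacle: that $c^*=\E_{x_{\bar J}}[c]$ may be positive at the all-ones point and hence ``not a coverage function.'' The paper sidesteps this entirely. It observes that for every fixed assignment $v$ to the variables outside $I$, the restriction $c_v$ is still a non-negative combination of monotone disjunctions (each $\ORR_S$ either becomes $\ORR_{S\cap I}$ or the constant $1$), and therefore $c_I$, being a convex combination of the $c_v$'s, is itself a coverage function. The paper explicitly adopts the convention that the empty disjunction (the constant $1$) is allowed, noting it differs from $\ORR_{[n]}$ only at a single point; under that convention, $c^*$ is a coverage function as it stands.

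In fact your own decomposition $c^*=C+\tilde c$ is precisely this argument written out in coordinates: it shows $c^*$ is a non-negative combination of $\ORR_{T'}$'s plus a constant. So the ``main obstacle'' you identify is not an obstacle at all in the paper's setting, and the constant-absorption trick with $C\cdot\ORR_J$ and the padding of $J$ to size $\geq \log_2(2/\eps)$ are unnecessary. That said, your workaround is valid and has the minor virtue of producing a coverage function even under the strict definition that forbids the constant term; it costs you only a factor of two in the error budget and an $O(\log(1/\eps))$-variable padding, which is harmless.
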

By identifying the variables of an approximating junta we make the learning algorithm computationally more efficient and achieve logarithmic dependence of the number of random examples on $n$. This, in particular, implies {\em attribute efficiency} \citep{BlumLangley:97} of our algorithm. Our bound on junta size is tight since coverage functions include monotone linear functions which require a $\Omega(1/\eps^2)$-junta for $\eps$-approximation (\eg~\citep{FeldmanVondrak:13}). This clearly distinguishes coverage functions from disjunctions themselves which can always be approximated using a function of just $O(\log(1/\eps))$ variables. We note that in a subsequent work \citet{FeldmanVondrak:13} showed approximation by $O(\log(1/\eps)/\eps^2)$-juntas for all submodular functions using a more involved approach. They also show that this approximation leads to a $2^{\tilde{O}(1/(\delta\gamma)^2)} \cdot \poly(n)$ PMAC learning algorithm for all submodular functions.

Exploiting the representation of coverage functions as non-negative linear combinations of monotone disjunctions, we show that we can actually get a PAC learning algorithm that outputs a hypothesis that is guaranteed to be a coverage function. That is, the algorithm is \emph{proper}. The running time of this algorithm is polynomial in $n$ and, in addition, depends polynomially on the size of the target coverage function. %See App.~\ref{app:proper-pac} for details.
\onlyfull{
\begin{theorem}
\label{thm:properpacintro}
 There exists an algorithm, that for any $\epsilon > 0$, given random and uniform examples of any $c \in \Cv$, with probability at least $2/3$, outputs a \emph{coverage} function $h$ such that $\E_{\U}[ |h(x)-c(x)|] \leq \epsilon$. The algorithm runs in time $\tilde{O}(n) \cdot \poly(s/\epsilon)$ and uses $\log{(n)} \cdot \poly(s/\epsilon)$ random examples, where $s = \min\{\size(c), (1/\eps)^{\log{(1/\epsilon)}}\}$.
\end{theorem}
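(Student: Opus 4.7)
The plan is to combine the non-proper PAC learner of Theorem~\ref{thm:pacintro} with an $\ell_1$-regression over a finite family of short monotone disjunctions; since the regression is constrained to non-negative coefficients, its minimizer is automatically a coverage function and the learner is proper.

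By Theorem~\ref{thm:junta-coverageintro}, $c$ is $\eps/4$-approximated in $\ell_1$ by a coverage function $c'$ supported on a set $J\subseteq[n]$ with $|J|=k=O(1/\eps^2)$. We recover $J$ in $\tilde{O}(n)\cdot\poly(1/\eps)$ time by thresholding the singleton Fourier coefficients $|\hat c(\{i\})|$ estimated via Theorem~\ref{thm:pacintro}: by Lemma~\ref{lem:monotonicity}, every variable appearing in any non-negligible Fourier coefficient of $c$ already shows up at the singleton level. Write $c'=\sum_j w_j D_{T_j}$ with $T_j\subseteq J$ and $w_j\geq 0$; evaluating at the all-ones input gives $\sum_j w_j=c'(J)\leq 1$, and replacing each $D_{T_j}$ by its truncation to any $\ell:=\lceil\log(8/\eps)\rceil$ of its variables costs at most $2^{-\ell}$ per unit of weight, i.e.\ $\eps/8$ in $\ell_1$ in total. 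Thus $c$ is within $3\eps/8$ in $\ell_1$ of some non-negative combination of total weight at most $1$ of disjunctions from the class $\F$ of monotone disjunctions on $J$ of support size $\leq\ell$, and $|\F|\leq\binom{k}{\leq\ell}\leq k^\ell=(1/\eps)^{O(\log(1/\eps))}$, matching the $(1/\eps)^{\log(1/\eps)}$ side of the claim. In the regime $\size(c)<(1/\eps)^{\log(1/\eps)}$, applying the same truncation directly to a size-$\size(c)$ representation of $c$ with length $\log(8\size(c)/\eps)$ gives a relevant-variable set of size $O(\size(c)\log(\size(c)/\eps))$ and $|\F|\leq\poly(\size(c)/\eps)$.

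Draw $N=\tilde{O}(\log|\F|/\eps^2)$ random uniform examples and solve
\[
\min_{w\in\R_{\geq 0}^{|\F|},\;\sum_j w_j\leq 1}\;\frac{1}{N}\sum_{i=1}^N\Big|c(x^{(i)})-\sum_{D_j\in\F}w_j D_j(x^{(i)})\Big|
\]
as a linear program in $|\F|+N$ variables; this takes $\poly(|\F|,N)=\poly(s/\eps)$ time and its optimum is by construction a non-negative combination of monotone disjunctions. Rademacher-complexity bounds for convex hulls of $|\F|$ bounded functions give uniform convergence of the empirical $\ell_1$-loss to the true $\ell_1$-loss within $\eps/8$, so the LP output $h$ satisfies $\E_{\U}|h-c|<\eps$. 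Since $\size(c)$ is unknown, we iterate the entire procedure with $s^\ast=2,4,\ldots,(1/\eps)^{\log(1/\eps)}$, in each round re-identifying the corresponding relevant-variable set and halting once the $\ell_1$-loss of the LP output on fresh samples falls below $\eps$; the final successful round dominates the cost, yielding the claimed $\tilde{O}(n)\cdot\poly(s/\eps)$ running time. The main technical difficulty is the size-adaptive identification of the relevant variables from random examples alone in the small-$\size(c)$ regime, which is handled by another Fourier-threshold scan whose threshold is chosen using the explicit lower bound $2^{-|T_j|}\geq\eps/(8s^\ast)$ on the Fourier magnitudes of the truncated disjunctions.
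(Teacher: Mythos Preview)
Your approach for the regime where $s=(1/\eps)^{\log(1/\eps)}$ is correct and in fact simpler than the paper's: once you have the junta set $J$ of size $O(1/\eps^2)$, enumerating \emph{all} subsets of $J$ of size at most $\ell=O(\log(1/\eps))$ gives a basis of size $(1/\eps)^{O(\log(1/\eps))}$, and the constrained $\ell_1$-regression then does the job. The paper instead runs the iterative Kushilevitz--Mansour-style search (Algorithm~\ref{Proper-PAC learn-coverage}, second phase) to isolate only the index sets $T$ with $|\hat c(T)|\geq \eps^2/(54 s_\eps)$, and uses the key relation $|\hat c(T)|\geq 2^{-|T|}\alpha_T$ (Lem.~\ref{Fourier-finds-large}) together with Lem.~\ref{large-Fourier-enough} to argue that this family already supports an $\eps$-close coverage function.

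The gap is in the small-$\size(c)$ regime. Your claim that ``$|\F|\leq \poly(\size(c)/\eps)$'' does not follow from the construction you describe: with a relevant-variable set of size $O(s\log(s/\eps))$ and truncation length $\log(8s/\eps)$, enumerating all short subsets yields $|\F|\leq \bigl(s\log(s/\eps)\bigr)^{O(\log(s/\eps))}$, which is quasi-polynomial in $s/\eps$, not polynomial. The ``Fourier-threshold scan'' you invoke at the end is only for singleton coefficients and therefore only identifies relevant \emph{variables}; it does not identify the specific short \emph{disjunctions} you need. (Also, the inequality $2^{-|T_j|}\geq \eps/(8s^\ast)$ bounds $2^{-|T_j|}$, not a Fourier magnitude of $c$; relating it to $\hat c(T_j)$ via the truncated approximation $\tilde c$ loses an additive $\eps/8$, which swamps the bound once $s^\ast$ is moderately large.) What makes the $\poly(s/\eps)$ bound work in the paper is precisely the iterative level-by-level Fourier search with threshold $\Theta(\eps^2/s)$: monotonicity of $|\hat c(\cdot)|$ guarantees all ancestors of a large-coefficient set are themselves large, and the spectral $\ell_1$-norm bound forces the surviving family to have size $O(s/\eps^2)$. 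Your proposal is missing this step (or an alternative that achieves the same $\poly(s/\eps)$ basis size).
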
}
%Using our reduction from PMAC to PAC learning, an analogous proper learning result can also be obtained for multiplicative approximation (we postpone the details to a full version).

\subsubsection{Agnostic Learning on Product and Symmetric Distributions} We then consider learning of coverage functions over general product and symmetric distributions (that is those whose PDF is symmetric with respect to the $n$ variables). These are natural generalizations of the uniform distribution studied in a number of prior works. In our case the motivation comes from the application to differentially-private release of (monotone) $k$-conjunction counting queries referred to as \emph{$k$-way marginals} in this context. Releasing $k$-way marginals with average error corresponds to learning of coverage functions over the uniform distribution on points of Hamming weight $k$ which is a symmetric distribution (we describe the applications in more detail in the next subsection).

As usual with Fourier transform-based techniques, on general product distributions the running time of our PAC learning algorithm becomes polynomial in $(1/p)^{O(\log(1/\eps))}$, where $p$ is the smallest bias of a variable in the distribution. It also relies heavily on the independence of variables and therefore does not apply to general symmetric distributions. Therefore, we use a different approach to the problem which learns coverage functions by learning disjunctions in the agnostic learning model \citep{Haussler:92,KearnsSS:94}. This approach is based on a simple and known observation that if disjunctions can be approximated in $\ell_1$ distance by linear combinations of some basis functions then so are coverage functions. As a result, the learning algorithm for coverage functions also has agnostic guarantees relative to $\ell_1$-error.

\begin{theorem}\label{th:introsym}
There exists an algorithm which for any product or symmetric distribution $\D$ on $\on^n$, given $\eps > 0$ and access examples of a function $f:\on^n \rightarrow [0,1]$ on points sampled from $\D$, with probability at least $2/3$, outputs a hypothesis $h$ such that $\E_\D[ |h(x) - f(x)|] \leq \min_{c \in \Cv}\{ \E_\D[ |c(x) - f(x)|] \} + \eps$.
The algorithm runs in $n^{O(\log{(1/\epsilon)})}$ time.
\end{theorem}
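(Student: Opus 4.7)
My plan is to reduce agnostic learning of coverage functions to agnostic learning of monotone disjunctions over the same distribution, and then invoke the disjunction-learning results of \citet{FeldmanKothari:14symm}. The reduction rests on the observation that any $c \in \Cv$ can be written as $c = \sum_{u \in U} w(u)\, D_u$, where each $D_u$ is a monotone disjunction and, because the range of $c$ lies in $[0,1]$, $\sum_u w(u) = c([n]) \leq 1$. Hence any family of functions whose linear span $\ell_1$-approximates every monotone disjunction to error $\eta$ automatically $\ell_1$-approximates all of $\Cv$ to the same error $\eta$, by the weighted triangle inequality.

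Concretely, I will set $d = O(\log(1/\eps))$ and take $\B$ to be the basis of $n^{O(\log(1/\eps))}$ functions supplied by \citet{FeldmanKothari:14symm}, for which every monotone disjunction $D$ admits some $q_D \in \mathrm{span}(\B)$ with $\E_\D[|D(x) - q_D(x)|] \leq \eps/3$ under any product or symmetric distribution $\D$ on $\on^n$. Lifting via the representation above, the optimal coverage function $c^* := \arg\min_{c \in \Cv} \E_\D[|c(x) - f(x)|]$ yields $q^* := \sum_u w^*(u)\, q_{D_u^*} \in \mathrm{span}(\B)$ with $\E_\D[|q^*(x) - f(x)|] \leq \E_\D[|c^*(x) - f(x)|] + \eps/3$. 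I then draw $\poly(|\B|, 1/\eps)$ examples from $\D$ and solve the empirical $\ell_1$-regression linear program over $\B$ in the style of Kalai, Klivans, Mansour, and Servedio, outputting the resulting hypothesis clipped to $[0,1]$. Standard uniform-convergence arguments for $\ell_1$-regression then give, with probability at least $2/3$, a hypothesis $h$ satisfying
\[
\E_\D[|h(x) - f(x)|] \leq \min_{q \in \mathrm{span}(\B)} \E_\D[|q(x) - f(x)|] + \eps/3 \leq \min_{c \in \Cv} \E_\D[|c(x) - f(x)|] + \eps.
\]
The overall running time is dominated by the linear program and is $|\B|^{O(1)} = n^{O(\log(1/\eps))}$.

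The main obstacle is the structural input used in the second paragraph: for symmetric distributions the coordinates are correlated, so the classical truncated-inclusion-exclusion expansion that $\ell_1$-approximates a monotone disjunction under product distributions no longer suffices, and one must design a basis adapted to the symmetry of $\D$. This is precisely the content of \citet{FeldmanKothari:14symm}, which I intend to invoke as a black box; once that structural claim is in hand, the lifting from disjunctions to coverage functions and the subsequent $\ell_1$-regression step are essentially routine.
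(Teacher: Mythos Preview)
Your proposal is correct and follows essentially the same approach as the paper: you reduce approximation of coverage functions to approximation of monotone disjunctions via their representation as convex combinations of disjunctions (this is exactly the paper's Lemma~\ref{lem:disj2cov}), invoke the $\ell_1$-approximation basis for disjunctions over product and symmetric distributions from \citet{FeldmanKothari:14symm} (and \citet{BOW08} for the product case), and finish with $\ell_1$ linear regression as in Theorem~\ref{th:lae-lp}. The only cosmetic difference is that the paper spells out the symmetric-distribution basis explicitly as layer indicators times low-degree monomials, whereas you treat the basis as a black box.
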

For product distributions, this algorithm relies on the fact that disjunctions can be $\ell_1$-approximated within $\eps$ by degree $O(\log(1/\eps))$ polynomials \citep{BOW08}.
\onlycolt{
We give a different and simpler proof for this fact. One advantage of our proof is that a very similar argument works for symmetric distributions, although the basis functions are no longer monomials. In fact, we show that there exists a distribution $\D$ such that any polynomial that approximates the disjunction $x_1 \vee x_2 \vee \cdots  \vee x_n$ within an $\ell_1$ error of at most $1/3$ on $\D$ must be of degree $\Omega(\sqrt{n})$. To the best of our knowledge such approximation for disjunctions and separation from approximation by polynomials were not known before.
}
\onlyfull{
A simpler proof for this approximation appears in \citep{FeldmanKothari:14symm} where it is also shown that the same result holds for all symmetric distributions.
}

\onlyfull{
For the special case of product distributions that have their one dimensional marginal expectations bounded away from $0$ and $1$ by some constants, we show that we can in fact make our agnostic learning algorithm \emph{proper}, that is, the hypothesis returned by our algorithm is a coverage function.
In particular,
%, using Lemma \ref{lem:disj-approx-intro} together with a simple algorithm based on linear regression with $\ell_1$ error
we give a proper agnostic learning algorithm for coverage functions over the uniform distribution running in time $n^{O(\log{(1/\epsilon)})}$. It is not hard to show that this algorithm is essentially the best possible assuming hardness of learning sparse parities with noise.
}
\subsubsection{Applications to Differentially Private Data Release}
We now briefly overview the problem of differentially private data release and state our results. Formal definitions and details of our applications to privacy appear in Sec.~\ref{sec:privacy} and a more detailed background discussion can for example be found in \citep{TUV12}. The objective of a private data release algorithm is to release answers to all {\em counting queries} from a given class $\C$ with low error while protecting the privacy of participants in the data set. Specifically, we are given a data set $D$ which is a subset of a fixed domain $X$ (in our case $X = \on^n$). Given a query class $\C$ of Boolean functions on $\on^n$, the objective is to output a data structure $H$ that allows answering {\em counting queries} from $\C$ on $D$ with low error. A counting query for $c \in \C$ gives the fraction of elements in $D$ on which $c$ equals to $1$. The algorithm producing $H$ should be \emph{differentially private} \citep{DMNS06}. The efficiency of a private release algorithm for releasing a class of queries $\C$ with error $\alpha$ on a data set $D \subseteq X$ is measured by its running time (in the size of the data set, the dimension and the error parameter) and the minimum data set size required for achieving certain error. Informally speaking, a release algorithm is differentially private if adding an element of $X$ to (or removing an element of $X$ from) $D$ does not affect the probability that any specific $H$ will be output by the algorithm significantly. A natural and often useful way of private data release for a data set $D$ is to output another data set $\hat{D} \subseteq X$ (in a differentially private way) such that answers to counting queries based on $\hat{D}$ approximate answers based on $D$. Such release is referred to as data {\em sanitization} and the data set is referred to as a {\em synthetic data set}.

Releasing Boolean conjunction counting queries is likely the single best motivated and most well-studied problem in private data analysis \citep{BCDKMT07,UV10,CKKL12,HRS12,TUV12,BUV13,CTUW14,DNT13}. It is a part of the official statistics in the form of reported data in the US Census, Bureau of Labor statistics and the Internal Revenue Service.

Despite the relative simplicity of this class of functions, the best known algorithm for releasing all $k$-way marginals with a constant worst-case error runs in polynomial time for data sets of size at least $n^{\Omega(\sqrt{k})}$ \citep{TUV12}. Starting with the work of \citet{GHRU11}, researchers have also considered the private release problem with low \emph{average} error with respect to some distribution, most commonly uniform, on the class of queries \citep{CKKL12,HRS12,DNT13}. However, in most applications only relatively short marginals are of interest and therefore the average error relative to the uniform distribution can be completely uninformative in this case. As can be easily seen (\eg~\citep{GHRU11}), the function mapping a monotone conjunction $c$ to a counting query for $c$ on a data set $D$ can be written in terms of a convex combination of monotone disjunctions corresponding to points in $D$ which is a coverage function. In this translation the distribution on conjunctions becomes a distribution over points on which the coverage function is defined and the $\ell_1$ error in approximating the coverage function becomes the average error of the data release. Therefore using standard techniques, we adapt our learning algorithms to
this problem. Thm.~\ref{th:introsym} gives the following algorithm for release of $k$-way marginals. %We formulate it for the equivalent problem of releasing disjunctions for convenience.
\begin{theorem}\label{th:introsymprivacy}
Let $\C_k$ be the class of all monotone conjunctions of length $k \in [n]$. For every $\epsilon > 0$, there is an $\epsilon$-differentially private algorithm which for any data set $D \subseteq \on^n$ of size $n^{\Omega(\log{(1/\bar{\alpha})})} \cdot \log{(1/\delta)} /\epsilon$, with probability at least $1-\delta$ outputs a data structure $H$ that answers counting queries for $\C_k$ with respect to the uniform distribution on $\C_k$ with an average error of at most $\bar{\alpha}$. The algorithm runs in time $n^{O(\log{(1/\bar{\alpha})})} \cdot \log{(1/\delta)}/\epsilon$ and the size of $H$ is $n^{O(\log{(1/\bar{\alpha})})}$.
\end{theorem}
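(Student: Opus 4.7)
The plan is to reduce private release of $k$-way marginals to agnostic learning of coverage functions on a symmetric distribution and then invoke Thm.~\ref{th:introsym}. For $x \in \on^n$ set $U_x = \{i : x_i = 1\}$. For $T \subseteq [n]$ of size $k$, the counting query for the monotone conjunction on $T$ over a data set $D$ equals $F_D(T) = |\{x \in D : T \cap U_x = \emptyset\}|/|D|$, so
\alequn{c_D(T) \;:=\; 1 - F_D(T) \;=\; \frac{1}{|D|}\sum_{x \in D} \1[T \cap U_x \ne \emptyset]}
is a convex combination of monotone disjunctions, hence a coverage function in $\Cv$. Identifying $T$ with the weight-$k$ vector in $\on^n$, the uniform distribution on $\C_k$ pushes forward to the symmetric distribution $\D_k$ supported on weight-$k$ strings. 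Learning $c_D$ within $\ell_1$-error $\bar\alpha$ on $\D_k$ and releasing the hypothesis (shifted by $1$) is then equivalent to releasing $F_D$ with average error $\bar\alpha$ on $\C_k$.

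To obtain the examples for the learner privately, I would run Thm.~\ref{th:introsym} with excess error $\bar\alpha/4$ and confidence $1-\delta/2$, using sample complexity $m = n^{O(\log(1/\bar\alpha))}$. Draw $T_1,\dots,T_m$ i.i.d.\ from $\D_k$ independently of $D$, and release $y_j = c_D(T_j) + Z_j$ with $Z_j \sim \mathrm{Lap}(m/(|D|\epsilon))$. Since $c_D(T_j)$ has sensitivity $2/|D|$ in $D$, basic composition of the Laplace mechanism makes the release of $(y_1,\dots,y_m)$ $\epsilon$-differentially private; the $T_j$'s are data-independent, and the rest is post-processing.

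Feeding the pairs $(T_j,y_j)$ into the learner yields a hypothesis $h$, and I would output $H := 1-h$. A standard reduction from noisy to clean labels combined with the agnostic guarantee of Thm.~\ref{th:introsym} (using $\min_{c \in \Cv} R(c) \le R(c_D) = 0$) gives
\alequn{R(h) \;:=\; \E_{T \sim \D_k}[|h(T) - c_D(T)|] \;\le\; \bar\alpha/4 + O\bigl(m/(|D|\epsilon)\bigr),}
where the second term accounts for the $\frac{1}{m}\sum_j |Z_j|$ gap between the empirical $\ell_1$ risks under $y_j$ and under the clean value $c_D(T_j)$, which concentrates near its mean $m/(|D|\epsilon)$. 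Choosing $|D| \ge C\, m\log(1/\delta)/(\bar\alpha\epsilon) = n^{\Omega(\log(1/\bar\alpha))}\log(1/\delta)/\epsilon$ for a sufficiently large constant $C$ forces the noise term below $\bar\alpha/2$; a union bound with the learner's failure probability yields total error $\bar\alpha$ with probability at least $1-\delta$. The running time and the size of $H$ inherit the $n^{O(\log(1/\bar\alpha))}$ bound from Thm.~\ref{th:introsym}.

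The main obstacle I anticipate is the last step: Thm.~\ref{th:introsym} is stated for a deterministic $f : \on^n \to [0,1]$, so one must open the black box slightly to verify that its agnostic guarantee degrades by only $O(\E|Z_j|)$ when labels are perturbed by independent zero-mean Laplace noise. Since the algorithm underlying Thm.~\ref{th:introsym} is $\ell_1$-regression over a fixed basis of size $n^{O(\log(1/\bar\alpha))}$, this reduction is essentially immediate from its uniform-convergence guarantee, but it is the one place where a genuinely new piece of argument (as opposed to standard privacy composition, sensitivity calculation, and parameter balancing) is needed.
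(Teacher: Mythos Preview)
Your proposal is correct and follows essentially the same approach as the paper: reduce to learning the coverage function $c_D$ on the symmetric distribution $\Pi_k$ via Thm.~\ref{th:introsym}, obtain labeled examples by privately answering counting queries with Laplace noise, and observe that the $\ell_1$-regression underlying Thm.~\ref{th:introsym} degrades by at most $O(\tau)$ when labels are $\tau$-corrupted. The paper packages the privacy step via the black-box Proposition~\ref{prop:ghru} (which guarantees all $q$ noisy counting queries are simultaneously within tolerance $\tau$ once $|D|\geq q(\log q+\log(1/\delta))/(\epsilon\tau)$), thereby dispatching the ``main obstacle'' you flagged with exactly the one-line observation you anticipated.
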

Note that there is no dependence on $k$ in the bounds and it applies to any symmetric distribution. %We note that the bound on the size of the data set can be relatively easily improved to $k^{\Omega(\log{(1/\bar{\alpha})})} \cdot \log{(1/\delta)} /\epsilon$.
Without assumptions on the distribution, \citet{DNT13} give an algorithm that releases $k$-way marginals with average error $\bar{\alpha}$ given a data set of size at least $\tilde{\Omega}(n^{\lceil k/2 \rceil/2} \cdot 1/\bar{\alpha}^2)$ and runs in polynomial time in this size. (They also give a method to obtain the stronger worst-case error guarantees by using \emph{private boosting}.) %Our bounds are stronger than those known for worst-case error or worst case distributions whenever $k >> \log{(1/\eps)}$.

We then adapt our PAC learning algorithms for coverage functions to give two algorithms for privately releasing monotone conjunction counting queries over the uniform distribution. Our first algorithm uses Thm.~\ref{thm:pacintro} to obtain a differentially private algorithm for releasing monotone conjunction counting queries in time polynomial in $n$ (the data set dimension) and $1/\bar{\alpha}$.
\begin{theorem}
\label{thm:nosynth-intro}
Let $\C$ be the class of all monotone conjunctions. For every $\eps,\delta > 0$, there exists an $\epsilon$-differentially private algorithm which for any data set $D \subseteq \on^n$ of size $\tilde{\Omega}(n \log(1/\delta)/(\eps\bar{\alpha}^6))$, with probability at least $1-\delta$, outputs a data structure $H$ that answers counting queries for $\C$ with respect to the uniform distribution with an average error of at most $\bar{\alpha}$. The algorithm runs in time $\tilde{O}(n^2 \log(1/\delta)/(\eps\bar{\alpha}^{10}))$ and the size of $H$ is $\log n \cdot \tilde{O}(1/\bar{\alpha}^4)$.
\end{theorem}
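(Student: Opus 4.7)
The plan is to instantiate the PAC learning algorithm of Theorem~\ref{thm:pacintro} on the data set $D$ using the Laplace mechanism to answer each of the algorithm's internal queries privately.

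\textbf{Step 1 (Reduction to coverage learning).} For $D \subseteq \on^n$ and the monotone conjunction $c_S = \bigwedge_{i \in S} \1[x_i = 1]$, the identity $\1[\bigwedge_{i \in S} x_i = 1] = 1 - \bigvee_{i \in S} \1[x_i = 0]$ gives
\begin{equation*}
q_{c_S}(D) \;=\; 1 - f_D(y^S), \qquad f_D(y) \;=\; \frac{1}{|D|}\sum_{x \in D} \bigvee_{i \,:\, y_i = 1} \1[x_i = 0],
\end{equation*}
where $y^S \in \on^n$ is the indicator of $S$. Thus $f_D$ is a coverage function with universe $D$, uniform weights $1/|D|$, and covering sets $A_i = \{x \in D : x_i = 0\}$. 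Since $S \leftrightarrow y^S$ maps the uniform distribution on $\C$ to the uniform distribution $\U$ on $\on^n$, the average release error equals the $\ell_1$ error of our approximation to $f_D$ under $\U$.

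\textbf{Step 2 (Private SQ simulation of the learner).} The algorithm of Theorem~\ref{thm:pacintro} can be implemented so that every interaction with the target function is a statistical query: the junta-identification phase performs one SQ per coordinate, and the Fourier search/estimation within the $\tilde{O}(1/\bar{\alpha}^2)$-size junta performs another $\tilde{O}(1/\bar{\alpha}^4)$ SQs, for a total of $T = \tilde{O}(n/\bar{\alpha}^4)$ SQs. Each such SQ on $f_D$ is an empirical average over $D$ of a function bounded in $[-1,1]$ and therefore has global sensitivity $O(1/|D|)$. We answer each SQ by adding independent Laplace noise of scale $T/(|D|\epsilon)$; basic composition then makes the whole simulation $\epsilon$-differentially private.

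\textbf{Step 3 (Accuracy budget).} The analysis underlying Theorem~\ref{thm:pacintro} succeeds whenever each SQ is answered within tolerance $\tau = \tilde{\Theta}(\bar{\alpha}^2)$. A Laplace-tail union bound over the $T$ queries shows that
$|D| \;=\; \tilde{\Omega}\bigl(T\log(1/\delta)/(\tau \epsilon)\bigr) \;=\; \tilde{\Omega}\bigl(n \log(1/\delta)/(\epsilon\bar{\alpha}^6)\bigr)$
suffices so that, with probability at least $1-\delta$, every noisy answer lies within $\tau$ of its noise-free value, and the simulated learner returns a Fourier polynomial $h$ of sparsity $\tilde{O}(1/\bar{\alpha}^4)$ with $\E_\U[|h - f_D|] \le \bar{\alpha}$.

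\textbf{Step 4 (Output, query answering and runtime).} The released data structure $H$ is the sparse representation of $h$: each stored coefficient carries an $O(\log n)$-bit index together with its numerical value, for a total size $\log n \cdot \tilde{O}(1/\bar{\alpha}^4)$. A query $c_S$ is answered by outputting $1 - h(y^S)$, and by Step 1 this yields average error at most $\bar{\alpha}$. Each of the $T$ SQs on $f_D$ is evaluated in $O(n|D|)$ time by scanning $D$, so the total runtime is $O(Tn|D|) = \tilde{O}(n^2\log(1/\delta)/(\epsilon\bar{\alpha}^{10}))$, matching the theorem's claim.

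\textbf{Main obstacle.} The only delicate piece is Step 2: one must recast the algorithm of Theorem~\ref{thm:pacintro} — in particular its Kushilevitz--Mansour-style coefficient search and its junta-identification step — in the SQ model and verify that the recursive estimations still produce the intended significant Fourier coefficients when each empirical average is perturbed by Laplace noise of scale $T/(|D|\epsilon)$. Once the SQ accounting and the corresponding tolerance $\tau$ are pinned down, the privacy analysis is a direct application of the Laplace mechanism together with basic composition.
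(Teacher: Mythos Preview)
Your proposal is correct and follows essentially the same route as the paper: reduce releasing conjunction counts to $\ell_1$-learning the coverage function $c_D$, implement the Fourier-based PAC learner of Theorem~\ref{thm:pacintro} using counting/statistical queries (the paper makes explicit that $\widehat{c_D}(T)=\tfrac{1}{|D|}\sum_{z\in D}\widehat{\ORR}_{S_{-z}}(T)$ is itself a counting query), and privatize via the Laplace mechanism (the paper cites this as Proposition~\ref{prop:ghru}). One small bookkeeping slip: with your stated $T=\tilde{O}(n/\bar{\alpha}^4)$ queries and $O(n|D|)$ time per query, $O(Tn|D|)$ evaluates to $\tilde{O}(n^3\log(1/\delta)/(\eps\bar{\alpha}^{10}))$, not $n^2$; the paper obtains the stated bound by counting only $O(n+1/\bar{\alpha}^4)$ queries and noting each Fourier query can be answered in $O(|D|)$ time (after a one-time $O(n|D|)$ pass to record $|S_{-z}|$ for every $z\in D$).
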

The previous best algorithm for this problem runs is time $n^{O(\log{(1/\bar{\alpha})})}$ \citep{CKKL12}. In addition, using a general framework from \citep{HRS12}, one can reduce private release of monotone conjunction counting queries to PAC learning with value queries of linear thresholds of a polynomial number of conjunctions over a certain class of ``smooth" distributions. \citet{HRS12} show how to use their framework together with Jackson's algorithm for learning majorities of parities \citep{Jackson:97} to privately release parity counting queries. Using a similar argument one can also obtain a polynomial-time algorithm for privately releasing monotone conjunction counting queries. Our algorithm is substantially simpler and more efficient than the one obtained via the reduction in \citep{HRS12}.
%Previous approaches to this problem \cite{GHRU11,CKKL12,HRS12,TUV12} do not release a synthetic data set (although some general exponential time algorithms such as the Multiplicative Weights in \cite{HR10} do produce a synthetic data set).

We can also use our proper learning algorithm to obtain a differentially private sanitization for releasing marginals in time polynomial in $n$ and quasi-polynomial in $1/\bar{\alpha}$.
\onlycolt{ See Section \ref{sec:pdr} for more details.
}

\onlyfull{
\begin{theorem}
\label{thm:synth-intro}
Let $\C$ be the class of all monotone conjunctions. For every $\eps,\delta > 0$, there exists an $\epsilon$-differentially private algorithm which for any data set $D \subseteq \on^n$ of size $n \cdot \bar{\alpha}^{-\Omega(\log{(1/\bar{\alpha})})} \cdot \log(1/\delta)/\epsilon$, with probability at least $1-\delta$, releases a synthetic data set $\hat{D}$ that can answer counting queries for $\C$ with respect to the uniform distribution with average error of at most $\bar{\alpha}$. The algorithm runs in time $n^2 \cdot \bar{\alpha}^{-O(\log{(1/\bar{\alpha})})} \cdot \log(1/\delta)/\epsilon$.
\end{theorem}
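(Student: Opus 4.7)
The plan is to reduce private release of counting queries for monotone conjunctions to proper PAC learning of a specific coverage function associated with $D$, and then to privatize the sample oracle supplied to the learner. For $D \subseteq \on^n$ define $c_D : 2^{[n]} \to [0,1]$ by $c_D(S) = \frac{1}{|D|}|\{x \in D : \exists i \in S,\ x_i = 0\}|$. This is a coverage function of size at most $|D|$ (universe $D$ with uniform weights $1/|D|$ and sets $A_i = \{x \in D : x_i = 0\}$), and it satisfies $c_D(S) = 1 - q_S(D)$ where $q_S(D)$ is the counting query for the monotone conjunction on coordinates in $S$. Consequently the average error of any synthetic data set $\hat{D}$ under the uniform distribution on monotone conjunctions equals $\E_\U[|c_D - c_{\hat{D}}|]$, so it is enough to privately output $\hat{D}$ whose induced coverage function is $\bar{\alpha}$-close to $c_D$ in $\ell_1$ under $\U$.

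To produce such a $\hat{D}$ I would run the proper PAC learner of Thm.~\ref{thm:properpacintro} on simulated, privatized examples. Draw i.i.d.\ uniform $S_1,\ldots,S_T$ where $T = \log n \cdot \bar{\alpha}^{-O(\log(1/\bar{\alpha}))}$ is the learner's sample complexity at target error $\bar{\alpha}/2$ and size bound $s = (1/\bar{\alpha})^{\log(1/\bar{\alpha})}$, and release $\tilde y_i = c_D(S_i) + \eta_i$ with $\eta_i \sim \mathrm{Lap}(T/(\eps |D|))$. Each $c_D(S_i)$ has sensitivity $O(1/|D|)$ in a single row of $D$, so basic composition of the Laplace mechanism makes the joint release of all $T$ labels $\eps$-differentially private, and everything downstream is post-processing. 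For $|D| \geq n \cdot \bar{\alpha}^{-\Omega(\log(1/\bar{\alpha}))} \log(1/\delta)/\eps$ with a sufficiently large constant in the $\Omega$, a Laplace tail bound and a union bound over the $T$ draws force $\max_i |\eta_i| \leq \bar{\alpha}^2/(100 s)$ with probability at least $1-\delta/2$. The learner then returns a coverage function $h = \sum_{j \leq m} w_j \ORR_{T_j}$ with $m \leq s$ and $\E_\U[|h - c_D|] \leq 3\bar{\alpha}/4$.

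To turn $h$ into a synthetic data set, append a dummy term $(1 - \sum_j w_j) \ORR_\emptyset$ (which vanishes on every input) so that the weights sum to $1$, round each $w_j$ to the nearest multiple of $1/M$ with $M = \Theta(s/\bar{\alpha})$, and place $M w_j$ copies of the point $x^{(j)} \in \on^n$ defined by $x^{(j)}_i = 0 \iff i \in T_j$ into $\hat{D}$; then $c_{\hat{D}}$ is within $\bar{\alpha}/4$ of $h$ in $\ell_\infty$ and hence within $\bar{\alpha}$ of $c_D$ in $\ell_1$. The running time is dominated by the $T$ evaluations of $c_D$ (each costing $O(n|D|)$) plus one call to the learner, giving the claimed $n^2 \cdot \bar{\alpha}^{-O(\log(1/\bar{\alpha}))} \log(1/\delta)/\eps$ bound. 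The main obstacle is to verify that the proper learner of Thm.~\ref{thm:properpacintro} tolerates additive label noise of magnitude $\bar{\alpha}^2/(100 s)$; this is plausible because the learner first produces a short list of candidate disjunctions using the Fourier-monotonicity property of Lem.~\ref{lem:monotonicity} and then fits non-negative coefficients by empirical $\ell_1$ regression against the labels, both stages being Lipschitz in the observed labels, so per-sample noise of this magnitude inflates the returned $\ell_1$ error by at most $O(\bar{\alpha})$, as required.
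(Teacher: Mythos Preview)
Your high-level plan matches the paper's: run the proper PAC learner of Thm.~\ref{thm:properpacintro} privately on $c_D$, then discretize the returned non-negative combination of monotone disjunctions into a synthetic data set (the paper does the same rounding-and-copying construction you describe). The substantive difference is in how you privatize the learner.

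You treat the learner as a black box and feed it random examples whose labels $c_D(S_i)$ are perturbed by Laplace noise, then appeal to composition. The paper instead opens up Algorithm~\ref{Proper-PAC learn-coverage} and implements its two internal primitives directly as $\tau$-tolerant counting queries on $D$: each Fourier coefficient $\widehat{c_D}(T)$ is itself a single counting query (via $\widehat{c_D}(T) = \frac{1}{|D|}\sum_{z\in D}\widehat{\ORR}_{S_{-z}}(T)$, already established for Thm.~\ref{thm:nosynth}), and the $\ell_1$ regression step is implemented exactly as in Thm.~\ref{th:shortrelease}. Proposition~\ref{prop:ghru} then gives $\eps$-privacy and the stated data set size directly, with the tolerance $\tau=\bar\alpha^{O(\log(1/\bar\alpha))}$ driven by the precision the learner needs on its Fourier estimates. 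This white-box route sidesteps entirely the step you flag as ``the main obstacle'': you never have to argue that every stage of the learner is Lipschitz in per-example label noise, because each internal quantity the learner needs is computed (up to tolerance $\tau$) by its own query.

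Your black-box route is workable but the plausibility argument you give is not yet a proof. In particular, the second stage of Algorithm~\ref{Proper-PAC learn-coverage} estimates Fourier coefficients to precision $\Theta(\eps^2/s_\eps)$; with your noise budget $\bar\alpha^2/(100s)$ and $s\approx s_\eps$ the constants do not obviously line up, and you would need to redo the bookkeeping (and clip or otherwise handle labels that land outside $[0,1]$ after noise). None of this is fatal, but the paper's approach avoids the issue cleanly.
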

Note that our algorithm for privately releasing monotone conjunction queries with low-average error via a synthetic data set is polynomial time for any error $\bar{\alpha}$ that is $2^{-O(\sqrt{\log(n)})}$.
}

\subsection{Related Work}
\citet{BDFKNR12} study \emph{sketching} of coverage functions and prove that for any coverage function there exists a small (polynomial in the dimension and the inverse of the error parameter) approximate representation that multiplicatively approximates the function on all points. Their result implies an algorithm for learning coverage functions in the PMAC model \citep{BalcanHarvey:12full} that uses a polynomial number of examples but requires exponential time in the dimension $n$. \citet{CH12} study the problem of \emph{testing} coverage functions (under what they call the \emph{W-distance}) and show that the class of coverage functions of polynomial size can be reconstructed, that is, one can obtain in polynomial time, a representation of an unknown coverage function $c$ such that $\size(c)$ is bounded by some polynomial in $n$ (in general for a coverage function $c$, $\size(c)$ can be as high as $2^n$), that computes $c$ correctly at all points, using polynomially many value queries. Their reconstruction algorithm can be seen as an \emph{exact} learning algorithm with value queries for coverage functions of small size.

In a recent (and independent) work, \citet{BCY13} develop a subroutine for learning sums of monotone conjunctions that also relies on the monotonicity the Fourier coefficients (as in Lemma \ref{lem:monotonicity}). Their application is in a very different context  of learning DNF expressions from \emph{numerical pairwise} queries, which given two assignments from $\on^n$ to the variables, expects in reply, the number of terms of the target DNF satisfied by both assignments.

\onlyfull{
A general result of \citet{GHRU11} shows that releasing all counting queries from a concept class $\C$ using counting queries (when accessing the data set) requires as many counting queries as agnostically learning $\C$ using statistical queries. Using lower bounds on statistical query complexity of agnostic learning of conjunctions \citep{Feldman:12jcss} they derived a lower bound on counting query complexity for releasing all conjunction counting queries of certain length. This rules out a fully polynomial (in $k$, the data set size and the dimension $n$) algorithm to privately release short conjunction counting queries with low worst-case error.

Since our algorithms access the data set using counting queries, the lower bounds from \citep{GHRU11} apply to our setting. However the lower bound in \citep{GHRU11} is only significant when the length of conjunctions is at most logarithmic in $n$.  Building on the work of  \citet{DNRRV09}, \citet{UV10} showed that there exists a constant $\gamma$ such that there is no polynomial time algorithm  for releasing a synthetic data set that answers all conjunction counting queries with worst-case error of at most $\gamma$ under some mild cryptographic assumptions. %Note that although the lower bound in \citep{UV10} rules out a polynomial time algorithm for releasing private synthetic data sets that answer all conjunction counting queries with a constant worst-case error, it does not rule out the release with low average error.% relative on $1-\beta$ fraction of all conjunctions.
}

\section{Preliminaries}
\label{sec:prelims}
We use $\on^n$ to denote the $n$-dimensional Boolean hypercube with ``false" mapped to $1$ and ``true" mapped to $-1$.
%For a set $\T$ we use $x \sim \T$ to denote a uniformly random draw from $\T$.
Let $[n]$ denote the set $\{1,2,\ldots,n\}$. For $S\subseteq [n]$, we denote by $\ORR_S: \on^n \rightarrow \zo$, the monotone Boolean disjunction on variables with indices in $S$, that is, for any $x \in \on^n$, $\ORR_S(x) = 0 \Leftrightarrow  \forall i \in S \ \ x_i = 1$. A monotone Boolean disjunction is a simple example of a coverage function. To see this, consider a universe of size $1$, containing a single element say $u$, the associated weight, $w(u) =1$, and the sets $A_1, A_2, \ldots, A_n$ such that $A_i$ contains $u$ if and only if $i \in S$. In the following lemma we describe a natural and folklore characterization of coverage functions as non-negative linear combination of non-empty monotone disjunctions (\eg~\citep{GHRU11}). For completeness we include the proof in App.~\ref{app:proofs}.
\begin{lemma}
A function $c: \on^n \rightarrow \R^+$ is a coverage function on some universe $U$, if and only if there exist non-negative coefficients $\alpha_S$ for every $S \subseteq [n],  S \neq \emptyset$ such that $c(x) = \sum_{S \subseteq [n], S \neq \emptyset} \alpha_S \cdot \ORR_S(x)$, and at most $|U|$ of the coefficients $\alpha_S$ are non-zero.
\label{disj-rep}
\end{lemma}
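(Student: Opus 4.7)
The plan is to prove both implications by explicit construction, leveraging the correspondence between elements of the universe and subsets of $[n]$.

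For the forward direction, suppose $c$ is realized by a universe $U$, sets $A_1,\dots,A_n \subseteq U$, and weights $w : U \to \R^+$. For each $u \in U$ I would define $T_u := \{i \in [n] : u \in A_i\} \subseteq [n]$, the set of indices whose associated set contains $u$. The key observation is that $u \in \bigcup_{i \in S} A_i$ if and only if $T_u \cap S \neq \emptyset$, which under the paper's encoding ($x^S_i = -1 \Leftrightarrow i \in S$) is exactly the condition $\ORR_{T_u}(x^S) = 1$. Substituting into the definition of $c$, I get
\[
c(x^S) \;=\; \sum_{u \in U} w(u)\cdot \ORR_{T_u}(x^S).
\]
Elements with $T_u = \emptyset$ never contribute and can be discarded. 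Grouping the remaining elements by their $T_u$ and setting $\alpha_T := \sum_{u : T_u = T} w(u)$ yields the desired representation $c(x) = \sum_{T \neq \emptyset} \alpha_T \ORR_T(x)$ with $\alpha_T \geq 0$, and each element of $U$ contributes to at most one $\alpha_T$, so the number of non-zero coefficients is at most $|U|$.

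For the reverse direction, suppose $c(x) = \sum_{S \neq \emptyset} \alpha_S \ORR_S(x)$ with $\alpha_S \geq 0$ and at most $|U|$ non-zero coefficients. I would construct the universe by introducing one element $u_S$ for every non-empty $S$ with $\alpha_S > 0$, set $w(u_S) := \alpha_S$, and define $A_i := \{u_S : i \in S\}$ for each $i \in [n]$. Then for any $T \subseteq [n]$,
\[
\sum_{u \in \bigcup_{i \in T} A_i} w(u) \;=\; \sum_{S\,:\, S \cap T \neq \emptyset} \alpha_S \;=\; \sum_{S \neq \emptyset} \alpha_S \cdot \ORR_S(x^T) \;=\; c(x^T),
\]
since $S \cap T \neq \emptyset$ is exactly when $\ORR_S(x^T) = 1$. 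The constructed universe has size at most $|U|$ by assumption.

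There is no real obstacle; the only thing to be careful about is consistent handling of the encoding convention ($x_i = -1$ means $i$ is in the indicator set, and $\ORR_S$ evaluates to $1$ iff some coordinate of $S$ is $-1$) so that the identity $\ORR_S(x^T) = \1[S \cap T \neq \emptyset]$ is applied correctly in both directions. Everything else is routine bookkeeping.
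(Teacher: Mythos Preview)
Your proposal is correct and follows essentially the same construction as the paper's proof: both directions use the bijection between universe elements $u$ and the index sets $T_u=\{i:u\in A_i\}$ (the paper calls these $S_u$), with the reverse direction introducing one universe element per nonzero coefficient. Your version is slightly more explicit about grouping elements with the same $T_u$ and discarding those with $T_u=\emptyset$, but the argument is otherwise identical.
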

For simplicity and without loss of generality we scale coverage functions to the range $[0,1]$. Note that in this case, for $c = \sum_{S \subseteq [n], S \neq \emptyset} \alpha_S \cdot \ORR_S$ we have $\sum_{S \subseteq [n], S \neq \emptyset} \alpha_S = c((-1,\ldots,-1)) \leq 1$.
In the discussion below we always represent coverage functions as linear combination of monotone disjunctions with the sum of coefficients upper bounded by 1. For convenience, we also allow the empty disjunction (or constant 1) in the combination. Note that $\ORR_{[n]}$ differs from the constant 1 only on one point $(1,1,\ldots,1)$ and therefore this more general definition is essentially equivalent for the purposes of our discussion. Note that for every $S$, the  coefficient $\alpha_S$ is determined uniquely by the function since $\ORR_S$ is a monomial when viewed over $\zo^n$ with $0$ corresponding to ``true".
\subsection{Learning Models} Our learning algorithms are in several models based on the PAC model \citep{Val84}. In the PAC learning model the learner has access to random examples of an unknown function from a known class of functions and the goal is to output a hypothesis with low error. The PAC model was defined for Boolean functions with the probability of disagreement being used to measure the error. For our real-valued setting
we use $\ell_1$ error which generalizes the disagreement error.
\begin{definition}[PAC learning with $\ell_1$-error]
Let $\F$ be a class of real-valued functions on $\on^n$ and let $\D$ be a distribution on $\on^n$. An algorithm $\A$ PAC learns $\F$ on $\D$, if for every $\epsilon > 0$ and any target function $f \in \F$, given access to random  independent samples from $\D$ labeled by $f$, with probability at least $2/3$,  $\A$ returns a hypothesis $h$ such that $\E_{x \sim \D} [ |f (x) - h(x) | ] \leq  \epsilon$. $\A$ is said to be \emph{proper} if $h \in \F$.  $\A$ is said to be \emph{ efficient} if $h$ can be evaluated in polynomial time on any input and the running time of $\A$ is polynomial in $n$ and $1/\epsilon$.
\end{definition}
We also consider learning from random examples with multiplicative guarantees introduced by \citet{BalcanHarvey:12full} and referred to as PMAC learning. For a class of non-negative functions $\F$, a PMAC learner with approximation factor $\alpha \geq 1$ and error $\delta > 0$ is an algorithm which, with probability at least $2/3$, outputs a hypothesis $h$ that satisfies $\pr_{x \sim \D} [h (x) \leq f(x) \leq \alpha h(x) ] \geq  1-\delta $. We say that $h$ multiplicatively $(\alpha,\delta)$-approximates $f$ over $\D$ in this case.

We are primarily interested in the regime when the approximation ratio $\alpha$ is close to $1$ and hence use $1+\gamma$ instead. We say that the learner is {\em fully-polynomial} if it is polynomial in $n$,$1/\gamma$ and $1/\delta$.

\subsection{Fourier Analysis on the Boolean Cube}
When learning with respect to the uniform distribution we use several standard tools and ideas from Fourier analysis on the Boolean hypercube. For any functions $f,g: \on^n \rightarrow \R$, the inner product of $f$ and $g$ is defined as $\langle f, g \rangle = \E_{x \sim \U} [f(x) \cdot g(x)]$. The $\ell_1$ and $\ell_2$ norms of $f$ are defined by $\|f\|_1 =  \E_{x \sim \U} [|f(x)|]$ and $\|f\|_2^2 =  \E_{x \sim \U} [f(x)^2]$, respectively. Unless noted otherwise, in this context all expectations are with respect to $x$ chosen from the uniform distribution.

For $S \subseteq [n]$, the parity function $\chi_S:\on^n \rightarrow \on$ is defined as $ \chi_S(x) = \prod_{i \in S} x_i.$ Parities form an orthonormal basis for functions on $\on^n$ (for the inner product defined above). Thus, every function $f: \on^n \rightarrow \R$ can be written as a real linear combination of parities. The coefficients of the linear combination are referred to as the Fourier coefficients of $f$. For $f:\on^n \rightarrow \R$ and $S \subseteq [n]$, the Fourier coefficient $\hat{f}(S)$ is given by $\hat{f}(S) = \langle f, \chi_S \rangle = \E[f(x) \chi_S(x)].$%
%For any Fourier coefficient $\hat{f}(S)$, $S$ is called the \emph{index} of the coefficient.
The Fourier expansion of $f$ is given by $ f(x) = \sum_{S \subseteq [n]} \hat{f}(S) \chi_S(x).$
For any function $f$ on $\on^n$ its spectral $\ell_1$-norm is defined as $ \|\hat{f}\|_1 = \sum_{S \subseteq [n]} |\hat{f}(S)|.$

It is easy to estimate any Fourier coefficient of a function $f: \on^n \rightarrow [0,1]$, given access to an oracle that outputs the value of $f$ at a uniformly random point in the hypercube. Given any parameters $\epsilon, \delta > 0$, we choose a set $R \subseteq \on^n$ of size $\Theta(\log{\frac{1}{\delta}/\epsilon^2})$ drawn uniformly at random from $\{-1,1\}^n$ and estimate $\tilde{f}(S) = \frac{1}{|R|} \sum_{x \in R} [f(x) \cdot \chi_{S}(x)]$. Standard Chernoff bounds can then be used to show that with probability at least $1 - \delta$, $| \hat{f}(S) - \tilde{f}(S)| \leq \epsilon. $
%\begin{definition}
For any $\epsilon >  0$, a Boolean function $f$ is said to be $\epsilon$-concentrated on a set $\S \subseteq 2^{[n]}$ of indices,  if $$\E \left[\left(f(x) -  \sum_{S \in \S} \hat{f}(S) \chi_S(x) \right)^2\right] = \sum_{S \notin \S} \hat{f}(S)^2 \leq \epsilon. $$
%\end{definition}

The following simple observation (implicit in \citep{KushilevitzMansour:93}) can be used to obtain spectral concentration from bounded spectral $\ell_1$-norm for any function $f$. In addition, it shows that approximating each large Fourier coefficient to a sufficiently small additive error yields a sparse linear combination of parities that approximates $f$. For completeness we include a proof in App.~\ref{app:proofs}.
\begin{lemma}
Let $f: \on^n \rightarrow \R$ be any function with $\|f\|_2 \leq 1$. For any $\epsilon \in (0,1]$, let $L=  \|\hat{f}\|_1$ and $\T = \{ T \mid| \hat{f}(T)| \geq \frac{\epsilon}{2L} \}$. Then $f$ is $\epsilon/2$-concentrated on $\T$ and $|\T| \leq \frac{2L^2}{\epsilon}$. Further, let $\S \supseteq \T$ and for each $S \in \S$, let $\tilde{f}(S)$ be an estimate of $\hat{f}(S)$ such that
\begin{enumerate}
\item $\forall S \in \S$, $|\tilde{f}(S)| \geq \frac{\epsilon}{3L}$ and
\item $\forall S \in \S$, $|\tilde{f}(S) - \hat{f}(S)| \leq \frac{\epsilon}{6L}$.
\end{enumerate}
Then, $\E[(f (x)- \sum_{S \in \S} \tilde{f}(S) \cdot \chi_S(x))^2] \leq \epsilon$ and, in particular, $\|f - \sum_{S \in \S} \tilde{f}(S) \cdot \chi_S \|_1 \leq \sqrt{\epsilon}$. \label{spectralconc}
\end{lemma}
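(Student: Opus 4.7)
The plan is to prove the three claims in turn, each reducing to a short Fourier-analytic calculation that relies only on the threshold defining $\T$, the hypothesis $\|\hat{f}\|_1 = L$, and Parseval's identity.

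First, for the $\epsilon/2$-concentration on $\T$, I would use that every $T \notin \T$ satisfies $|\hat{f}(T)| < \epsilon/(2L)$, so one factor of $|\hat{f}(T)|$ in $\hat{f}(T)^2$ may be replaced by the threshold to obtain
$$\sum_{T \notin \T} \hat{f}(T)^2 \;\leq\; \frac{\epsilon}{2L}\sum_{T \notin \T} |\hat{f}(T)| \;\leq\; \frac{\epsilon}{2L}\cdot L \;=\; \frac{\epsilon}{2}.$$
This is the standard passage from a spectral $\ell_1$ bound to tail $\ell_2^2$-concentration. The cardinality bound $|\T| \leq 2L^2/\epsilon$ then comes for free: each $T \in \T$ contributes at least $\epsilon/(2L)$ to $L = \sum_{T} |\hat{f}(T)|$.

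Next, for the approximation statement, let $g(x) = \sum_{S \in \S} \tilde{f}(S)\,\chi_S(x)$. By Parseval,
$$\E\bigl[(f-g)^2\bigr] \;=\; \sum_{S \in \S}\bigl(\hat{f}(S) - \tilde{f}(S)\bigr)^2 \;+\; \sum_{T \notin \S} \hat{f}(T)^2.$$
Since $\S \supseteq \T$, the second sum is bounded by $\epsilon/2$ using the concentration argument above. For the first sum, hypothesis~2 shows each summand is at most $(\epsilon/(6L))^2$, and the crucial additional step is to control $|\S|$. Combining hypotheses~1 and~2 by the triangle inequality gives $|\hat{f}(S)| \geq |\tilde{f}(S)| - \epsilon/(6L) \geq \epsilon/(6L)$ for every $S \in \S$, and the same counting argument as for $|\T|$ then yields $|\S| \leq 6L^2/\epsilon$. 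Multiplying through, the first sum is at most $(6L^2/\epsilon)\cdot(\epsilon/(6L))^2 = \epsilon/6$, so $\E[(f-g)^2] \leq 2\epsilon/3 \leq \epsilon$. The $\ell_1$ bound is then immediate from Jensen's inequality: $\E[|f-g|] \leq \sqrt{\E[(f-g)^2]} \leq \sqrt{\epsilon}$.

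The only non-routine step, so to speak, is noticing that hypothesis~1 (the lower bound on the retained estimates) is precisely what keeps $|\S|$ from exceeding $O(L^2/\epsilon)$, so that the accumulated estimation error across all retained coefficients cannot dominate. Everything else is bookkeeping of constants.
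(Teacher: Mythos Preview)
Your proof is correct and follows essentially the same route as the paper's. The only cosmetic difference is in bounding $\sum_{S\in\S}(\hat f(S)-\tilde f(S))^2$: the paper uses the same observation $|\hat f(S)|\ge \epsilon/(6L)\ge |\hat f(S)-\tilde f(S)|$ to bound $\sum_{S\in\S}|\hat f(S)-\tilde f(S)|\le \sum_{S\in\S}|\hat f(S)|\le L$ directly, rather than first counting $|\S|$, but both arguments yield the same $\epsilon/6$ bound.
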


\section{Learning Coverage Functions on the Uniform Distribution}
Here we present our PAC and PMAC learning algorithms for $\Cv$ over the uniform distribution.
 \label{sec:paclearning}
\subsection{Structural Results} We start by proving several structural lemmas about the Fourier spectrum of coverage functions. First, we observe that the spectral $\ell_1$-norm of coverage functions is at most $2$.
 \begin{lemma}%[Spectral Norm Bound]
For a coverage function $c:\on^n \rightarrow [0,1]$, $\|\hat{c}\|_1 \leq 2$. \label{spectral-norm}
\end{lemma}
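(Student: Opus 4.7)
The plan is to reduce the bound for coverage functions to a bound for individual monotone disjunctions via the representation given in Lemma~\ref{disj-rep}. By that lemma, we may write $c = \sum_{S \subseteq [n]} \alpha_S \cdot \ORR_S$ with $\alpha_S \geq 0$ and $\sum_S \alpha_S \leq 1$ (the empty disjunction, i.e.\ the constant function $1$, is allowed). Since Fourier coefficients are linear in the function, the triangle inequality gives
\[
\|\hat{c}\|_1 \;\leq\; \sum_{S} \alpha_S \cdot \|\widehat{\ORR_S}\|_1.
\]
So it suffices to show that $\|\widehat{\ORR_S}\|_1 \leq 2$ for every $S$, as this would yield $\|\hat c\|_1 \leq 2 \sum_S \alpha_S \leq 2$.

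Next I would compute the Fourier expansion of a single monotone disjunction explicitly. Under the convention that ``false'' is $1$ and ``true'' is $-1$, we have $\ORR_S(x) = 0$ exactly when $x_i = 1$ for all $i \in S$, so
\[
\ORR_S(x) \;=\; 1 - \prod_{i \in S} \frac{1+x_i}{2} \;=\; 1 - 2^{-|S|} \sum_{T \subseteq S} \chi_T(x).
\]
Reading off the coefficients, $\widehat{\ORR_S}(\emptyset) = 1 - 2^{-|S|}$ and $\widehat{\ORR_S}(T) = -2^{-|S|}$ for each nonempty $T \subseteq S$, with all other coefficients being zero. Summing absolute values gives
\[
\|\widehat{\ORR_S}\|_1 \;=\; (1 - 2^{-|S|}) + (2^{|S|} - 1)\cdot 2^{-|S|} \;=\; 2 - 2^{1-|S|} \;\leq\; 2,
\]
which also handles the trivial case $S = \emptyset$ since $\|\hat{1}\|_1 = 1$.

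Combining the two steps yields the lemma. I do not anticipate a genuine obstacle here: the only things to be a little careful about are the encoding convention $\{-1,1\}$ vs.\ $\{0,1\}$ (which determines the sign pattern in the Fourier expansion of $\ORR_S$), and the use of the representation from Lemma~\ref{disj-rep} with the normalization $\sum_S \alpha_S \leq 1$ discussed right after its statement. Once these are in hand, the bound is immediate from the triangle inequality.
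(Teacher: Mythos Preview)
Your proof is correct and follows essentially the same approach as the paper: decompose $c$ via Lemma~\ref{disj-rep}, apply the triangle inequality to reduce to bounding $\|\widehat{\ORR_S}\|_1$, and then compute the Fourier expansion of $\ORR_S$ explicitly. The only minor difference is that you compute the exact value $\|\widehat{\ORR_S}\|_1 = 2 - 2^{1-|S|}$, whereas the paper simply bounds it by $1 + 2^{-|S|}\cdot 2^{|S|} = 2$.
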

\begin{proof}
From Lem.~\ref{disj-rep} we have that there exist non-negative coefficients $\alpha_S$ for every $S \subseteq [n]$ such that $c(x) = \sum_{S \subseteq [n]} \alpha_S \cdot \ORR_S(x)$.
By triangle inequality, we have: $\|\hat{c}\|_1 \leq \sum_{S \subseteq [n]} \alpha_S \cdot \|\widehat{\ORR_S}\|_1 \leq \max_{S \subseteq [n]} \|\widehat{\ORR_S}\|_1  \cdot  \sum_{S \subseteq [n]} \alpha_S \leq \max_{S \subseteq [n]} \|\widehat{\ORR_S}\|_1 .$ To complete the proof, we verify that $\forall S \subseteq [n]$, $\|\widehat{\ORR_S}\|_1 \leq 2$. For this note that $\ORR_S(x) = 1 - \frac{1}{2^{|S|}}\cdot \Pi_{i \in S} (1+x_i) = 1 - \frac{1}{2^{|S|}} \sum_{T\subseteq S} \chi_T(x)$ and thus $\|\widehat{\ORR_S}\|_1 \leq 1+ \frac{1}{2^{|S|}} 2^{|S|} = 2$.
\end{proof}
The small spectral $\ell_1$-norm guarantees that any coverage function has its Fourier spectrum {$\epsilon^2$-concentrated} on some set $\T$ of indices of size $O(\frac{1}{\epsilon^2})$ (Lem.~\ref{spectralconc}). This means that given an efficient algorithm to find a set $\S$ of indices such that $\S$ is of size $O(\frac{1}{\epsilon^2})$ and $\S \supseteq \T$ we obtain a way to PAC learn coverage functions to $\ell_1$-error of $\epsilon$. In general, given only random examples labeled by a function $f$ that is concentrated on a small set $\T$ of indices, it is not known how to efficiently find a small set $\S \supseteq  \T$, without additional information about $\T$ (such as all indices in $\T$ being of small cardinality). However, for coverage functions, we can utilize a simple monotonicity property of their Fourier coefficients to efficiently retrieve such a set $\S$ and obtain a PAC learning algorithm with running time that depends only polynomially on $1/\epsilon$.
\begin{lemma}[Lem.~\ref{lem:monotonicity} restated]
Let $c: \on^n \rightarrow [0,1]$ be a coverage function. For any non empty $T \subseteq V \subseteq [n]$, $ |\hat{c}(V)| \leq |\hat{c}(T)| \leq \frac{1}{2^{|T|}}$.
\label{monotonicity}
\end{lemma}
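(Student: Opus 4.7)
The plan is to compute the Fourier coefficients of $c$ explicitly via the disjunction decomposition given by Lem.~\ref{disj-rep}, and then read off both the monotonicity and the upper bound by inspecting which indices contribute to $\hat{c}(T)$.

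First I would recall from the proof of Lem.~\ref{spectral-norm} that for each $S \subseteq [n]$,
\[
\ORR_S(x) \;=\; 1 - \frac{1}{2^{|S|}}\sum_{R \subseteq S} \chi_R(x),
\]
so the Fourier coefficients of $\ORR_S$ are $\widehat{\ORR_S}(\emptyset) = 1 - 2^{-|S|}$, $\widehat{\ORR_S}(R) = -2^{-|S|}$ for every non-empty $R \subseteq S$, and $0$ for all other $R$. Applying Lem.~\ref{disj-rep}, write $c = \sum_{S} \alpha_S \ORR_S$ with $\alpha_S \geq 0$ and $\sum_S \alpha_S \leq 1$. By linearity of the Fourier transform, for any non-empty $T \subseteq [n]$,
\[
\hat{c}(T) \;=\; \sum_{S \subseteq [n]} \alpha_S \,\widehat{\ORR_S}(T) \;=\; -\sum_{S \supseteq T} \frac{\alpha_S}{2^{|S|}}.
\]

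This closed form makes both claims immediate. For the monotonicity, whenever $\emptyset \neq T \subseteq V$ we have $\{S : S \supseteq V\} \subseteq \{S : S \supseteq T\}$, and since every summand $\alpha_S / 2^{|S|}$ is non-negative,
\[
|\hat{c}(V)| \;=\; \sum_{S \supseteq V} \frac{\alpha_S}{2^{|S|}} \;\leq\; \sum_{S \supseteq T} \frac{\alpha_S}{2^{|S|}} \;=\; |\hat{c}(T)|.
\]
For the upper bound, whenever $S \supseteq T$ we have $|S| \geq |T|$, so
\[
|\hat{c}(T)| \;=\; \sum_{S \supseteq T} \frac{\alpha_S}{2^{|S|}} \;\leq\; \frac{1}{2^{|T|}} \sum_{S \supseteq T} \alpha_S \;\leq\; \frac{1}{2^{|T|}},
\]
using $\sum_S \alpha_S \leq 1$ in the last step.

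There is no real obstacle here: once the disjunction representation is in hand, the only thing to check is the Fourier expansion of a single monotone disjunction, which is a one-line calculation. The slightly subtle point is that the signs of all non-empty Fourier coefficients of $\ORR_S$ are the same (negative), so the non-negative coefficients $\alpha_S$ do not cause cancellations when summed; this is what makes the triangle-style argument tight and delivers exact monotonicity rather than just an up-to-constants bound.
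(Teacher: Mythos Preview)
Your proof is correct and follows essentially the same route as the paper: expand each $\ORR_S$ in the Fourier basis, use Lem.~\ref{disj-rep} and linearity to obtain $\hat{c}(T) = -\sum_{S \supseteq T} \alpha_S/2^{|S|}$, and then read off both the monotonicity (by inclusion of index sets) and the $2^{-|T|}$ bound (using $|S|\geq |T|$ and $\sum_S \alpha_S \leq 1$). The paper's argument is identical in structure and detail.
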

\begin{proof}
From Lem.~\ref{disj-rep} we have that there exist constants $\alpha_S \geq 0$ for every $S \subseteq [n]$ such that $\sum_{S \subseteq [n]} \alpha_S \leq 1$ and $c(x) = \sum_{S \subseteq [n]} \alpha_S \ORR_S(x)$ for every $x \in \on^n$. The Fourier transform of $c$ can now be obtained simply by observing, as before in Lem.~\ref{spectral-norm} that $\ORR_S(x) = 1 - \frac{1}{2^{|S|}} \sum_{T\subseteq S} \chi_T(x)$.
Thus for every $T \neq \emptyset$, $\hat{c}(T) = - \sum_{ S \supseteq T} \alpha_S \cdot (\frac{1}{2^{|S|}}).$ Notice that since all the coefficients $\alpha_S$ are non-negative, $\hat{c}(T)$ and $\hat{c}(V)$ are non-positive and
$|\hat{c}(T)| = \sum_{ S \supseteq T} \alpha_S \cdot (\frac{1}{2^{|S|}}) \geq \sum_{ S \supseteq V} \alpha_S \cdot (\frac{1}{2^{|S|}}) = |\hat{c}(V)|\ .$ For an upper bound on the magnitude $|\hat{c}(T)|$, we have:
$|\hat{c}(T) | = \sum_{S \supseteq T} \alpha_S \cdot \frac{1}{2^{|S|}} \leq \sum_{S \supseteq T} \alpha_S \cdot \frac{1}{2^{|T|}}  \leq (\sum_{S \subseteq [n]} \alpha_S) \cdot \frac{1}{2^{|T|} } \leq \frac{1}{2^{|T|}}.$
\end{proof}
We will now use Lemmas \ref{spectral-norm} and \ref{monotonicity} to show that for any coverage function $c$, there exists another coverage function $c'$ that depends on just $O(1/\epsilon^2)$ variables and $\ell_1$-approximates it within $\epsilon$. Using Lem.~\ref{spectral-norm}, we also obtain spectral concentration for $c$. We start with some notation: for any $x \in \on^n$ and a subset $J \subseteq [n]$ of variables, let $x_J \in \on^J$ denote the projection of $x$ on $J$. Given $y \in \on^J$ and $z \in \on^{\bar{J}}$, let $x = y \circ z$ denote the string in $\on^n$ such that $x_J = y$ and $ x_{\bar{J}}= z$ (where $\bar{J}$ denotes the set $[n]\setminus J$). We will need the following simple lemma that expresses the Fourier coefficients of the function $f_I$ which is obtained by averaging a function $f$ over all variables outside of $I$ (a proof can be found for example in \citep{KushilevitzMansour:93}).
\begin{lemma}
For $f: \on^n \rightarrow [0,1]$ and $I \subseteq [n]$, let $f_I(x) = \E_{y \sim \on^{\bar{I}}} [   f(x_{I} \circ y)  ]$. Then, $\hat{f_I}(S) = \hat{f}(S)$ for every $S \subseteq \I$ and $\hat{f_I}(T) = 0$ for every $T \nsubseteq \I$.
\label{Fourieravg}
\end{lemma}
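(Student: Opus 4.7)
The plan is to prove both claims by expanding the Fourier coefficient $\hat{f_I}(S) = \E_{x \sim \U_n}[f_I(x) \chi_S(x)]$ directly from the definition, splitting the random point $x$ into its $I$-part and $\bar{I}$-part, and exploiting the fact that $f_I(x)$ depends only on $x_I$. The key observation is that any parity $\chi_R$ factors as $\chi_{R \cap I}(x_I) \cdot \chi_{R \cap \bar{I}}(x_{\bar{I}})$, so the expectation over the uniform distribution on $\on^n$ splits as a product of two independent uniform expectations over $\on^I$ and $\on^{\bar{I}}$.

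For the first claim, fix $S \subseteq I$. Then $\chi_S(x) = \chi_S(x_I)$ depends only on $x_I$, so
\[
\hat{f_I}(S) = \E_{x_I \sim \on^I}\!\left[ f_I(x_I)\, \chi_S(x_I) \right] = \E_{x_I,\,y}\!\left[ f(x_I \circ y)\, \chi_S(x_I) \right] = \E_{x \sim \U_n}\!\left[ f(x)\, \chi_S(x) \right] = \hat{f}(S),
\]
where the second equality unfolds the definition of $f_I$ and exchanges the order of the (independent) expectations, and the third equality reassembles the uniform distribution on $\on^n$ as the product of the uniform distributions on $\on^I$ and $\on^{\bar{I}}$.

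For the second claim, take $T \nsubseteq I$ and pick some coordinate $j \in T \cap \bar{I}$, so in particular $T \cap \bar{I} \neq \emptyset$. Factoring $\chi_T(x) = \chi_{T \cap I}(x_I)\, \chi_{T \cap \bar{I}}(x_{\bar{I}})$ and using that $f_I(x)$ does not depend on $x_{\bar{I}}$, the expectation splits:
\[
\hat{f_I}(T) = \E_{x_I}\!\left[ f_I(x_I)\, \chi_{T \cap I}(x_I) \right] \cdot \E_{x_{\bar{I}}}\!\left[ \chi_{T \cap \bar{I}}(x_{\bar{I}}) \right] = 0,
\]
since the second factor is the expectation of a nontrivial parity under the uniform distribution on $\on^{\bar{I}}$, which is $0$.

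There is no real obstacle here; the only point requiring care is keeping the factorization of $\chi_R$ and the independence of the two halves of the uniform distribution straight, and noting that $T \not\subseteq I$ is precisely what guarantees $T \cap \bar{I} \neq \emptyset$ and hence the vanishing of the second expectation.
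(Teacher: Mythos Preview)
Your proof is correct and follows essentially the same approach as the paper's (commented-out) argument: split $x$ into its $I$- and $\bar I$-parts, use that $f_I$ depends only on $x_I$, factor $\chi_T$ accordingly, and for $T\nsubseteq I$ use that the expectation of a nontrivial parity over $\on^{\bar I}$ vanishes. The only minor point is a slight notational abuse in writing $f_I(x_I)$ (since $f_I$ is defined on $\on^n$), but the meaning is clear and the argument is sound.
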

\eat{
\begin{proof}
Observe that by the definition of $f_I$, for any $z \in \on^\I$ and any $y_1, y_2 \in \on^{\bar{\I}}$, $f_I(z \circ y_1) = f_I(z \circ y_2)$ and thus
\begin{equation}
\E_{y \in \on^{\bar{\I}}} f_I(z \circ y) = \E_{y \in \on^{\bar{\I}}} f(z \circ y). \label{st:ff}\end{equation}
Let $S \subseteq \I$. Then,

\begin{align*}
\hat{f}(S) &= \E_{x \sim \on^n} [ f(x) \cdot \chi_S(x)]  = \E_{z \sim \on^\I, \text{  } y \sim \on^{\bar{\I}}} [f(z \circ y) \chi_S(z \circ y)]\\
&= \E_{z \sim \on^\I}  [ \Pi_{i\in S} z_i \cdot \E_{y \sim \on^{\bar{\I}}} [f(z \circ y)] ]\\
\text{Using} & \text{ Equation \eqref{st:ff}}\\
&= \E_{z \sim \on^\I}  [ \Pi_{i\in S} z_i \cdot \E_{y \sim \on^{\bar{\I}}} [f_I(z \circ y)] ]
= \E_{z \sim \on^\I} \E_{y \sim \on^{\bar{\I}}}[   \Pi_{i \in S} z_i  \cdot f_I(z \circ y)       ]\\
&= \E_{x \sim \on^n} [\chi_S(x) \cdot f_I(x)] = \hat{f_I}(S).
\end{align*}

Now, let $T = V \cup W$ for $V \subseteq I$ and a non-empty $W \subseteq \bar{I}$.
\begin{align*}
\hat{f_I}(T) &= \E_{x \sim \on^n} [ f_I(x) \cdot \chi_T(x)]  = \E_{z \sim \on^\I, \text{  } y \sim \on^{\bar{\I}}} [f_I(z \circ y) \chi_T(z \circ y)]\\
&= \E_{z \sim \on^\I}  [ \Pi_{i\in V} z_i \cdot \E_{y \sim \on^{\bar{\I}}} [f_I(z \circ y) \cdot \Pi_{i \in W} y_i] ].
\end{align*}

But $f_I(z \circ y)$ does not depend on $y$ and $\E_{y \sim \on^{\bar{\I}}} [ \Pi_{i \in W} y_i] = 0$. Thus, $\hat{f_I}(T) = 0$.
\end{proof}
}
We now show that coverage functions can be approximated by functions of few variables.
\begin{theorem}[Thm.~\ref{thm:junta-coverageintro} restated]
Let $c:\on^n \rightarrow [0,1]$ be a coverage function and $\epsilon>0$. Let $\I = \{ i \in [n] \mid |\hat{c}(\{i\})| \geq \frac{\epsilon^2}{2}\}$. Let $c_I$ be defined as $c_I(x) = \E_{y \sim \on^{\bar{I}} }[ c(x_I \circ y)]$. Then $c_I$ is a coverage function that depends only on variables in $I$, $|I| \leq 4/\epsilon^2$, $\size(c_I) \leq \size(c)$ and $\|c-c_I\|_1 \leq \epsilon$. Further, let $\T = \{ T \subseteq [n] \mid |\hat{c}(T)| \geq \frac{\epsilon^2}{2}\}$. Then $\T \subseteq 2^{\I}$ and $c$ is $\epsilon^2$-concentrated on $\T$.
\label{junta-coverage}
\end{theorem}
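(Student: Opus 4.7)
The plan is to handle the four assertions separately, with three of them (the size bound on $I$, the inclusion $\T \subseteq 2^I$, and the $\eps^2$-concentration) following immediately from the Fourier-spectrum lemmas already proved, and the $\ell_1$ approximation following by Cauchy--Schwarz once $c_I$'s Fourier support is identified via Lem.~\ref{Fourieravg}. The only piece requiring a small direct calculation is the claim that $c_I$ is itself a coverage function of size at most $\size(c)$, which I would leave until the end.

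I would start with the spectral claims. For $|I| \leq 4/\eps^2$: each $i \in I$ contributes $|\hat c(\{i\})| \geq \eps^2/2$ to $\|\hat c\|_1$, so $(\eps^2/2)|I| \leq \|\hat c\|_1 \leq 2$ by Lem.~\ref{spectral-norm}. For $\T \subseteq 2^I$: by the anti-monotonicity of Fourier magnitudes (Lem.~\ref{monotonicity}), if $T \in \T$ and $i \in T$ then $|\hat c(\{i\})| \geq |\hat c(T)| \geq \eps^2/2$, so $i \in I$. The $\eps^2$-concentration then reduces to the one-line bound
\[
\sum_{T \notin \T} \hat c(T)^2 \;\leq\; \max_{T \notin \T}|\hat c(T)| \cdot \|\hat c\|_1 \;\leq\; (\eps^2/2) \cdot 2 \;=\; \eps^2.
\]
For the $\ell_1$-approximation, Lem.~\ref{Fourieravg} gives $\hat{c_I}(T) = \hat c(T)$ for $T \subseteq I$ and $0$ otherwise, and combined with $\T \subseteq 2^I$ this yields $\|c - c_I\|_2^2 = \sum_{T \not\subseteq I} \hat c(T)^2 \leq \sum_{T \notin \T}\hat c(T)^2 \leq \eps^2$, hence $\|c - c_I\|_1 \leq \|c-c_I\|_2 \leq \eps$ by Cauchy--Schwarz.

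The main bookkeeping step, and the one I expect to require the most care, is verifying that $c_I$ really is a coverage function of no larger size than $c$. Starting from the non-negative disjunction representation $c = \sum_S \alpha_S\, \ORR_S$ of Lem.~\ref{disj-rep}, I would split each $S$ as $S_I \cup S_{\bar I}$ with $S_I = S \cap I$, and compute, by factoring the probability that all coordinates indexed by $S$ equal $+1$, the identity
\[
\E_{y \sim \on^{\bar I}}[\ORR_S(x_I \circ y)] \;=\; (1 - 2^{-|S_{\bar I}|}) + 2^{-|S_{\bar I}|}\,\ORR_{S_I}(x_I).
\]
Regrouping contributions by $T := S_I$ exhibits $c_I$ as $\alpha'_\emptyset + \sum_{\emptyset \neq T \subseteq I} \alpha'_T\, \ORR_T$ with non-negative coefficients $\alpha'_T = \sum_{S : S_I = T} \alpha_S\, 2^{-|S_{\bar I}|}$ and non-negative constant term $\alpha'_\emptyset = \sum_S \alpha_S(1 - 2^{-|S_{\bar I}|})$. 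Since each non-zero $\alpha'_T$ is supported by at least one non-zero $\alpha_S$, we obtain $\size(c_I) \leq \size(c)$ under the paper's convention permitting a constant term in the disjunction representation.
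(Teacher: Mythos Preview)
Your proof is correct and follows essentially the same route as the paper's: the spectral claims ($|I|\le 4/\eps^2$, $\T\subseteq 2^I$, $\eps^2$-concentration) and the $\ell_1$-bound via Lem.~\ref{Fourieravg} and Cauchy--Schwarz are handled identically. The only genuine difference is in the verification that $c_I$ is a coverage function of no larger size: the paper argues abstractly that each restriction $c_v(z)=c(z\circ v)$ is a non-negative combination of monotone disjunctions and that $c_I$ is a convex combination of the $c_v$'s, whereas you compute $\E_y[\ORR_S(x_I\circ y)]$ explicitly and regroup. Your explicit calculation is a perfectly good (and arguably more informative) alternative; both arguments inherit the same minor bookkeeping about the constant term counting toward size under the paper's convention.
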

\begin{proof}
 Since $c$ is a coverage function, it can be written as a non-negative weighted sum of monotone disjunctions. Thus, for every $v \in \on^{\bar{I}}$ the function, $c_v:\on^n \rightarrow [0,1]$ defined as $c_v(z \circ y) = c(z \circ v)$ for every $y \in \on^{\bar{I}}$ is also a non-negative linear combination of monotone disjunctions, that is a coverage function.
By definition, for every $z \in \on^{\I}$ and $y \in \on^{\bar{I}}$, $c_I(z \circ y) =\frac{1}{2^{n-|I|}} \sum_{v \in \on^{\bar{I}}} c(z \circ v) =\frac{1}{2^{n-|I|}} \sum_{v \in \on^{\bar{I}}} c_v(z \circ y) .$ In other words, $c_I$ is a convex combination of $c_v$'s and therefore is a coverage function itself. Note that for every $S \subseteq I$ if the coefficient of $\ORR_S$ in $c_I$ is non-zero then there must exist $S' \subseteq \bar{I}$ for which the coefficient of $\ORR_{S \cup S'}$ in $c$ is non-zero. This implies that $\size(c_I) \leq \size(c)$. We will now establish that $c_I$ approximates $c$.
Using Lem.~\ref{Fourieravg}, $\hat{c}(S) = \hat{c}_I(S)$ for every $S \subseteq I$. Thus, $\|c - c_I\|_2^2 = \sum_{T \nsubseteq \I} \hat{c}(T)^2$. We first observe that $\T \subseteq 2^I$. To see this, consider any $T \nsubseteq \I$. Then, $\exists i \not\in \I$ such that $i \in T$ and therefore, by Lem.~\ref{monotonicity}, $|\hat{c}(T)| \leq |\hat{c}(\{i\}) | < \eps^2/2$. Thus $|\hat{c}(T)| \leq \eps^2/2$. By Lem.~\ref{spectralconc}, $c$ is $\eps^2$-concentrated on $\T$ and using Cauchy-Schwartz inequality, $\|c - c_I\|_1^2 \leq \|c - c_I\|_2^2 = \sum_{T \nsubseteq \I} \hat{c}(T)^2 \leq \sum_{T \not\in \T} \hat{c}(T)^2 \leq \eps^2.$
\end{proof}
\subsection{PAC Learning} \label{sec:imppaclearning}
We now describe our PAC learning algorithm for coverage functions. This algorithm is used for our application to private query release and also as a subroutine for our PMAC learning algorithm. Given the structural results above the algorithm itself is quite simple. Using random examples of the target coverage function, we compute all the singleton Fourier coefficients and isolate the set $\tilde{\I}$ of coordinates corresponding to large (estimated) singleton coefficients that includes $\I = \{ i \in [n] \mid |\hat{c}(\{i\})| \geq \frac{\epsilon^2}{4}\}$. Thm.~\ref{junta-coverage} guarantees that the target coverage function is concentrated on the large Fourier coefficients, the indices of which are subsets of $\tilde{I}$. We then find a collection $\S \subseteq 2^{\tilde{\I}}$ of indices that contains all $T \subseteq \tilde{\I}$ such that $|\hat{c}(T) | \geq \epsilon^2/4$. This can be done efficiently since by Lem.~\ref{monotonicity}, $|\hat{c}(T) | \geq\epsilon^2/4$ only if $|\hat{c}(V) | \geq\epsilon^2/4$ for all $V\subseteq T$, $V \neq \emptyset$. We can only estimate Fourier coefficients up to some additive error with high probability and therefore we keep all coefficients in the set $\S$ whose estimated magnitude is at least $\epsilon^2/6$. Once we have a set $\S$, on which the target function is $\epsilon^2$-concentrated, we use Lem.~\ref{spectralconc} to get our hypothesis.
We give the pseudocode of the algorithm below.
\begin{algorithm}
\caption{PAC Learning of Coverage Functions}
\begin{algorithmic}[1]
\STATE Set $\theta = \frac{\epsilon^2}{6}$.
\STATE Draw a random sample of size $m_1=O(\log{(n)}/\eps^4)$ and use it to
estimate $\hat{c}(\{i\})$ for all $i$.
\STATE Set $\tilde{\I} = \{ i \in [n] \mid |\tilde{c}(\{i\})| \geq \theta\}$.
\STATE $\S \leftarrow \{\emptyset\}$.
\STATE Draw random sample $R$ of size $m_2 = O(\log{(1/\eps)}/\eps^4)$.
\FOR{$t=1$ to $\log(2/\theta)$}
\FOR{each set $T \in \S$ of size $t-1$ and $i\in \tilde{\I}\setminus T$ }
\STATE Use $R$ to estimate the coefficient $\hat{c}({T \cup \{i\}})$.
\STATE If $|\tilde{c}({T \cup i}) | \geq \theta$ then $\S \leftarrow \S \cup \{T \cup \{i\}\}$
\ENDFOR
\ENDFOR
\RETURN $\sum_{S \in \S} \tilde{c}(S) \cdot \chi_S$.
\end{algorithmic}
\label{PAC learn-coverage}
\end{algorithm}

\begin{theorem}[Thm.~\ref{thm:pacintro} restated]
There exists an algorithm that PAC learns $\Cv$ in  $\tilde{O}(n/\epsilon^4 + 1/\eps^8)$ time and using $\log{n} \cdot \tilde{O}(1/\epsilon^4)$ examples.
\label{thm:paccov}
\end{theorem}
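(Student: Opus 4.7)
The plan is to combine the spectral concentration machinery of Lemma~\ref{spectralconc} with the Fourier-coefficient monotonicity of Lemma~\ref{monotonicity} to show that Algorithm~\ref{PAC learn-coverage} outputs a hypothesis $h = \sum_{S \in \S} \tilde{c}(S)\, \chi_S$ satisfying $\|c - h\|_1 \leq \eps$. Since $L := \|\hat c\|_1 \leq 2$ by Lemma~\ref{spectral-norm}, I would invoke Lemma~\ref{spectralconc} with error parameter $\eps^2$ (so the resulting $\ell_1$-guarantee is $\sqrt{\eps^2} = \eps$). This requires: (i) $\S$ contains $\T := \{T : |\hat c(T)| \geq \eps^2/(2L) = \eps^2/4\}$; (ii) every retained estimate satisfies $|\tilde c(S)| \geq \eps^2/(3L) = \eps^2/6$, which is exactly the threshold $\theta$ used in the algorithm; and (iii) each estimate lies within $\eps^2/(6L) = \eps^2/12$ of the true coefficient.

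For (iii) I would use Hoeffding's inequality on the bounded random variables $c(x)\chi_S(x)$: a random sample of size $m_1 = \tilde\Theta(\log(n)/\eps^4)$ yields accuracy $\eps^2/12$ simultaneously for all $n$ singleton coefficients, and $m_2 = \tilde\Theta(\log(1/\eps)/\eps^4)$ samples suffice for the $O(1/\eps^4)$ coefficients ever estimated in the main loop (after a union bound). For (i) I would induct on $|T|$ for $T \in \T$: by Lemma~\ref{monotonicity}, every non-empty $V \subsetneq T$ has $|\hat c(V)| \geq |\hat c(T)| \geq \eps^2/4$, hence $|\tilde c(V)| \geq \eps^2/4 - \eps^2/12 > \theta$ on the good event. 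In particular every singleton $\{i\} \subseteq T$ passes the test, so $T \subseteq \tilde{\I}$; the inductive hypothesis then gives $T \setminus \{i\} \in \S$, and at iteration $t = |T|$ the pair $(T \setminus \{i\}, i)$ adds $T$ to $\S$. The outer loop range $\log(2/\theta)$ is sufficient because Lemma~\ref{monotonicity} also forces $|T| \leq \log(4/\eps^2)$ for every $T \in \T$.

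For complexity, the singleton phase costs $O(n \cdot m_1) = \tilde O(n/\eps^4)$. Within the main loop, the bound $\|\hat c\|_1 \leq 2$ implies $|\tilde{\I}| \leq 2/\theta = O(1/\eps^2)$ and analogously $|\S| = O(1/\eps^2)$, since any retained set contributes at least $\theta/2$ of true mass to $\|\hat c\|_1$. The loop therefore examines at most $|\S|\cdot|\tilde{\I}| = O(1/\eps^4)$ candidate sets, each requiring $\tilde O(m_2) = \tilde O(1/\eps^4)$ time for Fourier estimation, for a total of $\tilde O(1/\eps^8)$ in the main loop and $\tilde O(n/\eps^4 + 1/\eps^8)$ overall. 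The sample total is $m_1 + m_2 = \log(n)\cdot\tilde O(1/\eps^4)$.

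The main obstacle is the careful calibration of three different slack parameters — the estimation error $\eps^2/12$, the retention threshold $\theta = \eps^2/6$, and the defining threshold $\eps^2/4$ of $\T$ — so that the monotonicity-based inductive climb never stalls (every non-empty subset of a truly significant $T$ must remain above $\theta$ after estimation noise) while the surviving estimates are still accurate enough that Lemma~\ref{spectralconc} gives $\ell_1$-error $\eps$. Both properties rest on the clean bound $L \leq 2$ from Lemma~\ref{spectral-norm}, which is precisely what allows the algorithm to discover significant coefficients from random examples alone, in contrast to a generic Kushilevitz--Mansour procedure that would require value queries.
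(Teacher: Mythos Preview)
Your proposal is correct and essentially identical to the paper's proof: same threshold $\theta = \eps^2/6$, same estimation tolerance $\eps^2/12 = \theta/2$, same inductive use of Lemma~\ref{monotonicity} to show every $T \in \T$ (and all its subsets) survives into $\S$, and the same $\|\hat c\|_1 \leq 2$-based counting to bound $|\tilde{\I}|$, $|\S|$, and the running time. The only place the paper is marginally more explicit is in handling the union bound for the second phase, where the sets being estimated are chosen adaptively; it fixes an a priori superset $\T' = \{T \cup \{i\} : |\hat c(T)| \geq \theta/2,\ i \in \tilde{\I}\}$ of size $O(1/\eps^4)$ and union-bounds over that, but this is a routine technicality and your sketch is otherwise on point.
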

\begin{proof}
Let $c$ be the target coverage function and let  $\T = \{ T \subseteq [n] \mid |\hat{c}(T)| \geq \frac{\epsilon^2}{4} \}$. By Lem.~\ref{spectralconc}, it is sufficient to find a set $\S \supseteq \T$ and estimates $\tilde{c}(S)$ for each $S \in \S$ such that:
\begin{enumerate}
\item $\forall S \in \S$  $|\tilde{c}(S)| \geq \frac{\epsilon^2}{6}$ and
\item $\forall S \in \S$, $|\tilde{c}(S) - \hat{c}(S)| \leq \frac{\epsilon^2}{12}$.
\end{enumerate}

Let $\theta = \eps^2/6$. In the first stage our algorithm finds a set $\tilde{I}$ of variables that contains $\I = \{ i \in [n] \mid |\hat{c}(\{i\})| \geq \frac{\epsilon^2}{4}\}$. We do this by estimating all the singleton Fourier coefficients, $\{ \hat{c}(\{i\}) \mid i \in [n] \}$ within $\theta/2$ with (overall) probability at least $5/6$ (as before we denote the estimate of $\hat{c}(S)$ by $\tilde{c}(S)$). We set $\tilde{\I} =\{  i \in [n] \cond \tilde{c}(\{i\})| \geq \theta\}$.
If all the estimates are within $\theta/2$ of the corresponding coefficients then for every $i \in \I$, $\tilde{c}(\{i\}) \geq \eps^2/4 - \theta/2 = \eps^2/6=\theta$. Therefore $i\in \tilde{\I}$ and hence $\I \subseteq \tilde{\I}$.

In the second phase, the algorithm finds a set $\S \subseteq 2^{\tilde{\I}}$ such that the set of all large Fourier coefficients $\T$ is included in $\S$. This is done iteratively starting with $\S = \{\emptyset\}$. In every iteration, for every set $T$ that was added in the previous iteration and every $i \in \tilde{\I}\setminus T$, it estimates $\hat{c}(T \cup \{i\})$ within $\theta/2$ (the success probability for estimates in this whole phase will be $5/6$). If $|\tilde{c}(T \cup \{i\})| \geq \theta$ then $T \cup \{i\}$ is added to $\S$. This iterative process runs until no sets are added in an iteration. At the end of the last iteration, the algorithm returns $\sum_{S \in \S} \tilde{c}(S) \chi_S$ as the hypothesis.

We first prove the correctness of the algorithm assuming that all the estimates are successful. Let $T \in \T$ be such that $|\hat{c}(T)| \geq \eps^2/4$. Then, by Thm.~\ref{junta-coverage}, $T \subseteq \I \subseteq \tilde{\I}$. In addition, by Lem.~\ref{monotonicity}, for all $V\subseteq T$, $V \neq \emptyset$,
$|\hat{c}(V) | \geq\epsilon^2/4$. This means that for all $V\subseteq T$, $V \neq \emptyset$ an estimate of $|\hat{c}(V)|$ within $\theta/2$ will be at least $\theta$. By induction on $t$ this implies that in iteration $t$, all subsets of $T$ of size $t$ will be added to $\S$ and $T$ will be added in iteration $|T|$. Hence the algorithm outputs a set $\S$ such that $\T \subseteq \S$. By definition, $\forall S \in \S$,  $|\tilde{c}(S)| \geq \theta = \frac{\epsilon^2}{6}$ and $\forall S \in \S$, $|\tilde{c}(S) - \hat{c}(S)| \leq \theta/2 = \frac{\epsilon^2}{12}$. By Lem.~\ref{spectralconc}, $\|c - \sum_{S \in \S} \tilde{c}(S) \chi_S\|_1 \leq \eps$.

We now analyze the running time and sample complexity of the algorithm. We make the following observations regarding the algorithm.
\begin{itemize}
	\item By Chernoff bounds, $O(\log {(n)}/\theta^2) = O(\log{(n)} /\epsilon^4)$ examples suffice to estimate all singleton coefficients within $\theta/2$ with probability at least $5/6$. To estimate a singleton coefficients of $c$, the algorithm needs to look at only one coordinate and the label of a random example. Thus all the singleton coefficients can be estimated in time $O(n \log{(n)} /\epsilon^4)$.
    \item For every $S$ such that $\hat{c}(S)$ was estimated within $\theta/2$ and $|\tilde{c}(S)| \geq \theta$, we have that $|\hat{c}(S)| \geq \theta/2 = \eps^2/12$. This implies that $|\tilde{\I}| \leq 2/(\theta/2) = 24/\eps^2$.
        This also implies that $|\S| \leq 4/\theta = 24/\eps^2$.
	\item By Lem.~\ref{monotonicity},  for any $T \subseteq [n]$, $|\hat{c}(T)| \leq \frac{1}{2^{|T|}}$. Thus, if $|\hat{c}(T)| \geq \theta/2$ then $|T| \leq  \log{(2/\theta)}$. This means that the number of iterations in the second phase is bounded by $\log{(2/\theta)}$ and for all $S \in \S$, $|S| \leq \log{(2/\theta)}$.
	\item In the second phase, the algorithm only estimates coefficients for subsets in $$\S' = \{ S \cup \{i\} \cond |\tilde{c}(S)| \geq \theta \mbox{ and } i \in \tilde{\I}\}. $$
Let $\T' = \{ T \cup \{i\} \cond |\hat{c}(T)|\geq \theta/2 \mbox{ and } i \in \tilde{\I}\}$. By Chernoff bounds, a random sample of size $O(\log{|\T'|}/\theta^2)=\tilde{O}(1/\eps^4)$ can be used to ensure that, with probability at least $5/6$, the estimates of all coefficients on subsets in $\T'$ are within $\theta/2$. When the estimates are successful we also know that $\S' \subseteq \T'$ and therefore all coefficients estimated by the algorithm in the second phase are also within $\theta/2$ of true values with probability $\geq 5/6$.
Overall in the second phase the algorithm estimates $|\S'| \leq |\S| \cdot |\tilde{\I}| = O(1/\eps^4)$  coefficients. To estimate any single of those coefficients, the algorithm needs to examine only $\log{(2/\theta)} = O(\log{(1/\epsilon)})$ coordinates and the label of an example. Thus, the estimation of each Fourier coefficient takes $\tilde{O}(1/\epsilon^4)$ time and $\tilde{O}(1/\epsilon^8)$ time is sufficient to estimate all the coefficients.
\end{itemize}
Thus, in total the algorithm runs in $\tilde{O}(n/\epsilon^4 + 1/\eps^8)$ time, uses $\log n \cdot \tilde{O}(1/\eps^4)$ random examples and succeeds with probability at least $2/3$.
\end{proof}
%It can also be easily seen from our analysis that the necessary estimates of Fourier coefficients can be obtained using statistical queries of tolerance $\theta/2 = \eps^2/12$.
%\begin{corollary}
%\label{cor:coverage-sq}
%There exists an algorithm that PAC learns $\Cv$ using $O(n + 1/\eps^4)$ statistical queries of tolerance $\Omega(\eps^2)$ and running in time $O(n + 1/\eps^4)$.
%\end{corollary}

%Further details and formal analysis of the PAC learning algorithm for $\Cv$ are given in App.~\ref{app:pac-learn}.
\subsection{PMAC Learning} We now describe our PMAC learning algorithm that is based on a reduction from multiplicative to additive approximation. First we note that if we knew that the values of the target coverage function $c$ are lower bounded by some $m > 0$ then we could obtain multiplicative $(1+\gamma, \delta)$-approximation using a hypothesis $h$ with $\ell_1$ error of $\gamma \delta m/2$. To see this note that, by Markov's inequality, $\E[|h(x) - c(x)|] \leq \gamma \delta m/2$ implies that $\pr[|h(x) - c(x)| > \gamma m/2] \leq \delta$. Let $h'(x) = \max\{m, h(x) - \gamma m /2\}$. Then \alequ{1-\delta & \leq \pr[|h(x) - c(x)| \leq \gamma m/2] = \pr[h(x)-\gamma m/2 \leq c(x) \leq h(x)+\gamma m/2] \nonumber\\&\leq \pr[h'(x) \leq c(x) \leq h'(x)+\gamma m]  \leq \pr[h'(x) \leq c(x) \leq (1+\gamma) h'(x)] \label{eq:add-to-mult}
}
Now, we might not have such a lower bound on the value of $c$. To make this idea work for all coverage functions, we show that any monotone submodular function can be decomposed into regions where it is relatively large (compared to the maximum in that region) with high probability. The decomposition is based on the following lemma: given a monotone submodular function $f$ with maximum value $M$, either $\pr[f(x) \geq M/4] \geq 1-\delta/2$ or there is an index $i\in [n]$ such that $f(x) \geq \frac{M}{16 \ln{(2/\delta)}}$ for every $x$ satisfying $x_i = -1$. In the first case we can obtain multiplicative approximation from additive approximation using a slight refinement of our observation above (since the lower bound on $f$ only holds with probability $1-\delta/2$). In the second case we can reduce the problem to additive approximation on the half of the domain where $x_i = -1$. For the other half we use the same argument recursively. After $\lceil\log{(2/\delta)})\rceil$ levels of recursion at most $\delta/2$ fraction of the points will remain where we have no approximation. Those are included in the probability of error.
We will need the following concentration inequality for 1-Lipschitz (with respect to the Hamming distance) submodular functions \citep{BLM00,Von10,BalcanHarvey:12full}.
\begin{theorem}[\citealp{Von10}]
For a non-negative, monotone, 1-Lipschitz submodular function $f$ and $0 \leq \beta < 1$, $\pr_\U[ f(x) \leq (1-\beta) \E_\U[f(x)]] \leq e^{-\beta^2\E[f]/2}.$ \label{thm:concsubmodular}
\end{theorem}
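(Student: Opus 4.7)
The plan is to derive this one-sided concentration bound via the \emph{self-bounding function} method of Boucheron--Lugosi--Massart, which yields a lower-tail exponent linear in $\E_\U[f]$---substantially sharper than the $\E[f]^2/n$ rate one would obtain from a naive application of Azuma--Hoeffding on the Doob martingale. The argument has two logical steps: first I would verify that every non-negative, monotone, $1$-Lipschitz submodular function is \emph{self-bounding} in the sense of BLM, and then invoke the standard lower-tail concentration inequality for self-bounding functions, specialized to $t = \beta\E_\U[f]$.

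For the first step, identify $f:\on^n\to\RR_{\geq 0}$ with a set function via $S(x)=\{i\in[n] : x_i=-1\}$, and for each coordinate $i$ define $f_i(x)=f(S(x)\setminus\{i\})$, which by construction does not depend on $x_i$. Monotonicity together with the $1$-Lipschitz hypothesis immediately gives $0\le f(x)-f_i(x)\le 1$. The crucial aggregate inequality $\sum_{i=1}^n(f(x)-f_i(x))\le f(x)$ follows from a telescoping argument: fix any ordering $i_1,\dots,i_k$ of the elements of $S(x)$, and apply submodularity (the diminishing-marginal-returns inequality with the nested chain $\{i_1,\dots,i_{j-1}\}\subseteq S(x)\setminus\{i_j\}$) to get
\[
f(S(x))-f(S(x)\setminus\{i_j\}) \le f(\{i_1,\dots,i_j\})-f(\{i_1,\dots,i_{j-1}\}).
\]
Summing over $j$ telescopes the right-hand side to $f(S(x))-f(\emptyset)\le f(x)$ using $f(\emptyset)\ge 0$, while the left-hand side equals $\sum_{i\in S(x)}(f(x)-f_i(x))$. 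Since coordinates $i\notin S(x)$ contribute $0$ to $f(x)-f_i(x)$, the bound extends to the sum over all $i\in[n]$.

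For the second step I would appeal to the classical BLM self-bounding lower tail: for any non-negative self-bounding function $f$ on a product probability space and any $t\ge 0$,
\[
\pr_\U[f(x)\le\E_\U[f]-t] \le \exp\!\bigl(-t^2/(2\E_\U[f])\bigr).
\]
This is proved by the entropy method---a modified log-Sobolev inequality on the Boolean cube combined with the self-bounding condition controls the negative-exponential moment $\E[e^{-\lambda f}]$ for $\lambda\ge 0$, and a standard Chernoff argument then yields the stated tail. Substituting $t=\beta\E_\U[f]$ produces exactly $\exp(-\beta^2\E_\U[f]/2)$.

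The main obstacle lies in the first step: monotonicity and $1$-Lipschitzness only supply the pointwise bound on each $f(x)-f_i(x)$, and it is submodularity---via the telescoping rearrangement above---that is responsible for the aggregate bound $\sum_i(f(x)-f_i(x))\le f(x)$. Once this structural fact has been isolated, the concentration inequality itself is a black-box invocation and the specialization to $t=\beta\E_\U[f]$ is immediate.
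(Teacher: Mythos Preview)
The paper does not supply a proof of this theorem; it is quoted as a black-box result from \citet{Von10} and used downstream in Lemma~\ref{lem:large-labels}. Your proposal is correct and is essentially the argument in the cited source: Vondr\'ak's proof proceeds exactly by verifying that a non-negative monotone $1$-Lipschitz submodular function is self-bounding (via the same diminishing-returns telescoping you wrote out) and then invoking the Boucheron--Lugosi--Massart lower-tail inequality for self-bounding functions. So there is nothing to compare against in the present paper, and your write-up faithfully reconstructs the intended proof.
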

Another property of non-negative monotone submodular functions that we need is that their expectation is at least half the maximum value \citep{Feige:06}. For the special case of coverage functions this lemma follows simply from the fact that the expectation of any disjunction is at least $1/2$.
\begin{lemma}[\citealp{Feige:06}]
\label{lem:expect}
For $f$, a non-negative monotone submodular function, $\E_\U[f] \geq \|f\|_\infty/2$.
\end{lemma}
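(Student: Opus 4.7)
The plan is to recognize that the statement, as credited to \citet{Feige:06}, holds for all non-negative monotone submodular functions, but to give a self-contained direct proof for the coverage function case (which is all that is actually needed for the PMAC application targeted in this section). The coverage case exploits the disjunction decomposition of Lemma~\ref{disj-rep}, while the general submodular case would be handled by Feige's symmetrization argument: comparing $f(x)$ with $f(\bar x)$ and using submodularity plus monotonicity to conclude $f(x)+f(\bar x)\geq f(\mathbf{1})$ after appropriate averaging, which upon taking expectations yields $2\E_\U[f]\geq \|f\|_\infty$.

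For the coverage case I would proceed as follows. Let $c = \sum_{S \neq \emptyset} \alpha_S \ORR_S$ with $\alpha_S \geq 0$, as guaranteed by Lemma~\ref{disj-rep}. Under the convention that ``true'' corresponds to $-1$, each $\ORR_S$ is pointwise maximized by $x^* = (-1,\ldots,-1)$ at value $1$. Since the $\alpha_S$ are non-negative, the same point maximizes $c$, giving $\|c\|_\infty = c(x^*) = \sum_{S \neq \emptyset} \alpha_S$. For each non-empty $S$, $\ORR_S(x) = 1$ unless $x_i = 1$ for every $i \in S$, an event of probability $2^{-|S|}$ under the uniform measure, hence $\E_\U[\ORR_S] = 1 - 2^{-|S|} \geq 1/2$. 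By linearity of expectation,
\[
\E_\U[c] \;=\; \sum_{S \neq \emptyset} \alpha_S\,\bigl(1 - 2^{-|S|}\bigr) \;\geq\; \frac{1}{2}\sum_{S \neq \emptyset} \alpha_S \;=\; \frac{1}{2}\|c\|_\infty.
\]

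The main conceptual obstacle in extending this three-line argument to arbitrary non-negative monotone submodular $f$ is the absence of a non-negative ``basis'' decomposition of $f$ into atomic pieces, each of whose uniform expectation is at least half of its supremum. Feige's proof sidesteps this via a global symmetrization trick rather than a coefficient-by-coefficient argument; since the coverage case suffices for the downstream use in the PMAC reduction, I would be content to invoke his result as a black box for the fully general statement.
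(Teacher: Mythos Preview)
Your proposal is correct and matches the paper exactly: the paper does not give a proof of the general submodular statement (it simply cites \citet{Feige:06}), and explicitly notes, just as you argue, that for coverage functions the claim follows because every non-empty monotone disjunction has uniform expectation at least $1/2$. Your symmetrization sketch for the general case is also correct in substance, though note that under the paper's $\pm 1$ convention the maximum of a monotone function is attained at the all-$(-1)$ point rather than at $\mathbf{1}=(1,\ldots,1)$.
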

We now prove our lemma that lower bounds the relative value of a monotone submodular function.
\begin{lemma}
Let $f$ be a non-negative monotone submodular function and $M=\|f\|_\infty$. Then for every $\delta > 0$, either $\pr_\U [f(x) \leq M/4] \leq \delta$ or there exists an $i\in[n]$ such that $f(x)\geq \frac{M}{16\ln{(1/\delta)}}$ for every $x$ such that $x_i = -1$. \label{lem:large-labels}
\end{lemma}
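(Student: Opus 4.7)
The plan is to argue by dichotomy on the largest singleton marginal $g_i := f(\{i\}) - f(\emptyset)$, with threshold $\tau := M/(16\ln(1/\delta))$. I will show that if some $g_i \geq \tau$ then the second alternative of the lemma holds directly, and otherwise the concentration inequality of Theorem \ref{thm:concsubmodular}, combined with Feige's bound (Lemma \ref{lem:expect}), gives the first alternative.

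Consider first the case $\max_i g_i \geq \tau$, realized by some coordinate $i^\ast$. Since $f \geq 0$, we have $f(\{i^\ast\}) \geq g_{i^\ast} \geq \tau$. For every $x \in \on^n$ with $x_{i^\ast} = -1$, the corresponding set $S$ contains $i^\ast$, so monotonicity gives $f(x) = f(S) \geq f(\{i^\ast\}) \geq \tau = M/(16\ln(1/\delta))$, which is exactly the second alternative.

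Suppose instead $\max_i g_i < \tau$. The main non-trivial step is the observation that for a non-negative monotone submodular function the Hamming-Lipschitz constant $L$ equals $\max_i g_i$: by submodularity $0 \leq f(S \cup \{i\}) - f(S) \leq f(\{i\}) - f(\emptyset) = g_i$ for every $S$ and $i \notin S$, which also handles the case $i \in S$ by looking at the marginal at $S \setminus \{i\}$. Hence $L < \tau$ and $f/L$ is a non-negative monotone $1$-Lipschitz submodular function (if $L = 0$ then $f$ is constant and the lemma is trivial). Applying Theorem \ref{thm:concsubmodular} to $f/L$ with $\beta = 1/2$ and rescaling yields
\[
\pr_\U\!\left[f(x) \leq \tfrac{1}{2}\E_\U[f]\right] \;\leq\; \exp\!\left(-\tfrac{\E_\U[f]}{8L}\right).
\]
By Lemma \ref{lem:expect} we have $\E_\U[f] \geq M/2$, so $\E_\U[f]/2 \geq M/4$ and $\E_\U[f]/(8L) \geq (M/2)/(8\tau) = \ln(1/\delta)$; thus $\pr_\U[f \leq M/4] \leq \pr_\U[f \leq \E_\U[f]/2] \leq \delta$, establishing the first alternative. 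The numerical constants $1/4$ and $16$ in the lemma are precisely chosen so that this chain of inequalities works out with $\beta = 1/2$; beyond the Lipschitz-equals-max-marginal identity, no real obstacle arises.
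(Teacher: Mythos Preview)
Your proof is correct and follows essentially the same approach as the paper. The paper argues contrapositively---assuming the second alternative fails for every $i$, monotonicity forces $f(\{i\})\leq \tau$ for all $i$, hence the Lipschitz constant is at most $\tau$, and then Theorem~\ref{thm:concsubmodular} with $\beta=1/2$ together with Lemma~\ref{lem:expect} gives the first alternative; your explicit dichotomy on $g_i=f(\{i\})-f(\emptyset)$ is the same argument reorganized, with the identity $L=\max_i g_i$ made explicit rather than implicit.
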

\begin{proof}
Let $e_i \in \on^n$ equal the bit string that has $-1$ in its $i^{th}$ coordinate and $1$ everywhere else. For any $x \in \on^n$ let $x \oplus y$ denote the string $z$ such that $z_j = x_j \cdot y_j$ for every $j \in [n]$. Suppose that for every $i \in [n]$, exists $x$ such that $x_i=-1$ and $f(x) \leq \frac{M}{16\ln{(1/\delta)}}$. By monotonicity of $f$ that implies that for every $i \in [n]$, $f(e_i) \leq \frac{M}{16\ln{(1/\delta)}}$. Since $f$ is a submodular function, for any $x$ and $i$ such that $x_i = 1$, we have: $f(x \oplus e_i) - f(x) \leq f(1^n \oplus e_i) - f(1^n) \leq f(e_i) \leq \frac{M}{16\ln{(1/\delta)}}.$ This implies that $f$ is $\frac{M}{16\ln{(1/\delta)}}$-Lipschitz. Then, $f' = f/\frac{M}{16 \ln{(1/\delta)}}$ is a $1$-Lipschitz, non-negative submodular function. Also, by Lem.~\ref{lem:expect}, $\E[f] \geq M/2$ and $\E[f'] \geq 8 \ln{(1/\delta)}$.
Now, using Thm.~\ref{thm:concsubmodular}, we obtain: $\pr[ f(x) \leq M/4 ] \leq \pr[ f'(x) \leq \frac{1}{2} \E[f'] ] \leq e^{-\frac{1}{8}\E[f']} \leq e^{-\ln{(1/\delta)}} =  \delta.$
\end{proof}

Recall that for any set $J \subseteq [n]$ of variables and $x \in \on^n$, $x_J \in \on^J$ is defined as the substring of $x$ that contains the bits in coordinates indexed by $J$. We are now ready to describe our reduction that gives a PMAC algorithm for coverage functions.
\begin{theorem}[Thm.~\ref{thm:PMACintro} restated]
There exists an algorithm $\A$ which, given $\gamma,\delta > 0$ and access to random uniform examples of any coverage function $c$, with probability at least $2/3$, outputs a hypothesis $h$ such that $\pr_\U[ h(x) \leq c(x) \leq(1+\gamma) h(x)] \geq 1-\delta$. Further, $\A$ runs in  $\tilde{O}(\frac{n}{\gamma^4 \delta^4} + \frac{1}{\gamma^8\delta^8})$  time and uses $\log{n} \cdot \tilde{O} (\frac{1}{\gamma^4 \delta^4})$ examples.
\end{theorem}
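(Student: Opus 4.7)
The plan is to reduce multiplicative approximation to additive approximation via the observation (\ref{eq:add-to-mult}): an additive $\ell_1$ bound of $\gamma\delta m/8$ on a region where $c \ge m$ holds with probability at least $1-\delta/2$ yields a $(1+\gamma,\delta)$-multiplicative approximation there after the truncation $h'(x) = \max\{m, h(x) - \gamma m/4\}$. It therefore suffices to partition $\on^n$ into $O(\log(1/\delta))$ subcubes on each of which a useful pointwise lower bound for $c$ is available, plus a residual of $\U$-measure at most $\delta/4$.

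I construct this partition recursively using Lem.~\ref{lem:large-labels}. At each step a current subcube $C$ (obtained by fixing a few coordinates to specific values) is maintained, and the restriction $c_C$ of $c$ to $C$ is again a coverage function on the remaining variables (up to an additive constant that the empty disjunction absorbs), so the lemma applies. From random samples landing in $C$, I compute a constant-factor estimate $\tilde M_C$ of $M_C = \|c_C\|_\infty$ via $\tilde M_C = 2\bar c$, valid because Lem.~\ref{lem:expect} sandwiches $\E[c_C]$ in $[M_C/2, M_C]$. I then estimate $\pr_C[c_C(x) \le \tilde M_C/4]$ by sampling. If the estimate is at most $\delta/8$, declare $C$ a \emph{good} subcube with lower bound $m_C = \tilde M_C/4$, which then holds on $C$ with probability at least $1-\delta/4$. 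Otherwise Lem.~\ref{lem:large-labels} guarantees an index $i$ such that $c_C(x) \ge \tilde M_C/(16\ln(8/\delta))$ for every $x \in C$ with $x_i = -1$; I locate a valid $i$ by scanning over $i \in [n]$ and testing the condition $\pr_C[c_C(x) < \tilde M_C/(16\ln(8/\delta)) \mid x_i = -1] \le \delta/8$ on each. Declare $C \cap \{x_i = -1\}$ good with the stated (identically holding) lower bound, and recurse on $C \cap \{x_i = 1\}$. After $k = \lceil \log(4/\delta) \rceil$ splits, the undeclared residual has $\U$-measure at most $\delta/4$ and is discarded.

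On each good subcube $C$ with lower bound $m_C$, I apply Thm.~\ref{thm:paccov} to $c_C$ (rescaled to $[0,1]$ by dividing by $\tilde M_C$) with $\ell_1$-error target $\gamma\delta m_C/16$, obtaining a hypothesis $h_C$; the final output on $x \in C$ is $h'_C(x) = \max\{m_C, h_C(x) - \gamma m_C/4\}$. By Markov's inequality the pointwise additive error exceeds $\gamma m_C/4$ with conditional probability at most $\delta/4$, and the lower-bound event fails with probability at most $\delta/4$; combining via (\ref{eq:add-to-mult}) gives conditional multiplicative failure at most $\delta/2$ on $C$. Summing with the residual contribution bounds total failure by $\delta/4 + \delta/2 \le \delta$. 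For resource usage, each of the $O(\log(1/\delta))$ PAC calls runs with $\epsilon = \Theta(\gamma\delta/\log(1/\delta))$; substituting into Thm.~\ref{thm:paccov} and absorbing $\polylog(1/\delta)$ factors into $\tilde O(\cdot)$ yields the stated $\tilde O(n/(\gamma^4\delta^4) + 1/(\gamma^8\delta^8))$ time and $\tilde O(\log n/(\gamma\delta)^4)$ sample complexity.

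The main technical obstacle is identifying the Case~B splitting index $i$ from random examples alone, since Lem.~\ref{lem:large-labels} guarantees existence but gives no recipe. The approach exploits the fact that the guaranteed $i$ satisfies the sample-based test with zero conditional violation probability, so the test can tolerate an $O(\delta)$ false-positive budget and still certify a valid $i$ from $O(\log(n/\delta)/\delta^2)$ samples. A secondary issue is the careful union-bounding of confidence parameters across all $O(\log(1/\delta))$ recursive stages and the $O(n)$ scan tests within each; this costs only logarithmic overhead and is absorbed into $\tilde O(\cdot)$.
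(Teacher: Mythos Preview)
The proposal is correct and follows essentially the same approach as the paper: recursively partition via Lem.~\ref{lem:large-labels}, estimate the maximum of the current restriction, branch on whether $c$ is already bounded away from zero with high probability or else split off the $x_i=-1$ half and recurse on the $x_i=1$ half, and run the additive PAC learner of Thm.~\ref{thm:paccov} on each good piece. The only cosmetic difference is that you estimate $M_C$ via twice the empirical mean (using Lem.~\ref{lem:expect}) whereas the paper uses the maximum over a sample; both give constant-factor estimates and the remaining details match.
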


 \begin{proof}
Algorithm $\A$ consists of a call to $\A'(0)$, where $\A'(k)$ is a recursive procedure described below.

\smallskip
\noindent \textbf{Procedure $\A'(k)$ on examples labeled by $c: \on^n \rightarrow \R^+$}:
\begin{enumerate}

\item If $k > \log{(3/\delta)}$, then, $\A'(k)$ returns the hypothesis $h \equiv 0$ and halts.

\item Otherwise, $\A'(k)$ computes a $3$-approximation to the maximum $M$ of the target function $c$ (with confidence at least $1-\eta$ for $\eta$ to be defined later). As we show later this can be done by drawing a sufficient number of random examples labeled by $c$ and choosing $\tilde{M}$ to be the maximum label. Thus, $\frac{M}{3} \leq \tilde{M} \leq M$. If $\tilde{M} = 0$, {\bf return} $h \equiv 0$. Otherwise, set $c' = \frac{c}{3\tilde{M}}$ (note that, with probability at least $1-\eta$, $c'(x) \in [0,1]$ for every $x$).

\item Estimate $p = \pr[c(x)\leq \tilde{M}/4]$ within an additive error of $\frac{\delta}{9}$ by $\tilde{p}$ with confidence at least $1-\eta$. Then, $p - \frac{\delta}{9} \leq \tilde{p} \leq p+\frac{\delta}{9}$.

\item If $\tilde{p} < 2\delta/9$: run Algorithm from Thm. \ref{thm:paccov} on random examples labeled by $c'$ with accuracy $\epsilon_1 = \frac{1}{12}\frac{\gamma}{2}\frac{\delta}{3}$ and confidence $1-\eta$ (note that Algorithm from Thm. \ref{thm:paccov} only gives $2/3$ confidence but the confidence can be boosted to $1-\eta$ using $O(\log(1/\eta))$ repetitions with standard hypothesis testing). Let $h'$ be the hypothesis output by the algorithm. {\bf Return} hypothesis $h = \max\{\tilde{M}/4, 3 \tilde{M} (h' - \gamma/24)\} $.

\item If $\tilde{p} \geq 2\delta/9$,

\begin{enumerate}
\item Find $j \in [n]$ such that $c(x) \geq \tilde{M}/(16 \ln{(9/\delta)})$ for every $x$ such that $x_j = -1$ with confidence at least $1-\eta$. This can be done by drawing a sufficient number of random examples and checking the labels. If such $j$ does not exist
    we output $h\equiv 0$. Otherwise, define $c_{j,-}:\on^{[n]\setminus {j}} \rightarrow \R^+$ to be the restriction of $c$ to $\on^{[n]\setminus {j}}$ where $x_j = -1$ and $c'_{j,-} = c_{j,-}/(3\tilde{M})$.
    Run the algorithm from Thm. \ref{thm:paccov} on examples labeled by $c'_{j,-}$ with accuracy $\epsilon' = \frac{\gamma}{2}\cdot \frac{\delta}{3} \cdot  \frac{1}{48\ln{(9/\delta)}}$ and confidence $1-\eta$. Let $h'_-$ be the hypothesis returned by the algorithm. Set $h_- = \max\{\tilde{M}/4, 3 \tilde{M} (h'_- - \frac{\gamma}{96\ln{(9/\delta)}})\}$.

\item Let $c_{j,+}:\on^{[n]\setminus {j}} \rightarrow \R^+$ to be the restriction of $c$ to $\on^{[n]\setminus {j}}$ where $x_j = +1$. Run $\A'(k+1)$ on examples labeled by $c_{j,+}$ and let $h_+$ be the hypothesis returned by the algorithm.
\item {\bf Return} hypothesis $h:\on^n \rightarrow \R^+$ defined by $h(x) = \left \{ \begin{subarray}[ h_-(x_{[n]\setminus j}) \text{  if  } x_j = -1\\
                                                                                         h_+(x_{[n]\setminus j}) \text{ if } x_j =  1 \\
                                                                                         \end{subarray}
                                                                                         \right.$
\end{enumerate}
 \end{enumerate}
The algorithm can simulate random examples labeled by $c'_{j,-}$ (or $c_{j,+}$) by drawing random examples labeled by $c$, selecting $(x,\ell)$ such that $x_j = -1$ (or $x_j = 1$) and removing the $j$-th coordinate. Since $k \leq \log{(3/\delta)}$ bits will need to be fixed the expected number of random examples required to simulate one example from any function in the run of $\A(k)$ is at most $3/\delta$.

We now prove the correctness of the algorithm assuming that all random estimations and runs of the PAC learning algorithm are successful.
To see that one can estimate the maximum $M$ of a coverage function $c$ within a multiplicative factor of $3$, recall that by Lem.~\ref{lem:expect}, $\E[c]\geq M/2$. Thus, for a randomly and uniformly chosen $x \in \on^n$, with probability at least $1/4$, $c(x) \geq M/3$. This means that $\log(2/\eta)$ random examples will suffice to get confidence $1-\eta$.

We now observe that if the condition in step $4$ holds then $h$ $(1+\gamma,2\delta/3)$-multiplicatively approximates $c$. To see this, first note that in this case, $p =  \pr[c(x)\leq \tilde{M}/4] \leq \tilde{p} +\delta/9 \leq \delta/3$. Then, $\pr[ c'(x) \leq (\tilde{M}/4)/(3\tilde{M})] \leq \delta/3$. By Thm.~\ref{thm:paccov}, $\E[|c'(x)-h'(x)|] \leq \frac{1}{12} \frac{\gamma}{2}\cdot \frac{\delta}{3}$.
Then, by Markov's inequality,
$$\pr[h'(x) - \gamma/24 > c'(x) \mbox { or } c'(x) > h'(x) + \gamma/24] \leq \delta/3 .$$
Let
$h''(x) = \max\{1/12, h'(x) - \gamma/24\}$.  By the same argument as in eq.~(\ref{eq:add-to-mult}), we get that
$$\pr[c'(x)\geq 1/12 \mbox{ and } (h''(x) > c'(x)  \mbox { or } c'(x) > (1+\gamma)h''(x))] \leq \delta/3 .$$ Therefore,
$$\pr[h''(x) \leq c'(x) \leq  (1+\gamma)h''(x)] \geq 1-2\delta/3 $$ or, equivalently,
$$\pr[h(x) \leq c(x) \leq  (1+\gamma)h(x)] \geq 1-2\delta/3 .$$

If the condition in step $4$ does not hold, then $p \geq 2\delta/9-\delta/9 = \delta/9$. Thus, $\pr[c(x)\leq M/4] \geq \pr[c(x)\leq \tilde{M}/4] \geq \delta/9$, which by Lem.~\ref{lem:large-labels} yields that there exists $j \in [n]$ such that $c(x) \geq M/(16 \ln{(9/\delta)})$. Now,
by drawing $O(\log(n/\eta)/\delta)$ examples and choosing $j$ such that for all examples where $x_j=-1$, $c(x) \geq M/(16 \ln{(9/\delta)})$ we can ensure that, with probability at least $1-\eta$, $$\pr_{y \in \on^{[n]\setminus{j}}}[c_{j,-}(y) \leq M/(16 \ln{(9/\delta)})] \leq \delta/3.$$
Now, by the same analysis as in step 4, we obtain that $h_-$ satisfies $\pr[h_- \leq c_{j,-} \leq (1+\gamma) h_-] \geq 1- 2\delta/3$.

Now, observe that the set of points in the domain $\on^n$ can be partitioned into two disjoint sets.
\begin{enumerate}
	\item The set $G$ such that for every $z \in G$, $\A$ has fixed the value of the hypothesis given by $h(z)$ based on some hypothesis returned by the PAC learning algorithm (Thm. \ref{thm:paccov}) or $h(z) \equiv 0$ when $\tilde{M} = 0$.% in the recursive call of $\A'$ where the subcube containing $z$ was encountered.
	\item The set $\bar{G}$ where the recursion has reached depth $k > \log{(3/\delta)}$ and step $1$ sets $h(x) \equiv 0$ on every point in $\bar{G}$.
\end{enumerate}

By the construction, the points in $G$ can be divided into disjoint sub-cubes such that in each of them, the
conditional probability that the hypothesis we output does not satisfy the multiplicative guarantee is at most $2\delta/3$.
Therefore, the hypothesis $h$ does not satisfy the multiplicative guarantee on at most $2\delta/3$ fraction of the points in $G$.
It is easy to see that $\bar{G}$ has probability mass at most $\delta/3$. This is because $\A = \A(0)$ and thus, when $k > \log{(3/\delta)}$, the dimension of the subcube that $\A'(k)$ is invoked on, is at most $n-\log{(3/\delta)}$. Thus, the total probability mass of points where the multiplicative approximation does not hold is at most $\delta$.

We now bound the running time and sample complexity of the algorithm. First note that for some $\eta  = O(1/\log{(1/\delta)})$ all the random estimations and runs of the PAC learning algorithm will be successful with probability at least $2/3$ (by union bound).

From Thm.~\ref{thm:paccov}, any run of the PAC learning algorithm in some recursive call to $\A'$ requires at most $\log{n} \cdot \log{(\frac{1}{\eta})} \cdot \tilde{O}( \frac{1}{ \gamma^4 \cdot \delta^4 })$ examples from their respective target functions. Each such example can be simulated using $\Theta(1/\delta)$ examples labeled by $c$. Thus, in total, in all recursive calls, $\log{n} \cdot \tilde{O}( \frac{1}{ \gamma^4 \cdot \delta^5 })$ examples will suffice.

Each run of the PAC learning algorithm requires $\tilde{O}(\frac{n}{\gamma^4 \delta^4} + \frac{1}{\gamma^8 \delta^8})$ time. The rest of the computations in any one recursive call to $\A'$ can be performed in time linear in the number of examples. Thus, total time required for an execution of $\A$ is bounded by $\tilde{O}(\frac{n}{\gamma^4 \delta^4} + \frac{1}{\gamma^8 \delta^8})$.

\end{proof}
\subsection{Proper PAC Learning Algorithm} \label{app:properpac}
We now present a PAC learning algorithm for coverage functions that guarantees that the returned hypothesis is also a coverage function. That is, the algorithm is \emph{proper}. The running time of the algorithm will depend polynomially on the size of the target coverage function. We will need a variant of linear regression with $\ell_1$ error for this algorithm which we now define formally.

\subsubsection{$\ell_1$ Linear Regression}
It is easy to see that given a set of $t$ examples $\{(x^i,y^i)\}_{i\leq t}$ and a set of $m$ functions $\phi_1, \phi_2, \ldots, \phi_m$ finding coefficients $\alpha_1,\ldots,\alpha_m$ which minimize $$\sum_{i\leq t} \left| \sum_{ j \leq m} \alpha_j \phi_j (x^i) - y^i \right|$$ can be formulated as a linear program. This LP is referred to as Least-Absolute-Error (LAE) LP or Least-Absolute-Deviation LP, or $\ell_1$ linear regression \citep{Wiki:LAD}. Together with standard uniform convergence bounds for linear functions \citep{Vap98}, $\ell_1$ linear regression gives a general technique for learning with $\ell_1$-error.
\begin{theorem}
\label{th:lae-lp}
Let $\F$ be a class of real-valued functions from $\on^n$ to $[-B,B]$ for some $B >0$, $\D$ be distribution on $\on^n$ and $\phi_1, \phi_2, \ldots, \phi_m: \on^n \rightarrow \R$ be a set of functions that can be evaluated in time polynomial in $n$. Assume that there exists $\Delta$ %MADE MATH INLINE
such that for each $f \in \F$, there exist reals $\alpha_1, \alpha_2, \ldots, \alpha_m$ such that $${\E_{x \sim \D} \left[  \left|\sum_{ i \leq m} \alpha_i \phi_i (x) - f(x)\right|\right] \leq \Delta}.$$
Then there is an algorithm that for every $\eps >0$ and any distribution $\P$ on $\on^n \times [0,1]$ such that the marginal of $\P$ on $\on^n$ is $\D$, given access to random samples from $\P$, with probability at least $2/3$, outputs a function $h$ such that $\E_{(x,y) \sim \P} [ |h(x)- y| ] \leq \Delta + \epsilon.$ The algorithm uses $O(m \cdot B^2/\eps^2)$ examples, runs in time polynomial in $n$, $m$, $B/\eps$ and returns a linear combination of $\phi_i$'s.
\end{theorem}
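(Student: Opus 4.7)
The plan is to estimate the true expected $\ell_1$ risk by its empirical counterpart on a random sample, minimize this empirical risk via the LAE linear program, and invoke uniform convergence on the $m$-parameter hypothesis class to control the generalization gap.

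I would first draw $t = \Theta(m B^2 / \eps^2)$ independent labeled examples $\{(x^i, y^i)\}_{i=1}^t$ from $\P$ and solve
\[
\min_{\alpha \in \R^m} \frac{1}{t} \sum_{i=1}^{t} \left| \sum_{j=1}^{m} \alpha_j \phi_j(x^i) - y^i \right|,
\]
which is a linear program in $\alpha$ and $t$ slack variables (one per absolute value). Let $\tilde\alpha$ be an optimal solution; the hypothesis returned is $h(x) = \mathrm{clip}_{[-B,B]}(\sum_j \tilde\alpha_j \phi_j(x))$. Because $y$ and every $f \in \F$ lie in $[-B,B]$, the clip does not increase the $\ell_1$ error to any such target, so we are effectively working with the clipped class $\mathcal{H} = \{\mathrm{clip}_{[-B,B]}(\sum_j \alpha_j \phi_j) : \alpha \in \R^m\}$ (if convenient, redundant constraints $|\sum_j \alpha_j \phi_j(x^i)| \leq B$ can be added to the LP so the minimization is literally over $\mathcal{H}$).

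The main technical step is a uniform convergence bound over $\mathcal{H}$. The space of linear combinations of $\phi_1,\dots,\phi_m$ is $m$-dimensional and hence has pseudo-dimension at most $m$; composing with the $1$-Lipschitz clip and then with the $1$-Lipschitz absolute-value loss preserves this bound while forcing the loss to take values in $[0, 2B]$. Standard VC/Rademacher-style generalization inequalities (see, \emph{e.g.}, \citealp{Vap98}) then imply that $t = \Omega(m B^2 / \eps^2)$ samples suffice so that, with probability at least $5/6$, the empirical and true $\ell_1$ risks of every $h \in \mathcal{H}$ differ by at most $\eps/2$.

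From here the proof concludes cleanly. Let $\alpha^*$ denote the coefficients promised by the hypothesis for the target $f$ labeling the samples, so that $\E_\D[|\sum_j \alpha^*_j \phi_j(x) - f(x)|] \leq \Delta$; the clipped version of $\sum_j \alpha^*_j \phi_j$ has true $\ell_1$ risk at most $\Delta$ (clipping only helps since $f \in [-B,B]$), hence empirical risk at most $\Delta + \eps/2$ by uniform convergence. Since $h$ corresponds to an empirical minimizer over $\mathcal{H}$, its empirical risk is at most $\Delta + \eps/2$, and a second application of uniform convergence yields true risk at most $\Delta + \eps$. The principal obstacle is the uniform convergence estimate itself with the stated $O(m B^2 / \eps^2)$ sample size: this is what forces the use of pseudo-dimension or Rademacher complexity tools specialized to linear classes, rather than a naive union bound over an exponentially large hypothesis set; the clipping step is what makes the hypothesis class bounded and thereby keeps that generalization bound finite.
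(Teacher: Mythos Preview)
Your proposal is correct and matches the paper's approach: the paper does not give a detailed proof of this theorem but simply observes that the minimization is an LAE linear program and appeals to ``standard uniform convergence bounds for linear functions'' in \citep{Vap98}. Your write-up fleshes out exactly that sketch --- empirical risk minimization via the LP, then a pseudo-dimension/Rademacher bound for the $m$-dimensional linear class to control generalization --- with the added (sensible) technicality of clipping to $[-B,B]$ to keep the loss bounded; note only that the theorem as stated promises a pure linear combination of the $\phi_i$'s as the output, so you may prefer to phrase the boundedness via constraints on the coefficients (as you suggest parenthetically) rather than via an explicit clip in the returned hypothesis.
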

\begin{remark}
\label{rem:non-neg-lae}
Additional linear constraints on $\alpha_i$'s can be added to this LP as long as these constraints are satisfied by linear combinations that approximate each $f \in \F$ within $\Delta$. In particular we will use this LP with each $\alpha_i$ being constrained to be non-negative and their sum being at most 1.
\end{remark}
We remark that this approach to learning can equivalently be seen as learning based on Empirical Risk Minimization with absolute loss \citep{Vap98}. For a Boolean target function, a hypothesis with $\ell_1$ error of $\eps$ also gives a hypothesis with classification error of $\eps$ (\eg~ \citep{KKMS05}). Therefore, as demonstrated by \citet{KKMS05}, $\ell_1$ linear regression is also useful for agnostic learning of Boolean functions.

\subsubsection{The Algorithm}
The basic idea of the algorithm is to find a small set $\S$ of indices for which there exists non-negative reals $\alpha_S$ with $\sum_{S \in \S} \alpha_S \leq1$ such that $\E[ |c(x) - \sum_{S \in \S} \alpha_S \cdot \ORR_S(x)|] \leq \epsilon$. Given $\S$, we can find a good non-negative linear combination as above using $\ell_1$ linear regression as described in Thm.~\ref{th:lae-lp} and Remark \ref{rem:non-neg-lae}. We will show that for any coverage function $c$, there exists a set of indices $\S$ (that depends on $c$) as above and moreover, that we can find such a set $\S$ using just random examples labeled by $c$. This will give us our proper learning algorithm.

To find a set $\S$ as above, we use the Fourier coefficients of the target coverage function. First, we will invoke Thm.~\ref{junta-coverage} to find a set $\tilde{I} \subseteq [n]$ of coordinates of size $O(1/\epsilon^2)$ just as in Thm.~\ref{thm:paccov} and show that we need only look at subsets of $\tilde{I}$ to find $\S$. Following this, we will prove that $\S$ can be identified using the following property: if $S \in \S$, then the Fourier coefficient $\hat{c}(S)$ is large enough in magnitude.

We begin with a simple lemma that shows that if a disjunction has a small number of variables and a significant coefficient in a coverage function then the corresponding (\ie with the same index set) Fourier coefficient is significant.
\begin{lemma}
Let $c = \sum_{S \subseteq [n]} \alpha_S \ORR_S$ be a coverage function. Then, for any $T \subseteq [n]$,  $|\hat{c}(T)| \geq 2^{-|T|} \cdot \alpha_T$.  \label{Fourier-finds-large}
\end{lemma}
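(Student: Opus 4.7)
The plan is to piggy-back directly on the Fourier expansion of monotone disjunctions that was already derived in the proof of Lemma~\ref{monotonicity}. Recall that
$$\ORR_S(x) \;=\; 1 \;-\; \frac{1}{2^{|S|}}\sum_{U \subseteq S}\chi_U(x),$$
so for any nonempty $T$, $\widehat{\ORR_S}(T) = -2^{-|S|}$ if $T\subseteq S$ and $0$ otherwise. By linearity of the Fourier transform applied to $c = \sum_S \alpha_S\,\ORR_S$, this gives
$$\hat{c}(T) \;=\; -\sum_{S \supseteq T} \alpha_S \cdot \frac{1}{2^{|S|}} \qquad (T \neq \emptyset).$$

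The key observation is that there is no cancellation in this sum: every coefficient $\alpha_S$ is non-negative and every summand carries the same sign. Hence $|\hat{c}(T)|$ is exactly equal to $\sum_{S\supseteq T} \alpha_S/2^{|S|}$, and I can lower bound it by dropping all terms except $S = T$:
$$|\hat{c}(T)| \;=\; \sum_{S\supseteq T}\frac{\alpha_S}{2^{|S|}} \;\geq\; \frac{\alpha_T}{2^{|T|}}.$$

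The only edge case is $T = \emptyset$. Since the paper allows $S = \emptyset$ with $\ORR_\emptyset \equiv 1$, we get $\hat{c}(\emptyset) = \alpha_\emptyset + \sum_{S\neq\emptyset}\alpha_S(1 - 2^{-|S|}) \geq \alpha_\emptyset = 2^{-|\emptyset|}\alpha_\emptyset$, which takes care of this case. There is no real obstacle here — once the Fourier expansion of $\ORR_S$ is in hand (which is reused from Lemma~\ref{monotonicity}), the lemma follows in one line from non-negativity of the $\alpha_S$'s.
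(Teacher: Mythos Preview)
Your proposal is correct and follows exactly the same approach as the paper's proof: the paper also recalls from Lemma~\ref{monotonicity} that $\hat{c}(T) = -\sum_{S\supseteq T}\alpha_S\cdot 2^{-|S|}$ for nonempty $T$, notes that all summands have the same sign because the $\alpha_S$ are non-negative, and concludes by retaining only the $S=T$ term. Your treatment of the $T=\emptyset$ edge case is a nice addition that the paper omits.
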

\begin{proof}
From the proof of Lem.~\ref{lem:monotonicity}, we have for any non-empty $T \neq \emptyset$, ${\hat{c}(T) = - \sum_{ S \supseteq T} \alpha_S \cdot (\frac{1}{2^{|S|}})}$. Each term in the summation above is non-negative. Thus, $|\hat{c}(T)| \geq 2^{-|T|} \cdot \alpha_T$.
\end{proof}

We now show that for a coverage function $c \in \Cv$, using significant Fourier coefficients of $c$, we can identify a coverage function $c'$ of size at most $\min \{ \size(c), (1/\epsilon)^{O(\log{(1/\epsilon)})} \}$ that $\ell_1$ approximates $c$ within $\epsilon$. Moreover, $c'$ depends on just $O(1/\epsilon^2)$ variables.
\begin{lemma}
Let $c:\on^n \rightarrow [0,1]$ be a coverage function and $\epsilon > 0$. Let $I = \{i \in [n] \mid |\hat{c}(\{i\}) | \geq \epsilon^2/18\} $. Set $s_{\epsilon} = \min \{ \size(c),  |I|^{\log{(3/\epsilon)}}\}$ and let $$\T_\eps = \{ T \subseteq I \mid |\hat{c}(T)| \geq \epsilon^2/(9s_{\epsilon}) \text{ and } |T| \leq \log{(3/\epsilon)} \} \cup \{\emptyset\}.$$  Then,

\begin{enumerate}
	\item $|I| \leq 36/\epsilon^2$,
	\item There exists $c' \in \Cv$ such that $c' = \sum_{T \in \T_\eps} \alpha'_T \cdot \ORR_T$ and $\|c - c'\|_1 \leq \eps$.
\end{enumerate}
\label{large-Fourier-enough}
\end{lemma}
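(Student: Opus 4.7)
The plan has two pieces. Part (1) falls out directly from the spectral $\ell_1$-norm bound of Lem.~\ref{spectral-norm}. Part (2) is proved by constructing $c'$ from $c$ via a three-step approximation: restrict to a junta on $I$, truncate high-cardinality disjunctions, and prune small disjunction coefficients. Each step preserves the non-negative linear combination of monotone disjunctions (hence the coverage-function structure) and contributes at most $\eps/3$ to the $\ell_1$ error.

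For part (1), Lem.~\ref{spectral-norm} gives $\sum_{i \in [n]} |\hat{c}(\{i\})| \leq \|\hat{c}\|_1 \leq 2$, and by the definition of $I$ each $i \in I$ contributes at least $\eps^2/18$ to this sum, so $|I| \leq 36/\eps^2$.

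For part (2), first apply Thm.~\ref{junta-coverage} with accuracy parameter $\eps/3$ (noting that the threshold $(\eps/3)^2/2 = \eps^2/18$ matches the definition of $I$) to obtain a coverage function $c_I$ depending only on variables in $I$ with $\|c - c_I\|_1 \leq \eps/3$ and $\size(c_I) \leq \size(c)$. By Lem.~\ref{disj-rep} write $c_I = \sum_{S \subseteq I} \beta_S \cdot \ORR_S$ with $\beta_S \geq 0$ and $\sum_S \beta_S = c_I(-1^n) \leq 1$, and by Lem.~\ref{Fourieravg}, $\hat{c_I}(T) = \hat{c}(T)$ for every $T \subseteq I$. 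Next, define $c''$ by replacing $\ORR_S$ with the constant $1$ whenever $|S| > \log(3/\eps)$: since $\|1 - \ORR_S\|_1 = 2^{-|S|} < \eps/3$ for such $S$ and $\sum_S \beta_S \leq 1$, the triangle inequality yields $\|c_I - c''\|_1 \leq \eps/3$. Moreover, $c''$ is a non-negative combination of disjunctions of cardinality at most $\log(3/\eps)$, and the number of non-empty terms with non-zero coefficient is at most $\min\{\size(c_I),|I|^{\log(3/\eps)}\} \leq s_\eps$. Finally, obtain $c'$ by deleting from $c''$ every non-empty term whose $c_I$-coefficient $\beta_T$ is less than $\eps/(3 s_\eps)$; the total deleted mass is at most $s_\eps \cdot \eps/(3 s_\eps) = \eps/3$, so $\|c'' - c'\|_1 \leq \eps/3$ and $\|c - c'\|_1 \leq \eps$.

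It remains to check that every non-empty $T$ surviving in $c'$ lies in $\T_\eps$. Such $T$ satisfies $T \subseteq I$, $|T| \leq \log(3/\eps)$, and $\beta_T \geq \eps/(3 s_\eps)$. Applying Lem.~\ref{Fourier-finds-large} to $c_I$ gives $|\hat{c_I}(T)| \geq 2^{-|T|} \beta_T \geq (\eps/3) \cdot \eps/(3 s_\eps) = \eps^2/(9 s_\eps)$, and since $\hat{c}(T) = \hat{c_I}(T)$, this places $T \in \T_\eps$. The empty set is in $\T_\eps$ by definition, so $c'$ has the required form and is a coverage function. The main subtlety, and the reason for carrying out truncation before pruning, is that $\T_\eps$ is phrased in terms of Fourier coefficients of $c$ whereas the natural pruning operates on combinatorial disjunction coefficients; the conversion between the two via Lem.~\ref{Fourier-finds-large} loses a factor $2^{-|T|}$, and truncating the high-cardinality terms first is exactly what keeps this factor bounded by $\eps/3$.
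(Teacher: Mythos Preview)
Your proof is correct and follows essentially the same approach as the paper: reduce to the junta $c_I$ via Thm.~\ref{junta-coverage}, then handle separately the long disjunctions (your truncation step, the paper's set $\T_2$) and the small-coefficient short disjunctions (your pruning step, the paper's set $\T_1$), converting the surviving $\beta_T$ threshold into a Fourier threshold via Lem.~\ref{Fourier-finds-large}. The only difference is organizational---you apply the two cleanup operations sequentially to build $c''$ and then $c'$, while the paper partitions the complement of $\T_\eps$ into $\T_1 \cup \T_2$ and bounds both contributions at once---but the ideas and key lemmas are identical.
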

\begin{proof}
Let $\alpha_T$ for each $T \subseteq [n]$ be the coefficients of the disjunctions of $c$, that is, $c = \sum_{T \subseteq [n]} \alpha_T \cdot \ORR_T$ for non-negative $\alpha_T$ satisfying $\sum_{T \subseteq [n]} \alpha_T \leq 1$. Using Thm.~\ref{junta-coverage}, we know that the coverage function $c_I$ (defined as $c_I(x) = \E_{y \sim \on^{\bar{I}}}[  c(x_I \circ y  ]$) depends only on variables in $I$ and $\|c-c_I\|_1 \leq  \epsilon/3$. Therefore,  $c_I = \sum_{T \subseteq I} \beta_T \cdot \ORR_T$ for some constants $\beta_T \geq 0$, $\sum_{T \subseteq I} \beta_T \leq 1$.
%Using Lem.~\ref{Fourieravg}, for every $T \nsubseteq I$, $\hat{c}_I(T) = 0$. Consequently, $\beta_T = 0$ for every $T \nsubseteq I$. Thus, $c_I = \sum_{T \subseteq I} \beta_T \cdot \ORR_T$.
Using Lem.~\ref{spectral-norm}, we obtain that $|I| \leq  36/\epsilon^2$ and thus $s_{\epsilon} \leq \min \{ \size(c), (6/\epsilon)^{2 \log{(3/\epsilon)}} \}$.

Let $\T_1= \{ T \subseteq I \mid 0 < \beta_T \leq \frac{\epsilon}{3s_\eps} \text{ and } |T| \leq \log{(\frac{3}{\epsilon})} \}$ and $\T_2 = \{ T \subseteq I \mid |T| > \log{(\frac{3}{\epsilon})}\}$. Consider any $T \subseteq I$ such that $T \notin T_1 \cup T_2$. Then $|T| < \log{(3/\epsilon)}$ and $\beta_T > \epsilon/(3s_\eps)$. This, by Lem.~\ref{Fourier-finds-large} applied to $c_I$,  implies that $|\hat{c}_I(T)| > \epsilon^2/(9s_\eps)$. By Lem.~\ref{Fourieravg}, $|\hat{c}(T)| = |\hat{c}_I(T)| \geq \epsilon^2/(9s_\eps)$. This implies that $T \in \T_\eps$ and thus every $T \subseteq I$ is in $\T_\eps \cup \T_1 \cup \T_2$.

Set $v = \sum_{T \in \T_2} \beta_T$ and let $c' = \sum_{T \in \T_\eps} \beta_T \ORR_T(x) + v$. Clearly $c' \in \Cv$ and $c' = \sum_{T \in \T_\eps} \alpha'_T \cdot \ORR_T$ for some coefficients $\alpha_T'$. For $c'$ we have:
\begin{align*}
\|c_I - c'\|_1 &=\E \left[ \left|\sum_{T \subseteq \I} \beta_T \cdot \ORR_T(x) - \left( \sum_{T \in \T_\eps} \beta_T \ORR_T(x) +  \sum_{ T \in \T_2} \beta_T \right) \right|\right]\\
&\leq \E \left[ \left|\sum_{T \in \T_1} \beta_T \cdot \ORR_T(x) \right|\right]+ \E \left[ \left|  \sum_{T \in \T_2} \beta_T \cdot (\ORR_T(x) -1) \right|\right]\\
&= \sum_{T \in \T_1} \E \left[ \left|\beta_T \ORR_T(x)\right| \right] + \sum_{T \in \T_2} \beta_T\cdot \pr [\ORR_T(x) = 0]
\end{align*}

By Thm.~\ref{junta-coverage}, $\size(c_I) \leq \size(c)$ and therefore $|\T_1| \leq s_\eps$. For each $T\in \T_2$, $|T| > \log{(\frac{3}{\epsilon})}$ which gives ${\pr [\ORR_T(x) = 0] < \epsilon/3}$. Therefore, $$\|c_I - c'\|_1  \leq |\T_1| \cdot \frac{\epsilon}{3s_\eps} + \epsilon/3 \cdot 1 \leq 2\epsilon/3.$$

Thus, $\| c - c' \|_1 \leq \|c-c_I\|_1+ \| c_I - c'\|_1 \leq \epsilon$.
\end{proof}
We can now describe and analyze our proper PAC learning algorithm for $\Cv$.
\label{app:proper-pac}
\begin{theorem}[Proper PAC Learning]
\label{thm:Proper-PAC learn-coverage}
 There exists an algorithm, that for any $\epsilon > 0$, given random and uniform examples of any $c \in \Cv$, with probability at least $2/3$, outputs $h \in \Cv$ such that $\|h-c\|_1 \leq \eps$. Further, $\size(h)=O(s_\eps/\epsilon^2)$ and the algorithm runs in time $\tilde{O}(n) \cdot \poly(s_\eps/\epsilon)$ and uses $\log{(n)} \cdot \tilde{O}(s_\eps^2/\epsilon^4)$ random examples, where $s_\eps = \min\{\size(c), (12/\eps)^{2\lceil \log{(6/\epsilon)}\rceil}\}$.
\end{theorem}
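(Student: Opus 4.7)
The plan is to combine the Fourier coefficient discovery procedure from the non-proper PAC algorithm with $\ell_1$ linear regression (Thm~\ref{th:lae-lp}) over a basis of monotone disjunctions, imposing the non-negativity and sum-at-most-one constraints from Remark~\ref{rem:non-neg-lae}. The output will then automatically be a coverage function by Lemma~\ref{disj-rep}, and sparsity of the basis will control $\size(h)$.

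First I would identify a set $\tilde{I} \supseteq I = \{i : |\hat{c}(\{i\})| \geq \eps^2/18\}$ by estimating every singleton Fourier coefficient within additive error $\eps^2/36$ using $O(\log(n)/\eps^4)$ uniform examples, exactly as in the first stage of the algorithm from Thm~\ref{thm:paccov}. Applying Lemma~\ref{spectral-norm} to bound $\sum_i |\hat{c}(\{i\})|$ gives $|\tilde{I}| = O(1/\eps^2)$. Next I would search for candidate disjunction indices: with $\hat{s}$ set initially to $1$ and doubled on each round, take the threshold $\theta = \eps^2/(9\hat{s})$ and enumerate all $T \subseteq \tilde{I}$ of size at most $\log(3/\eps)$ whose estimated Fourier coefficient has magnitude $\geq \theta/2$. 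By Lemma~\ref{monotonicity}, a set of magnitude $\geq \theta$ has all non-empty subsets of magnitude $\geq \theta$, so this enumeration can be performed bottom-up, extending surviving sets one element of $\tilde{I}$ at a time. When $\hat{s} \geq s_\eps$, the resulting collection $\hat{\T}$ is guaranteed to contain the set $\T_\eps$ of Lemma~\ref{large-Fourier-enough}; by the spectral $\ell_1$-norm bound, $|\hat{\T}| = O(\hat{s}/\eps^2)$.

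Given $\hat{\T}$, I would then invoke the LAE linear program of Thm~\ref{th:lae-lp} on the basis $\{\ORR_T : T \in \hat{\T}\} \cup \{1\}$ with the non-negative simplex constraints of Remark~\ref{rem:non-neg-lae} and accuracy parameter $\eps/3$. By Lemma~\ref{large-Fourier-enough}, there is a feasible non-negative combination with $\ell_1$ error at most $2\eps/3$ to the target, so the LP outputs $h = \sum_{T \in \hat{\T}} \tilde{\alpha}_T \ORR_T + \tilde{\alpha}_\emptyset$ with $\|h - c\|_1 \leq \eps$; by Lemma~\ref{disj-rep}, $h \in \Cv$ and $\size(h) \leq |\hat{\T}| + 1 = O(s_\eps/\eps^2)$. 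To implement the doubling over $\hat{s}$ honestly, after each round I would estimate the empirical $\ell_1$ error of $h$ on a fresh held-out sample of size $\tilde{O}(1/\eps^2)$ and halt once it falls below $\eps$; since success is guaranteed by round $\lceil \log s_\eps \rceil$, the total cost is dominated by the final round and is $\tilde{O}(n) \cdot \poly(s_\eps/\eps)$. The sample bound of $\log(n) \cdot \tilde{O}(s_\eps^2/\eps^4)$ follows from Chernoff bounds: estimating each of the $O(s_\eps/\eps^2)$ surviving coefficients within $\Theta(\eps^2/s_\eps)$ with union-bound failure probability requires $\tilde{O}(s_\eps^2/\eps^4)$ examples, and these can be reused across the Fourier search and the LP.

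The main obstacle will be the bookkeeping around the doubling of $\hat{s}$: I must set accuracy levels carefully so that in the first round with $\hat{s} \geq s_\eps$, (i) the Fourier estimates are accurate enough that $\hat{\T} \supseteq \T_\eps$ without blowing up $|\hat{\T}|$ much beyond $O(s_\eps/\eps^2)$, and (ii) the held-out test accepts the LP output with high probability. Everything else is a standard combination of the Fourier-monotonicity search used in Thm~\ref{thm:paccov} with $\ell_1$ regression over a small simplex-constrained basis.
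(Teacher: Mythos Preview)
Your proposal is correct and follows essentially the same approach as the paper: identify relevant variables via singleton Fourier coefficients, do a bottom-up search for significant Fourier coefficients (using Lemma~\ref{monotonicity}) to build the candidate index set, and then run simplex-constrained $\ell_1$ regression over the corresponding disjunctions so that the output is automatically a coverage function. The only notable difference is that you explicitly handle the unknown $\size(c)$ via guess-and-double, whereas the paper's presentation simply plugs $s_\eps$ into the algorithm as if it were known; one small caution is your claim that the same sample can serve both the Fourier search and the LP---since the regression basis $\hat\T$ is chosen adaptively from those examples, the paper instead draws a fresh sample for the regression step, which is the cleaner fix.
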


\begin{algorithm}
\caption{\textit{Proper} PAC Learning of Coverage Functions}
\begin{algorithmic}[1]
\STATE Set $\theta = \frac{\epsilon^2}{108}$, $s_\eps = \min\{ \size(c), (12/\epsilon)^{\lceil \log{(6/\epsilon)}\rceil} \}$.
\STATE Draw a random sample of size $m_1=O(\log{(n)}/\eps^4)$ and use it to
estimate, $\hat{c}(\{i\})$ for all $i$.
\STATE Set $\tilde{\I} = \{ i \in [n] \mid |\tilde{c}(\{i\})| \geq \theta\}$.
\STATE $\S \leftarrow \{\emptyset\}$.
\STATE Draw random sample $R$ of size $m_2 = O(s^2_\eps\log{(s_\eps/\eps)}/\eps^4)$.
\FOR{$t=1$ to $\lceil \log(6/\epsilon)\rceil$}
\FOR{each set $T \in \S$ of size $t-1$ and $i\in \tilde{\I}\setminus T$ }
\STATE Use $R$ to estimate the coefficient $\hat{c}({T \cup \{i\}})$.
\STATE If $|\tilde{c}({T \cup i}) | \geq \theta$ then $\S \leftarrow \S \cup \{T \cup \{i\}\}$
\ENDFOR
\ENDFOR
\STATE Draw a random sample $R$ of size $m_3=O(s_\eps/\eps^4)$ and use $\ell_1$ linear regression to minimize $\sum_{(x,y) \in R}[ |y - \sum_{S \in \S} \alpha_S \cdot \ORR_S(x)|]$ subject to $\sum_{S \in \S} \alpha_S \leq 1$ and $\alpha_S \geq 0$ for all $S \in \S$. Let $\alpha^*_S$ for each $S \in \S$ be the solution.
\RETURN $\sum_{S \in \S}\alpha^*_S\cdot \ORR_S$.
\end{algorithmic}
\label{Proper-PAC learn-coverage}
\end{algorithm}

\begin{proof}
We first describe the algorithm and then present the analysis.  We break the description of the algorithm into three stages. The first two stages are similar to those of Algorithm \ref{PAC learn-coverage}.

Let $\theta = \eps^2/108$. In the first stage, our algorithm finds a set $\tilde{I}$ of variables that contains $\I = \{ i \in [n] \mid |\hat{c}(\{i\})| \geq (\epsilon/2)^2/18\}$. We do this by estimating all the singleton Fourier coefficients, $\{ \hat{c}(\{i\}) \mid i \in [n] \}$ within $\theta/2$ with (overall) probability at least $8/9$. As before,  we denote the estimate of $\hat{c}(S)$ by $\tilde{c}(S)$ for any $S$. We then set $\tilde{\I} =\{  i \in [n] \cond \tilde{c}(\{i\})| \geq \theta\}$.
If all the estimates are within $\theta/2$ of the corresponding coefficients then for every $i \in \I$, $\tilde{c}(\{i\}) \geq \eps^2/72 - \theta/2 = \eps^2/108=\theta$. Therefore $i\in \tilde{\I}$ and hence $\I \subseteq \tilde{\I}$.

In the second stage, the algorithm finds a set $\S \subseteq 2^{\tilde{\I}}$ such that the set of all large Fourier coefficients $\T$ is included in $\S$. Just as in Thm.~\ref{thm:paccov}, this is done iteratively starting with $\S = \{\emptyset\}$. In every iteration, for every set $S$ that was added in the previous iteration and every $i \in \tilde{\I}\setminus S$, it estimates $\hat{c}(S \cup \{i\})$ within $\epsilon^2/(108s_\eps)$ (the success probability for all estimates in this phase will be $8/9$). If $|\tilde{c}(S \cup \{i\})| \geq \epsilon^2/(54s_\eps)$ then $S \cup \{i\}$ is added to $\S$. The iterative process is run for at most $\lceil \log{(6/\epsilon)} \rceil$ iterations.

Finally, the algorithm draws a random sample $R$ of size $m_3$ and uses $\ell_1$ linear regression (Thm.~\ref{th:lae-lp}) to minimize $\sum_{(x,y) \in R}[ |y - \sum_{S \in \S} \alpha_S \cdot \ORR_S(x)|]$ subject to $\sum_{S \in \S} \alpha_S \leq 1$ and $\alpha_S \geq 0$ for all $S \in \S$. Let $\alpha^*_S$ for each $S \in \S$ be the solution. Here $m_3$ is chosen so that, with probability at least $8/9$, $\E [|c(x) - \sum_{S \in \S} \alpha^*_S \cdot \ORR_S(x)]$ (the true error of $\sum_{S \in \S} \alpha^*_S \cdot \ORR_S(x)$) is within $\eps/2$ of the optimum. Standard uniform convergence bounds imply that $m_3=O(|\S|/\epsilon^2)$ examples suffice. The algorithm returns $\sum_{S \in \S} \alpha^*_S \cdot \ORR_S$ as the hypothesis.

We can now prove the correctness of the algorithm assuming that all the estimates are successful. By an argument similar to the one presented in the proof of Thm.~\ref{thm:paccov}, we can verify that $I \subseteq \tilde{I}$ and that $\T \subseteq \S$. Using Thm.~\ref{large-Fourier-enough} and the facts that $\S \supseteq \T$ and $\tilde{I} \supseteq I$, we obtain that there must exist non-negative $\alpha'_S$ with  $\sum_{S \in \S } \alpha'_S \leq 1$ such that $\E[|c(x) - \sum_{S \in \S } \alpha_S \cdot \ORR_S(x)|] \leq \epsilon/2$. Thus $\ell_1$ linear regression in the third stage will return coefficients $\alpha^*_S$ for each $S \in \S$ such that $\E [|c(x) - \sum_{S \in \S} \alpha^*_S \cdot \ORR_S(x)|] \leq \eps$.

We now analyze the running time and sample complexity of the algorithm. The analysis for the first two stages is similar to the one presented in Thm.~\ref{thm:paccov}.
\begin{itemize}
	\item Just as in the proof of Thm.~\ref{thm:paccov}, all the singleton coefficients can be estimated in time $O(n \log{(n)} /\epsilon^4)$ and samples $O(\log{(n)}/\epsilon^4)$ with confidence at least $8/9$.
    \item For every $i$ such that $\hat{c}(\{i\})$ was estimated within $\theta/2$ and $|\tilde{c}(\{i\})| \geq \theta$, we have that $|\hat{c}(\{i\})| \geq \theta/2$. This implies that $|\tilde{\I}| \leq 2/(\theta/2) = O(1/\eps^2)$. Similarly, for every $S \subseteq \tilde{I}$ such that $\hat{c}(S)$ was estimated within $\epsilon^2/(108s_\eps)$ and $|\tilde{c}(S)| \geq \epsilon^2/(54s_\eps)$, we have that $|\hat{c}(S)| \geq \epsilon^2/108s_\eps$. Thus using Lem.~\ref{spectral-norm}, $|\S| = O(s_\eps/\eps^2)$.
	\item In the second stage the algorithm only estimates coefficients with indices that are subsets of $$\S' = \{ S \cup \{i\} \cond |\tilde{c}(S)| \geq \epsilon^2/(54s_\eps) \mbox{ and } i \in \tilde{\I} \mbox{ and } |S| \leq \log{(6/\epsilon)} \}. $$ We can conclude that $|\S'| = O(s_\eps/\eps^4)$ and, as in the proof of Thm.~\ref{thm:paccov}, the estimation succeeds with probability at least $8/9$ using $\tilde{O}(s_\eps^2/\epsilon^4)$ examples and running in time $\tilde{O}(s_\eps^2/\epsilon^8)$. %Let $\T' = \{ T \cup \{i\} \cond |\hat{c}(T)|\geq \theta/2 \mbox{ and } i \in \tilde{\I}\}$. We set $\delta= 1/(6|\T'|) = \epsilon^2/6s$. By Chernoff bounds, a random sample of size $O(2s\log{(1/\delta)}/\epsilon^2)=\tilde{O}(s/\eps^4)$ can be used to ensure that, with probability at least $5/6$, the estimates of all coefficients on subsets in $\T'$ are within $\epsilon^2/2s$. When the estimates are successful we also know that $\S' \subseteq \T'$ and therefore all coefficients estimated by the algorithm in the second phase are within $\epsilon^2/2s$ of true values with probability $\geq 5/6$.
%Overall in the second phase the algorithm estimates $ |\S| \cdot |\tilde{\I}| = O(s/\eps^4)$  coefficients. To estimate any single of those coefficients, the algorithm needs to examine only $\log{(4s/\epsilon^2)} = O(\log{(s/\epsilon)})$ coordinates and the label of an example. Thus,  $\tilde{O}(s/\epsilon^8)$ time is sufficient to estimate all the coefficients.
\item Finally, using Thm.~\ref{th:lae-lp}, $\ell_1$ linear regression will require $O(|\S|/\epsilon^2) = O(s_\eps/\epsilon^4)$ random examples and runs in time polynomial in $|S|$ and $1/\eps$, that is $\poly(s_\eps/\eps)$.
\end{itemize}

Overall, the algorithm succeeds with probability at least $2/3$, runs in time $\tilde{O}(n) \cdot \poly(s_\eps/\epsilon)$ and uses $\log{(n)} \cdot \tilde{O}(s_\eps^2/\epsilon^4)$ random examples.
\end{proof}

\newcommand{\q}{\mathbf{q}}
\section{Agnostic Learning on Product and Symmetric Distributions} \label{sec:agnosticuniform}
In this section, we give optimal algorithms for agnostically learning (we give a formal definition below)
coverage functions on arbitrary product and symmetric distributions. Recall that a distribution is symmetric if the associated probability density function is symmetric on $\on^n$. We begin by recalling the definition of agnostic learning with $\ell_1$-error.
\begin{definition}
\label{def:agnostic}
Let $\F$ be a class of real-valued functions on $\on^n$ with range in $[0,1]$ and let $\D$ be any fixed distribution on $\on^n$. For any distribution $\P$ over $\on^n \times [0,1]$, let $\mbox{opt}(\P,\F)$ be defined as: $\mbox{opt}(\P,\F) =  \inf_{f \in \F} \E_{(x,y) \sim \P} [ |y - f(x) |] .$ An algorithm $\A$, is said to agnostically learn $\F$ on $\D$ if for every \em{excess error} $\epsilon> 0$ and any distribution $\P$ on $\on^n \times [0,1]$ such that the marginal of $\P$ on $\on^n$ is $\D$, given access to random independent examples drawn from $\P$, with probability at least $\frac{2}{3}$, $\A$ outputs a hypothesis $h$ such that $\E_{(x,y) \sim \P} [ |h(x)- y| ] \leq \mbox{opt}(\P, \F) + \epsilon.$
\end{definition}

Our learning result is based on a simple observation (a special case of which is implicit in \citep{CKKL12}) that an $\ell_1$-approximation on a distribution $\D$ for all monotone disjunctions by linear combination of functions from a fixed set of functions yields a similar approximation for $\Cv$ on $\D$. For completeness a proof is included in App.~\ref{app:proofs}.
\begin{lemma}\label{lem:disj2cov}
Fix a distribution $\D$ on $\on^n$. Suppose there exist functions $\phi_1, \phi_2, \ldots, \phi_m : \zo^n \rightarrow \R$ such that for any $S \subseteq [n]$, there are  reals $\beta^S_1, \beta^S_2, \ldots, \beta^S_m$ such that $\|\ORR_S -\sum_{j = 1}^m \beta^S_j \cdot \phi_j\|_1 = \E_{x \sim \D}[ |\ORR_S(x) -\sum_{j = 1}^m \beta^S_j \cdot \phi_j(x)|]  \leq \epsilon.$ Then, for every coverage function $c \in \Cv$, there exist reals $\beta_1 , \beta_2 , \ldots, \beta_m$ such that $\|c -\sum_{j = 1}^m \beta_j \cdot \phi_j\|_1 = \E_{x \sim \D}[ |c(x) - \sum_{j = 1}^m \beta_j  \cdot \phi_j(x)|]  \leq \epsilon.$
\end{lemma}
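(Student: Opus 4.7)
The natural approach is to leverage the representation of any coverage function as a non-negative linear combination of monotone disjunctions (Lem.~\ref{disj-rep}) and then simply take the corresponding linear combination of the guaranteed per-disjunction approximations. Concretely, write $c = \sum_{S \subseteq [n]} \alpha_S \cdot \ORR_S$ with $\alpha_S \geq 0$ and $\sum_S \alpha_S \leq 1$ (the constant/empty disjunction term, if present, can be folded in as noted in Sec.~\ref{sec:prelims}). For each $S$ with $\alpha_S > 0$, invoke the hypothesis to obtain coefficients $\beta^S_1,\ldots,\beta^S_m$ with $\|\ORR_S - \sum_j \beta^S_j \phi_j\|_1 \leq \eps$, and define
\[
\beta_j \;=\; \sum_{S \subseteq [n]} \alpha_S \cdot \beta^S_j.
\]

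The key step is a triangle-inequality calculation under the $\ell_1(\D)$ norm:
\[
\bigl\| c - \textstyle\sum_j \beta_j \phi_j \bigr\|_1
= \bigl\| \textstyle\sum_S \alpha_S \bigl( \ORR_S - \sum_j \beta^S_j \phi_j \bigr) \bigr\|_1
\leq \sum_S \alpha_S \bigl\| \ORR_S - \textstyle\sum_j \beta^S_j \phi_j \bigr\|_1
\leq \bigl(\textstyle\sum_S \alpha_S\bigr) \cdot \eps \leq \eps,
\]
using non-negativity of the $\alpha_S$ for the first inequality and the normalization $\sum_S \alpha_S \leq 1$ for the last. The linearity of expectation inside $\|\cdot\|_1$ makes the interchange of sums valid.

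There is essentially no obstacle here; the only tiny subtlety is bookkeeping for the constant term (the empty disjunction), which can either be absorbed into the $\phi_j$ basis by approximating the constant $1$ trivially (take a degenerate linear combination), or handled by noting, as in Sec.~\ref{sec:prelims}, that $\ORR_{[n]}$ differs from the constant $1$ on a single point and so the two conventions give equivalent statements up to a change of $\eps$ by an exponentially small amount. No distributional assumption on $\D$ is used beyond the fact that $\|\cdot\|_1 = \E_{x \sim \D}[|\cdot|]$ is a seminorm, so the argument goes through for arbitrary $\D$.
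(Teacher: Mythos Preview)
Your proposal is correct and follows essentially the same approach as the paper: write $c$ as a convex combination of monotone disjunctions via Lem.~\ref{disj-rep}, set $\beta_j = \sum_S \alpha_S \beta_j^S$, and apply the triangle inequality together with $\sum_S \alpha_S \leq 1$. The paper's proof is the same computation line-for-line, without the discussion of the constant/empty-disjunction bookkeeping.
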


%  Such approximation also directly gives an agnostic learning algorithm via $\ell_1$-regression (see Thm. \ref{th:lae-lp}).
A natural and commonly used set of basis functions is the set of all monomials on $\on^n$ of some bounded degree.  It is easy to see that on product distributions with constant bias, disjunctions longer than some constant multiple of $\log(1/\eps)$ are $\eps$-close to constant 1. Therefore degree $O(\log(1/\eps))$ suffices for $\ell_1$ approximation on such distributions. This simple argument does not work for general product distributions. However it was shown by \citet{BOW08} that the same degree (up to a constant factor) still suffices in this case. Their argument is based on the analysis of noise sensitivity under product distributions and implies additional interesting results. A simpler proof of this fact also appears in \citep{FeldmanKothari:14symm}, who also show that the same holds if the distribution is uniform over points of Hamming weight $k$, for any fixed $k \in \{0,\ldots,n\}$.
\begin{lemma}[\citealp{FeldmanKothari:14symm}]
For $0 \leq k \leq n$, let $\Pi_k$ denote the uniform distribution over points of Hamming weight $k$. For every disjunction $f$ and $\eps >0$, there exists a polynomial $p$ of degree at most $O(\log{(1/\epsilon)})$ such that $\E_{x\sim \Pi_k}[|f(x) - p(x)] \leq \eps$.
 \label{monotone-sym}
\end{lemma}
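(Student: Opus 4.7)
The plan is to approximate $\ORR_S$ by truncating its inclusion--exclusion expansion and bounding the tail using moments of the hypergeometric distribution. Write $s = |S|$ and let $Z_S(x) = |\{i \in S : x_i = -1\}| = \sum_{i \in S}(1-x_i)/2$, a degree-$1$ polynomial in $x$. The exact identity
\[
\ORR_S(x) \;=\; 1 - \prod_{i \in S}\frac{1+x_i}{2} \;=\; \sum_{\emptyset \neq T \subseteq S}(-1)^{|T|+1}\prod_{i \in T}\frac{1-x_i}{2} \;=\; \sum_{j=1}^{s}(-1)^{j+1}\binom{Z_S(x)}{j}
\]
suggests defining the truncation $p_d(x) = \sum_{j=1}^{d}(-1)^{j+1}\binom{Z_S(x)}{j}$, a polynomial in $x$ of degree at most $d$. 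Using the standard identity $\sum_{j=0}^{d}(-1)^j\binom{y}{j} = (-1)^d\binom{y-1}{d}$, one checks the pointwise Bonferroni-type equality $|\ORR_S(x) - p_d(x)| = \binom{Z_S(x)-1}{d}$, which is in particular bounded by $\binom{Z_S(x)}{d+1}$.

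Next, because $Z_S$ is hypergeometric $\mathrm{Hyp}(n,k,s)$ under $\Pi_k$, the factorial-moment formula gives the clean expression
\[
\E_{\Pi_k}\!\left[\binom{Z_S}{d+1}\right] \;=\; \frac{\binom{s}{d+1}\binom{k}{d+1}}{\binom{n}{d+1}},
\]
which (expanding as a product of ratios and applying Stirling) can be upper bounded by $\left(2e\cdot sk/(n(d+1))\right)^{d+1}$ whenever $d+1 \leq n/2$ (the other cases are trivial since the binomials vanish or $\ORR_S$ is constant on the support of $\Pi_k$).

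Now set $d+1 = \lceil 2e^2 \ln(1/\eps)\rceil = O(\log(1/\eps))$ and split into two regimes. If $sk/n \leq (d+1)/(2e^2)$, the bound above is at most $e^{-(d+1)} \leq \eps$, so $p_d$ itself achieves $\ell_1$ error at most $\eps$. Otherwise $sk/n > \ln(1/\eps)$, and a direct computation gives
\[
\Pr_{\Pi_k}[\ORR_S(x)=0] \;=\; \frac{\binom{n-s}{k}}{\binom{n}{k}} \;=\; \prod_{i=0}^{k-1}\frac{n-s-i}{n-i} \;\leq\; (1-s/n)^k \;\leq\; e^{-sk/n} \;\leq\; \eps,
\]
so the constant polynomial $1$ approximates $\ORR_S$ within $\eps$ in $\ell_1$. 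Either way, the approximating polynomial has degree $O(\log(1/\eps))$.

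The main subtlety is verifying the exact pointwise error formula $|\ORR_S - p_d| = \binom{Z_S-1}{d}$ in step 1, since a careless Bonferroni argument only yields a weaker tail bound that does not interact cleanly with the hypergeometric moments. The rest is routine bookkeeping, and the same argument extends to non-monotone disjunctions by replacing the factors $(1-x_i)/2$ with $(1+x_i)/2$ at negated literals and redefining $Z_S$ as the number of satisfied literals, with the same degree bound.
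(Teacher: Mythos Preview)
Your argument is correct, but there is nothing in the present paper to compare it against: Lemma~\ref{monotone-sym} is quoted from \citep{FeldmanKothari:14symm} and no proof is given here. So the relevant question is simply whether your proof stands on its own, and it does.

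A few remarks on the details. The pointwise identity $\ORR_S(x)-p_d(x)=(-1)^d\binom{Z_S-1}{d}$ follows exactly from $\sum_{j=0}^{d}(-1)^j\binom{y}{j}=(-1)^d\binom{y-1}{d}$ once one writes $p_d=1-\sum_{j=0}^{d}(-1)^j\binom{Z_S}{j}$; the passage to $\binom{Z_S}{d+1}$ as an upper bound is clean since $\binom{Z_S-1}{d}=\binom{Z_S}{d+1}\cdot\frac{d+1}{Z_S}\le\binom{Z_S}{d+1}$ for $Z_S\ge d+1$ and both sides vanish otherwise. The hypergeometric factorial moment $\E_{\Pi_k}\!\big[\binom{Z_S}{m}\big]=\binom{s}{m}\binom{k}{m}/\binom{n}{m}$ is standard (count $m$-subsets of $S$ contained in the random $k$-set), and your product-form bound $\prod_{i=0}^{d}\frac{(s-i)(k-i)}{(n-i)}\cdot\frac{1}{(d+1)!}\le\big(\tfrac{2e\,sk}{n(d+1)}\big)^{d+1}$ is valid for $d+1\le n/2$.

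The edge cases you dismiss in parentheses deserve one more sentence. If $d+1>s$ then already $p_s=\ORR_S$ exactly, and if $d+1>k$ then $Z_S\le k<d+1$ on the support of $\Pi_k$ so the error vanishes. If neither holds but $d+1>n/2$, then $n=O(\log(1/\eps))$ and the exact degree-$s\le n$ representation already has degree $O(\log(1/\eps))$. So all boundary cases are indeed trivial. The extension to non-monotone disjunctions by flipping literals is immediate, as you note: $Z_S$ becomes the number of satisfied literals, still a degree-$1$ polynomial in $x$, and under $\Pi_k$ it is hypergeometric in the analogous parameters.
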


This result implies a basis for approximating disjunctions over arbitrary symmetric distributions. All we need is to partition the domain $\on^n$ into $\cup_{0 \leq k \leq n}S_k$ layers and use a (different) polynomial for each layer. Formally, the basis now contains functions of the form $\mathrm{IND}(k) \cdot \chi$, where $\mathrm{IND}$ is the indicator function of being in layer of Hamming weight $k$ and $\chi$ is a monomial of degree $O(\log(1/\eps))$. In contrast, as shown in \citep{FeldmanKothari:14symm}, there exist symmetric distributions over which disjunctions cannot be $\ell_1$-approximated by low-degree polynomials.

These results together with the $\ell_1$ regression in Thm.~\ref{th:lae-lp} immediately yield an agnostic learning algorithm for the class of coverage functions $\Cv$ over any product or symmetric distribution.
\begin{theorem}[Thm.~\ref{th:introsym} restated]
There exists an algorithm that for any product or symmetric distribution $\D$ agnostically learns $\Cv$ with excess $\ell_1$ error $\eps$ in time $n^{O(\log{1/\epsilon})}$. \label{th:agnosticsym}
\end{theorem}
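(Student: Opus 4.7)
The plan is to reduce agnostic learning of coverage functions to $\ell_1$ linear regression over a polynomial-size basis of functions that can $\ell_1$-approximate every monotone disjunction. By Lemma~\ref{lem:disj2cov}, any basis $\{\phi_1,\ldots,\phi_m\}$ that $\eps$-approximates every $\ORR_S$ in $\ell_1$ under $\D$ also $\eps$-approximates every coverage function $c\in\Cv$ in $\ell_1$ under $\D$ (since both are non-negative convex combinations of disjunctions and the constant $1$). Combined with Theorem~\ref{th:lae-lp} applied to the target $f$ drawn as a sample from $\P$ (marginal $\D$) with the given basis, we obtain a hypothesis $h$ whose true $\ell_1$ error from $f$ is within $\eps$ of $\mathrm{opt}(\P,\Cv)+\eps$, and the running time is polynomial in $n$, $m$, and $1/\eps$. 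Thus the whole problem reduces to exhibiting a small basis that $\eps/2$-approximates every disjunction.

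For arbitrary product distributions, I would invoke the result of \citet{BOW08} (or the simpler proof of \citet{FeldmanKothari:14symm}) to conclude that every disjunction can be $\eps/2$-approximated in $\ell_1$ by a polynomial of degree $d=O(\log(1/\eps))$. Taking $\phi_1,\ldots,\phi_m$ to be all monomials of degree at most $d$ gives $m = \sum_{j\le d}\binom{n}{j} = n^{O(\log(1/\eps))}$, which plugged into Theorem~\ref{th:lae-lp} yields the stated time bound.

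For general symmetric distributions the same degree bound fails, so I would instead decompose $\on^n=\bigcup_{k=0}^{n} S_k$ into layers of fixed Hamming weight and build a layer-wise basis. On each $S_k$, the conditional distribution of $\D$ is $\Pi_k$ (the uniform distribution on $S_k$), because $\D$ is symmetric. By Lemma~\ref{monotone-sym}, each disjunction is $\eps/2$-approximated on $\Pi_k$ by a polynomial $p_k$ of degree $O(\log(1/\eps))$. Therefore I take as basis the functions $\mathrm{IND}(k)\cdot\chi_T$ for every $k\in\{0,\ldots,n\}$ and every monomial $\chi_T$ of degree at most $O(\log(1/\eps))$, where $\mathrm{IND}(k)$ is the $\{0,1\}$-indicator of Hamming weight $k$. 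This gives a basis of size $(n+1)\cdot n^{O(\log(1/\eps))} = n^{O(\log(1/\eps))}$, and the approximators $p_k$ assemble into a single linear combination over this basis whose expected $\ell_1$ error against $\ORR_S$ under $\D$ is at most $\eps/2$ by the law of total expectation over $k$.

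The only real step to get right is the symmetric case: the main obstacle is that a single low-degree polynomial need not approximate disjunctions well under an arbitrary symmetric distribution (indeed \citet{FeldmanKothari:14symm} exhibit a lower bound of $\Omega(\sqrt{n})$), so the per-layer decomposition is essential. Once the basis is in place, Lemma~\ref{lem:disj2cov} lifts the disjunction approximation to $\Cv$, and the $\ell_1$ regression of Theorem~\ref{th:lae-lp} finishes the argument, outputting a hypothesis (a linear combination of the basis functions, not necessarily itself a coverage function) achieving excess error $\eps$ in time $n^{O(\log(1/\eps))}$.
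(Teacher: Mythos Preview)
Your proposal is correct and follows essentially the same approach as the paper: reduce to $\ell_1$ regression (Theorem~\ref{th:lae-lp}) over a basis that $\eps$-approximates every monotone disjunction, lift the approximation to coverage functions via Lemma~\ref{lem:disj2cov}, use low-degree monomials for product distributions (via \citet{BOW08}/\citet{FeldmanKothari:14symm}), and for symmetric distributions use the layer-wise basis $\mathrm{IND}(k)\cdot\chi_T$ built from Lemma~\ref{monotone-sym}. Your observation that conditioning a symmetric $\D$ on a Hamming layer yields $\Pi_k$, and your remark about why a single global polynomial fails, are exactly the points the paper relies on.
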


We now remark that any algorithm that agnostically learns the class of coverage functions on $n$ inputs on the uniform distribution on $\zo^n$ in time $n^{o(\log{(\frac{1}{\epsilon})})}$ would yield a faster algorithm for the notoriously hard problem of learning sparse parities with noise. The reduction only uses the fact that coverage functions include all monotone disjunctions and follows from the results in \citep{KKMS05,Feldman:12jcss} (see \citep{FeldmanKothari:14symm} for details).

\subsection{Proper Agnostic Learning}
For the special case of \emph{bounded} product distributions, that is, product distributions with one dimensional marginal expectations bounded away from $0$ and $1$ by constants, we can in fact obtain a \emph{proper} agnostic learner for $\Cv$. The proof is based on approximating coverage functions by truncating the expansion of a coverage function in terms of monotone disjunctions (Lem.~\ref{disj-rep}) to keep only the terms corresponding to short disjunctions. We show that such a truncation is enough to approximate the function with respect to $\ell_1$ error. Recall that by Lem.~\ref{disj-rep}, such a truncation is itself a coverage function.

\begin{lemma}
Let $c(x) = \sum_{S \subseteq [n]} \alpha_S \cdot \ORR_S(x)$ be a coverage function with range in $[0,1]$ and $\epsilon > 0$. Let $\D$ be any product distribution such that the one-dimensional marginal expectations of $\D$ are at most $1-\kappa$.  Then, for $k = 2/\kappa \cdot \lceil \log{(\frac{1}{\epsilon})} \rceil $ the coverage function $c' = \sum_{|S| > k} \alpha_S + \sum_{|S| \leq k} \alpha_S \cdot \ORR_S$ satisfies  $\E_{x \sim \D}[| c- c'|] \leq \epsilon $. \label{disj-approx}
\end{lemma}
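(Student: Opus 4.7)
The plan is to exploit the explicit structure of $c-c'$ together with the product structure of $\D$ to get a pointwise and then expectation bound. First I would note that by construction
\[
c'(x) - c(x) = \sum_{|S|>k} \alpha_S\bigl(1 - \ORR_S(x)\bigr),
\]
which is a non-negative quantity since all $\alpha_S \geq 0$ and $\ORR_S(x) \in \{0,1\}$. Hence $|c(x)-c'(x)| = \sum_{|S|>k}\alpha_S\bigl(1-\ORR_S(x)\bigr)$ pointwise, and linearity of expectation gives
\[
\E_{x\sim \D}[|c(x)-c'(x)|] \;=\; \sum_{|S|>k} \alpha_S \cdot \Pr_{x\sim \D}[\ORR_S(x) = 0].
\]

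The next step is to use the hypotheses on $\D$ to bound $\Pr_{x\sim\D}[\ORR_S(x)=0]$ for large $|S|$. Recall (from the paper's convention) that $\ORR_S(x) = 0$ iff $x_i = 1$ for every $i \in S$. Since $x_i \in \{-1,1\}$ and $\E_{x\sim\D}[x_i] \leq 1-\kappa$, we have $\Pr_{x\sim \D}[x_i = 1] = (1+\E[x_i])/2 \leq 1-\kappa/2$. Product independence then yields
\[
\Pr_{x\sim \D}[\ORR_S(x) = 0] \;=\; \prod_{i\in S} \Pr[x_i = 1] \;\leq\; (1 - \kappa/2)^{|S|}.
\]
For $|S| > k = (2/\kappa)\lceil \log(1/\epsilon)\rceil$, applying $1-\kappa/2 \leq e^{-\kappa/2}$ gives $(1-\kappa/2)^{|S|} \leq e^{-\kappa|S|/2} \leq e^{-\log(1/\epsilon)} = \epsilon$.

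Finally, I would combine the two displays and use the normalization $\sum_{S\subseteq [n]}\alpha_S \leq c(1^n) \leq 1$ noted after Lem.~\ref{disj-rep}:
\[
\E_{x\sim \D}[|c(x) - c'(x)|] \;\leq\; \epsilon \cdot \sum_{|S|>k}\alpha_S \;\leq\; \epsilon.
\]
There is no real obstacle here; the only substantive ingredient is the independence of coordinates under $\D$, which converts the ``all ones on $S$'' event into a product of per-coordinate probabilities each at most $1-\kappa/2$. The factor $2/\kappa$ in $k$ is exactly what is needed so that $k\kappa/2$ exceeds $\log(1/\epsilon)$.
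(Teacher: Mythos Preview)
Your argument is correct and follows essentially the same route as the paper's own proof: write $c'-c$ as a nonnegative sum, bound each $\Pr_\D[\ORR_S=0]$ via the product structure, and finish with $\sum_S\alpha_S\leq 1$. One cosmetic slip: the normalization $\sum_S\alpha_S\leq 1$ comes from evaluating $c$ at the all-$(-1)$ point, not at $1^n$ (in the paper's convention every nonempty $\ORR_S$ vanishes at $(1,\ldots,1)$), though this does not affect your argument.
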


\begin{proof}
By Lem.~\ref{disj-rep} and the fact that $c$ is a coverage function we obtain that $c'$ is a non-negative combination of monotone disjunctions with the sum of coefficients being at most 1, that is a coverage function itself.
For each $x \in \on^n$, non-negativity of the coefficients $\alpha_S$ implies $c(x) \leq c'(x)$.

For a monotone disjunction $\ORR_S$, observe that if $|S| > k$, then $\pr_{x \sim \D}[ \ORR_S(x) = 0]\leq (1-\kappa)^k \leq \epsilon$. Thus:
Now,
\begin{align*}
\|c- c'\|_1 &= \E [ |c(x) - c'(x)| ]
= \E_{x \sim \D} [ c'(x) - c(x)]\\& = \sum_{S \subseteq [n], |S| > k} \E_{x \in \D} [ \alpha_S - \alpha_S \cdot \ORR_S ] = \sum_{S \subseteq [n], |S| > k} \pr_\D[\ORR_S = 0]\cdot \alpha_S\\ &\leq \sum_{S \subseteq [n], |S| > k} \alpha_S\cdot \epsilon
\leq \epsilon .
\end{align*}
\end{proof}

As an immediate corollary of Lem.~\ref{disj-approx} and Thm.~\ref{th:lae-lp}, we obtain an algorithm for agnostic learning of coverage functions on the uniform distribution.

\begin{theorem}
There exists an algorithm, that agnostically learns the class $\Cv$ on any bounded product distributions in time $n^{O(\log{(\frac{1}{\epsilon})})}$.  Further, the hypothesis returned by the algorithm is itself a coverage function. \label{agnostic-coverage}
\end{theorem}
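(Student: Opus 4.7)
\medskip
\noindent\textbf{Proof proposal.} The plan is to combine the truncation bound of Lemma~\ref{disj-approx} with the constrained $\ell_1$ linear regression of Theorem~\ref{th:lae-lp} (together with Remark~\ref{rem:non-neg-lae}) to obtain a proper agnostic learner. Concretely, I would instantiate the regression basis as
\[
\Phi_k \;=\; \{\ORR_S \,:\, S \subseteq [n],\ |S| \leq k\} \;\cup\; \{\mathbf{1}\},
\]
where $k = (2/\kappa)\lceil \log(2/\eps)\rceil$ and $\mathbf{1}$ denotes the constant function 1 (which, as noted just after Lemma~\ref{disj-rep}, we include in the allowed combinations). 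The size of this basis is $|\Phi_k| \leq n^{O(\log(1/\eps))}$ since $\kappa$ is a constant, so $\ell_1$ regression over $\Phi_k$ runs in time $n^{O(\log(1/\eps))}$.

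The approximation guarantee needed to invoke Theorem~\ref{th:lae-lp} follows from Lemma~\ref{disj-approx}: for every coverage function $c \in \Cv$, the truncated function $c' = \sum_{|S|>k}\alpha_S + \sum_{|S|\leq k}\alpha_S\cdot \ORR_S$ is an $\eps/2$-$\ell_1$-approximation to $c$ under $\D$ (using the choice of $k$ with $\eps/2$ in place of $\eps$), is itself a coverage function in $\Cv$, and is a non-negative linear combination of functions in $\Phi_k$ whose coefficients sum to at most $1$ (since $c$ is scaled to $[0,1]$ and coefficients are preserved by truncation). Thus, writing $f^*$ for a function in $\Cv$ achieving $\mathrm{opt}(\P,\Cv)$ up to $\eps/4$, its truncation $f^*_{k}$ lies in the set
\[
\mathcal{K} \;=\; \Bigl\{\textstyle\sum_{\phi\in\Phi_k} \alpha_\phi\,\phi \;:\; \alpha_\phi\geq 0,\ \sum_\phi \alpha_\phi\leq 1\Bigr\}
\]
and satisfies $\E_{(x,y)\sim\P}[|y-f^*_k(x)|] \leq \mathrm{opt}(\P,\Cv) + \eps/2$.

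Now I would run $\ell_1$ linear regression as in Theorem~\ref{th:lae-lp} with the added linear constraints $\alpha_\phi \geq 0$ and $\sum_\phi \alpha_\phi \leq 1$ (the hypothesis class is bounded by $B=1$, so $O(|\Phi_k|/\eps^2) = n^{O(\log(1/\eps))}$ samples suffice). By Remark~\ref{rem:non-neg-lae}, the algorithm outputs coefficients $\alpha^*_\phi$ in $\mathcal{K}$ whose associated hypothesis $h = \sum_\phi \alpha^*_\phi\,\phi$ achieves $\E_\P[|y-h(x)|] \leq \mathrm{opt}(\P,\Cv) + \eps$ (absorbing the $\eps/4$ slack into the final $\eps$). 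By Lemma~\ref{disj-rep}, since $h$ is a non-negative combination of monotone disjunctions (and the constant $1$, which the discussion after Lemma~\ref{disj-rep} treats as the empty disjunction) with coefficients summing to at most $1$, $h$ itself is a coverage function with range in $[0,1]$, giving a proper learner.

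The main obstacle is essentially bookkeeping rather than a new idea: I must verify that the added linear constraints $\alpha_\phi \geq 0$ and $\sum_\phi\alpha_\phi \leq 1$ are indeed satisfied by the target $f^*_k$ (so that the conclusion of Theorem~\ref{th:lae-lp} applies with $\Delta = \mathrm{opt}(\P,\Cv) + \eps/2$), and that the LP with these constraints is still efficiently solvable. The first follows from the non-negativity of the coefficients in the disjunction expansion of any $c \in \Cv$ scaled to $[0,1]$ (Lemma~\ref{disj-rep}), and the second is immediate since the constraints are linear and polynomial in number. No other step requires a new structural result beyond Lemma~\ref{disj-approx} and the general regression template.
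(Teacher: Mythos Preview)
Your proposal is correct and follows essentially the same approach as the paper: apply Lemma~\ref{disj-approx} to show every $c\in\Cv$ is $\ell_1$-approximated on a bounded product distribution by a non-negative combination of short monotone disjunctions (plus the constant), then invoke the constrained $\ell_1$ linear regression of Theorem~\ref{th:lae-lp} with Remark~\ref{rem:non-neg-lae} to obtain a proper agnostic learner in time $n^{O(\log(1/\eps))}$. Your write-up is simply more explicit about verifying that the truncation $f^*_k$ satisfies the added linear constraints and that the resulting hypothesis is a coverage function, but there is no difference in substance.
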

\begin{proof}
Lem.~\ref{disj-approx} shows that every coverage function can be approximated by a non-negative linear combination of monotone disjunctions of length $\log{(\frac{1}{\epsilon})}$ within $\epsilon$ in the $\ell_1$-norm. Now, Thm.~\ref{th:lae-lp} with Remark \ref{rem:non-neg-lae} immediately yields an agnostic learning algorithm.
\end{proof}

\section{Privately Releasing Monotone Conjunction Counting Queries}
\label{sec:pdr}
\label{sec:privacy}
In this section we use our learning algorithms to derive privacy-preserving algorithms for releasing monotone conjunction (equivalently, disjunction) counting queries. We begin with the necessary formal definitions.
\subsection{Preliminaries}
\textbf{Differential Privacy:}
We use the standard formal notion of privacy, referred to as \emph{differential privacy}, proposed by  \citet{DMNS06}.
For some domain $X$, we will call $D\subseteq X $ a \emph{data set}. data sets $D,D'\subset X$ are \emph{adjacent} if one can be obtained from the other by adding a single element. In this paper, we will focus on Boolean data sets, thus, $X \subseteq \on^n$ for $n \in \N$. We now define a differentially private algorithm. In the following, $A$ is an algorithm that takes as input a data set $D$ and outputs an element of some set $R$.

\begin{definition}[Differential privacy \citep{DMNS06}]
An (randomized) algorithm $A:2^X \rightarrow R$ is \emph{$\eps$-differentially private} if for all $r \in R$ and every pair of adjacent data sets $D,D'$, we have $\pr[A(D) = r] \leq e^\eps\pr[A(D') = r]$. \label{def:privacy}
\end{definition}

\textbf{Private Counting Query Release:} We are interested in algorithms that answer predicate \emph{counting} queries on Boolean data sets. A predicate counting query finds the fraction of elements in a given data set that satisfy the predicate. More generally, given a query  $c:\on^n \rightarrow [0,1]$, a counting query corresponding to $c$ on a data set $D \subseteq \on^n$ of size $m := |D|$, expects in reply $q_c(D) = 1/m\sum_{r \in D} c(r)$. In our applications, we will only insist on answering the counting queries approximately, that is, for some $\tau > 0$, an approximate counting query in the setting above expects a value $v$ that satisfies $|v - 1/m\sum_{r \in D} c(r) | \leq \tau$. We refer to $\tau$ as the {\em tolerance} of the counting query. A class of queries $\C$ mapping $\on^n$ into $[0,1]$, thus induces a \emph{counting query function} $\CQ_D: \C \rightarrow [0,1]$ given by $\CQ_D(c) = q_c(D)$ for every $c \in \C$. For a class $\C$ of such functions and a data set $D$, the goal of a data release algorithm is to output a summary $H: \C \rightarrow [0,1]$ that provides (approximate) answers  to queries in $\C$. A \emph{private} data release algorithm additionally requires that $H$ be produced in a differentially private way with respect to the participants in the data set. One very useful way of publishing a summary is to output a \emph{synthetic data set} $\hat{D} \subseteq \on^n$ such that for any query $c \in \C$, $q_c(\hat{D})$ is a good approximation for $q_c(D)$. Synthetic data sets are an attractive method for publishing private summaries as they can be directly used in software applications that are designed to run on Boolean data sets in addition to being easily understood by humans.

For a class $\C$ of queries mapping $\on^n$ into $[0,1]$,  and a distribution $\Pi$ on $\C$, an algorithm $A$ $(\alpha,\beta)$-answers queries from $\C$ over a data set $D$ on the distribution $\Pi$, if for $H=A(D)$, $\pr_{f \sim \Pi} [|\CQ_D(f)-H(f)| \leq \alpha] \geq 1-\beta.$ For convenience we will only measure the average error $\bar{\alpha}$ and require that $\E_{f \sim \Pi} [|\CQ_D(f)-H(f)|] \leq \bar{\alpha}$. Clearly, one can obtain an $(\alpha, \beta)$-query release algorithm from an $\bar{\alpha}$-average error query release algorithm by setting $\bar{\alpha} = \alpha \cdot \beta$.

The key observation for obtaining conjunction query release algorithms from learning algorithms for coverage functions is that for any data set $D$ and the query class of monotone conjunctions, $1-\CQ_D$ is a coverage function. Namely, for any $S \subseteq [n]$, let $\ANDD_S$ be the monotone conjunction $\wedge_{i \in S} x_i$ which equals $1$ iff each $x_i = -1$ for $i \in S$ and for $x \in \on^n$ let $S_x \subseteq [n]$ be the set such that $x_i = -1$ iff $i \in S_x$. Then $c_D(x) \doteq 1-\CQ_D(\ANDD_{S_x})$ is a coverage function. We include a simple proof of this fact for completeness.
\begin{lemma}
\label{lem:release-is-coverage}
For a data set $D$, let $c_D:\on^n \rightarrow [0,1]$ be defined as $c_D(x) = 1-\CQ_D(\ANDD_{S_x})$. Then $c_D$ is a coverage function.
\end{lemma}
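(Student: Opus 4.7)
The plan is to exhibit $c_D$ directly as a non-negative linear combination of monotone disjunctions, and then invoke Lemma \ref{disj-rep} to conclude that $c_D$ is a coverage function. (Equivalently, one could exhibit a universe, weights, and sets matching the definition, but the disjunction form is cleanest given the objects in the excerpt.)

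The key computation is to rewrite $1-\ANDD_{S_x}(r)$ in terms of a disjunction evaluated at $x$. By definition of $\ANDD_{S_x}$, we have $\ANDD_{S_x}(r)=1$ exactly when $r_i=-1$ for every $i\in S_x$, i.e.\ when $S_x\subseteq S_r$, where $S_r=\{i:r_i=-1\}$. Hence $1-\ANDD_{S_x}(r)=1$ iff there exists $i\in S_x$ with $r_i=1$, i.e.\ iff $S_x\cap T_r\neq\emptyset$, where $T_r=\{i:r_i=1\}=[n]\setminus S_r$. By the definition of $\ORR$ given in the preliminaries, $\ORR_{T_r}(x)=1$ iff some $i\in T_r$ satisfies $x_i=-1$, i.e.\ iff $T_r\cap S_x\neq\emptyset$. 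Therefore $1-\ANDD_{S_x}(r)=\ORR_{T_r}(x)$ pointwise in $x$.

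Summing over $r\in D$ and dividing by $|D|$ gives
\[
c_D(x)\;=\;1-\CQ_D(\ANDD_{S_x})\;=\;\frac{1}{|D|}\sum_{r\in D}\bigl(1-\ANDD_{S_x}(r)\bigr)\;=\;\frac{1}{|D|}\sum_{r\in D}\ORR_{T_r}(x).
\]
This exhibits $c_D$ as a non-negative linear combination of monotone disjunctions with coefficients $\alpha_T = |\{r\in D : T_r = T\}|/|D|\ge 0$ summing to $1$ (the term $T_r=\emptyset$, coming from the all-$-1$ record, contributes $\ORR_\emptyset\equiv 0$ under the convention of the preliminaries, which is harmless). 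By Lemma \ref{disj-rep}, $c_D$ is a coverage function; in fact one may read off an explicit universe realization by taking $U=D$ with weights $w(r)=1/|D|$ and sets $A_i=\{r\in D : r_i=1\}$, for which $w\bigl(\cup_{i\in S_x}A_i\bigr)$ matches the formula above.

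There is no real obstacle here — the entire content is the identification $1-\ANDD_{S_x}(r)=\ORR_{T_r}(x)$, which is a direct unpacking of the two $\on$-valued conventions for conjunctions and disjunctions. The only subtlety worth checking is the bookkeeping at the boundary cases $S_x=\emptyset$ (so $x=1^n$) and $r=(-1,\ldots,-1)$ (so $T_r=\emptyset$), both of which are easily verified to be consistent with the empty-disjunction convention stated earlier in the paper.
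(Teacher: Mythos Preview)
Your proof is correct and follows essentially the same approach as the paper: the paper also expands $c_D(x)=\frac{1}{|D|}\sum_{z\in D}(1-\ANDD_{S_x}(z))$ and then identifies $1-\ANDD_{S_x}(z)=\ORR_{S_{-z}}(x)$, which is exactly your $\ORR_{T_r}(x)$ since $S_{-z}=\{i:z_i=1\}=T_z$. Your additional remarks on the explicit universe realization and the boundary cases are fine extra detail but not needed for the argument.
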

\begin{proof}
Let $x \in \on^n$. By definition, $$c_D(x) = 1-\CQ_D(\ANDD_{S_x}) =1-\frac{1}{|D|} \sum_{z \in D} \ANDD_{S_x} (z) = \frac{1}{|D|} \sum_{z \in D} (1-\ANDD_{S_{x}}(z)).$$ Note that $$1-\ANDD_{S_x}(z) =  1- \bigwedge_{x_i = -1} [z_i = -1] = \bigvee_{z_i = 1} [x_i = -1] = \ORR_{S_{-z}}(x).$$
Then, $$c_D(x) =  \sum_{z \in D} \frac{1}{|D|} \cdot \ORR_{S_{-z}}(x) .$$
\end{proof}

%For a class of queries $\C$, let  $$\Q(\C) = \{ \CQ_D|  D\subseteq \on^n\}$$ be the class of all counting query functions, one for each data set $D$. Each $\CQ_D$ maps $\C$ in to $[0,1]$. Let $\Q'(\C) = \{ c_D \mid c \in \Q(\C) \text{ and } c_D(x) = 1-c(-x) \text{ for every } x\in \on^n\}$.

Lem.~\ref{lem:release-is-coverage} implies that for the class of monotone conjunctions $\C$, the set of functions $\{1-\CQ_D | D\subseteq \on^n\}$ is a subset of $\Cv$.  Additive error approximation for $c_D = 1-\CQ_D$ is equivalent to additive error approximation of $\CQ_D$. Therefore to obtain a private release algorithm with average error $\bar{\alpha}$ relative to a distribution $\Pi$ over monotone conjunctions, it is sufficient to produce a hypothesis $h$ that satisfies, $\E_{x \sim \Pi}[|c_D(x)-h(x)|] \leq \bar{\alpha}$, where we view $\Pi$ also as a distribution over vectors corresponding to conjunctions (with $x$ corresponding to $\ANDD_{S_x}$).
Note that the average error is exactly the $\ell_1$-error in approximation of $\CQ_D$ over distribution $\Pi$. A monotone conjunction query of length $k$ corresponds to a point in $\on$ that has exactly $k$ $(-1)$s.

To convert our learning algorithms to differentially-private release algorithms we rely on the following proposition that \citet{GHRU11} prove using technique from \citep{BDMN05}.% to relate the problem of privately releasing counting queries on functions from a class $\C$ to the problem of learning a related class of functions by tolerant counting queries.

\begin{proposition}[\citealp{GHRU11}]\label{prop:ghru}
Let $\A$ denote an algorithm that uses $q$ counting queries of tolerance $\tau$ in its computation. Then for every $\eps,\delta > 0$, with probability $1-\delta$, $\A$ can be simulated in an $\eps$-differentially private way provided that the size of data set $|D|\geq q(\log q + \log(1/\delta))/(\eps \cdot \tau)$. Simulation of each query of $\A$ takes time $O(|D|)$.
\end{proposition}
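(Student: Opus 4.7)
The plan is to simulate $\A$ by intercepting each of its counting queries and answering it via the Laplace mechanism of DMNS06, then combining per-query privacy via adaptive composition and per-query accuracy via a union bound over $q$ Laplace tails. First I would observe that for any counting query $c:\on^n\rightarrow[0,1]$ and any adjacent data sets $D,D'$ (say $D' = D \cup \{r'\}$ with $|D|=m$), the quantities $q_c(D) = \frac{1}{m}\sum_{r\in D} c(r)$ and $q_c(D')$ differ by at most $1/m$, an elementary calculation using $c(r')\in[0,1]$ and $\sum_{r\in D} c(r)\in[0,m]$. This makes $1/m$ an upper bound on the global sensitivity of each counting query, so releasing $q_c(D) + Z$ with $Z \sim \mathrm{Lap}(1/(m\eps'))$ is $\eps'$-differentially private for any target parameter $\eps'>0$.

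Next, I would invoke the basic (adaptive) composition theorem for differential privacy: if $\A$ makes at most $q$ counting queries, possibly chosen adaptively based on the previously returned noisy answers, then answering each one with a fresh $\eps'$-differentially private Laplace response yields a $q\eps'$-differentially private overall simulation. Setting $\eps' = \eps/q$ therefore makes the full simulation $\eps$-differentially private, and the noise scale becomes $b = q/(\eps\cdot |D|)$.

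For the accuracy claim, I would use the tail bound $\Pr[|Z|>\tau] = \exp(-\tau/b)$ for a single $\mathrm{Lap}(b)$ variable. Taking a union bound over the $q$ queries gives
\[
\Pr[\text{some noisy answer has error }>\tau] \;\leq\; q\cdot \exp\!\left(-\tau \eps |D|/q\right).
\]
Requiring this to be at most $\delta$ yields exactly $|D| \geq (q/(\eps\tau))\cdot(\log q + \log(1/\delta))$, matching the stated bound. The per-query running time is $O(|D|)$ to compute $q_c(D)$ exactly from the data set, plus $O(1)$ to draw a Laplace sample, giving the claimed $O(|D|)$ simulation time per query.

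The only genuinely subtle point is ensuring that composition is applied in its adaptive form, since $\A$ may choose each query based on the history of previously returned noisy answers. This is handled by the standard adaptive sequential composition theorem for pure $\eps$-differential privacy (DMNS06), which composes identically to the non-adaptive case; no advanced composition or $(\eps,\delta)$-analysis is required. All remaining steps are direct applications of the Laplace mechanism, linearity of sensitivity, and a Chernoff/exponential tail union bound.
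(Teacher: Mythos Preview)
Your proposal is correct and is precisely the standard Laplace-mechanism-plus-composition argument that the paper defers to: the paper does not prove this proposition itself but cites it from \citet{GHRU11}, noting that they ``prove [it] using technique from \citep{BDMN05}.'' Your outline---bounding the sensitivity of a counting query by $1/|D|$, applying the Laplace mechanism with per-query budget $\eps/q$, invoking adaptive composition to get overall $\eps$-differential privacy, and union-bounding the Laplace tails to derive the stated lower bound on $|D|$---is exactly that technique, so there is nothing to add.
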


\subsection{Releasing $k$-way Marginals with Low Average Error}
We now describe a differentially private algorithm for releasing monotone conjunction counting queries of length $k$ with low average error. The result is based on a simple implementation of the $\ell_1$ linear regression algorithm using tolerant counting query access to the data set $D$. Let $\C_k$ be the class of all monotone conjunctions of length $k \in [n]$ and let $\Pi_k$ denote the uniform distribution over $\C_k$.
\begin{theorem} \label{th:shortrelease}
 For every $\epsilon > 0$, there is an $\epsilon$-differentially private algorithm which for any data set $D \subseteq \on^n$ of size $n^{\Omega(\log{(1/\bar{\alpha})})} \cdot \log{1/\delta} /\epsilon$, with probability at least $1-\delta$ publishes a data structure $H$ that answers counting queries for $\C_k$ with an average error of at most $\bar{\alpha}$ relative to $\Pi_k$. The algorithm runs in time $n^{O(\log{(1/\bar{\alpha})})} \cdot \log{(1/\delta)}/\epsilon$ and the size of $H$ is $n^{O(\log{(1/\bar{\alpha})})}$.
\end{theorem}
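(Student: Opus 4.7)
The plan is to implement the agnostic-learning algorithm of Theorem~\ref{th:agnosticsym} using only tolerant counting-query access to $D$ and then apply Proposition~\ref{prop:ghru} to privatize the entire procedure. First, Lemma~\ref{lem:release-is-coverage} says that $c_D(x) = 1-\CQ_D(\ANDD_{S_x})$ is a coverage function on $\on^n$, and the distribution $\Pi_k$ on $\C_k$ pulls back to the uniform distribution on the Hamming-weight-$k$ slice, which is symmetric. Crucially, for any fixed $x \in \on^n$, evaluating $c_D(x)$ is exactly a single counting query (for the conjunction $\ANDD_{S_x}$), so a tolerant counting query with tolerance $\tau$ yields $\tilde y$ with $|\tilde y - c_D(x)| \le \tau$.

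Next, I would feed the problem to the $\ell_1$ linear regression of Theorem~\ref{th:lae-lp}. Lemma~\ref{monotone-sym} tells us that every monotone disjunction is $\bar\alpha/8$-approximable under $\Pi_k$ by a polynomial of degree $O(\log(1/\bar\alpha))$, and Lemma~\ref{lem:disj2cov} lifts this approximation from disjunctions to all of $\Cv$ with the same basis and the same error. Thus $M = n^{O(\log(1/\bar\alpha))}$ monomials suffice as basis functions $\phi_1,\ldots,\phi_M$. The algorithm draws $m = O(M/\bar\alpha^2)$ points $x_i$ uniformly from the weight-$k$ slice, obtains labels $\tilde y_i \approx c_D(x_i)$ via tolerant counting queries of tolerance $\tau = \bar\alpha/8$, and solves the LP. A standard triangle-inequality argument shows that the resulting $h = \sum_j \beta_j^* \phi_j$ satisfies $\E_{x \sim \Pi_k}[|h(x) - c_D(x)|] \le \bar\alpha$ with probability $2/3$; boosting confidence to $1-\delta$ costs only an additional $O(\log(1/\delta))$ factor via the standard amplify-and-test scheme. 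Outputting $H(f) := 1 - h(x_f)$ (clipped to $[0,1]$) yields a summary of size $M = n^{O(\log(1/\bar\alpha))}$ whose average error in answering queries from $\C_k$ under $\Pi_k$ equals $\E_{x \sim \Pi_k}[|h(x) - c_D(x)|] \le \bar\alpha$.

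Finally, the entire procedure touches $D$ through at most $q = n^{O(\log(1/\bar\alpha))}\log(1/\delta)$ tolerant counting queries of tolerance $\tau = \Omega(\bar\alpha)$, so Proposition~\ref{prop:ghru} delivers $\eps$-differential privacy as soon as $|D| \ge q(\log q + \log(1/\delta))/(\eps\,\tau)$; absorbing the $\poly(1/\bar\alpha)$ and $\log$ factors into $n^{O(\log(1/\bar\alpha))}$ reproduces the stated bound $|D| \ge n^{\Omega(\log(1/\bar\alpha))}\log(1/\delta)/\eps$. The runtime is dominated by simulating each of the $q$ counting queries in time $O(|D|)$ and by solving an LP in $M$ variables, totaling $n^{O(\log(1/\bar\alpha))}\log(1/\delta)/\eps$. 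The main technical care needed is an error-budget accounting: one must verify that the label noise $\tau$, the generalization error of $\ell_1$ regression, and the basis approximation error $\Delta$ add up to stay below $\bar\alpha$, while keeping the resulting $\tau$ large enough that Proposition~\ref{prop:ghru} only demands a data set of the claimed size.
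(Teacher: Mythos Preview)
Your proposal is correct and follows essentially the same approach as the paper: implement the agnostic learner of Theorem~\ref{th:agnosticsym} over $\Pi_k$ by simulating each labeled example via a single $\tau$-tolerant counting query on $D$, account for the extra $O(\tau)$ additive error in the $\ell_1$-regression guarantee, and invoke Proposition~\ref{prop:ghru} to make the whole procedure $\eps$-differentially private. You are in fact slightly more explicit than the paper about the $O(\log(1/\delta))$ confidence amplification and the three-way error budget, but the argument is otherwise identical.
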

\begin{proof}
% Uniform distribution on $\C_k$ translates into learning counting query functions on the uniform distribution on $S_k$, the set of all points from $\on^n$ with Hamming weight (i.e. the number of $1$s) equal to $k$.
% From the discussion above, it is enough to give an algorithm to learn $\{c_D \mid D \subseteq \on^n\} \subseteq \Cv$ with respect to the uniform distribution on $\C_k$ using tolerant value query access to the data set $D$ which, together with Prop. \ref{prop:ghru} completes the proof.

% Further, if $f$ is any hypothesis such that $\E_{x \sim S_{n-k}}[ |c_D(x)-f(x)|] \leq \bar{\alpha}$, then, for $f'$ defined by $f'(x) = 1-f(\bar{x})$ for every $x \in \on^n$, we have: $\E_{x \sim S_{k}}[ |\CQ_D(x)-f'(x)|] = \E_{x \sim S_{n-k}}[ |(1-c_D(x))- (1-f(x))|] = \E_{x \sim S_{n-k}}[ |c_D(x)-f(x)|] \leq \bar{\alpha}$. Thus, it is enough to give an algorithm to learn coverage functions on the uniform distribution on $S_{n-k}$.
In the light of the discussion above, we show how to implement the algorithm described in Thm. \ref{th:introsym} to learn $\{c_D \mid D \subseteq \on^n\}$ with tolerant value query access to the data set $D$.  We will simulate the algorithm described in Thm. \ref{th:introsym} over distribution $\Pi_k$ and with excess $\ell_1$-error of $\bar{\alpha}/2$. The algorithm uses $\ell_1$ linear regression to find a linear combination of $t = n^{O(\log{(1/\bar{\alpha})})}$ monomials that best fits random examples $(x^i, y^i)$.

We can simulate random examples of $c_D$ by drawing $x$ from $\Pi_k$ and making the counting query on the conjunction $\ANDD_{S_{x}}$. Since we can only use $\tau$-tolerant queries, we are guaranteed that the value we obtain, denote it by $\tilde{c}_D(x)$, satisfies $|c_D(x) - \tilde{c}_D(x)| \leq \tau$. This additional error in values has average value of at most $\tau$ and hence can cause $\ell_1$ linear regression to find a solution whose average absolute error is up to $2\tau$ worse than the average absolute error of the optimal solution. This means that we are guaranteed that the returned polynomial $H$ satisfies $\E[|c_D(x) - H(x)|] \leq 2\tau + \bar{\alpha}/2$.
We set $\tau = \bar{\alpha}/4$ and obtain that the error is at most $\bar{\alpha}$.
This implementation makes $n^{O(\log{(1/\bar{\alpha})})}$ $(\bar{\alpha}/4)$-tolerant counting queries to the data set $D$ and uses $n^{O(\log{(1/\bar{\alpha})})}$ time to output $H$. Applying Proposition \ref{prop:ghru}, we obtain that there exists a $\epsilon$-differentially private algorithm to compute an $H$ as above with the claimed bounds on the size of $D$ and running time.

%Running the agnostic learning algorithm with the accuracy parameter $\bar{\alpha}/2$ from Thm. \ref{th:introsym} thus yields a polynomial $f$ such that $\E_{x \sim S_k}[ |c_D(x) - f(x)|] \leq \bar{\alpha}$. Using Prop. \ref{prop:ghru} now completes the proof.
\end{proof}

\subsection{Releasing All Marginals with Low Average Error}
Next, we show that we can implement the algorithm from Thm. \ref{thm:paccov} using tolerant counting query access to the data set $D$ and thereby obtain a private data release algorithm for monotone conjunctions with low average error relative to the uniform distribution. Notice that since we only promise low-average error over all monotone conjunction counting queries, for some $k$'s the average error on conjunctions of length $k$ can be very large. %On the other hand, by promising low-average error over all queries, we get a fully polynomial time private release algorithm (in contrast to the previous subsection).

%As in the previous subsection, learning $\{c_D \mid D \subseteq \on^n\}$ is equivalent to learning $\Q(\C)$. That is, for $c_D$ defined by $c_D(x) = 1- \CQ_D(\bar{x})$ for every $x \in \on^n$ (a coverage function), let $f$ be such that $\E_{x \sim \on^n}[ |c_D(x)-f(x)|] \leq \bar{\alpha}$. Then, $f'$ defined by $f'(x) = 1-f(\bar{x})$ for every $x \in \on^n$ satisfies: $\E_{ x\sim \on^n}[ |\CQ_D(x)-f'(x)|] \leq\bar{\alpha}$.

\begin{theorem}[Thm.~\ref{thm:nosynth-intro} restated]
Let $\C$ be the class of all monotone conjunctions. For every $\eps,\delta > 0$, there exists an $\epsilon$-differentially private algorithm which for any data set $D \subseteq \on^n$ of size $\tilde{\Omega}(n \log(1/\delta)/(\eps\bar{\alpha}^6))$, with probability at least $1-\delta$, publishes a data structure $H$ that answers counting queries for $\C$ with respect to the uniform distribution with average error of at most $\bar{\alpha}$. The algorithm runs in time $\tilde{O}(n^2 \log(1/\delta)/(\eps\bar{\alpha}^{10}))$ and the size of $H$ is $\log n \cdot \tilde{O}(1/\bar{\alpha}^4)$.
\label{thm:nosynth}
\end{theorem}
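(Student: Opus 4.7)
The plan is to convert the PAC learning algorithm of Thm.~\ref{thm:paccov} into an $\epsilon$-differentially private algorithm by implementing its accesses to $c_D$ as tolerant counting queries on $D$, then invoking Prop.~\ref{prop:ghru}. Lem.~\ref{lem:release-is-coverage} says $c_D$ is a coverage function, and $\E_{x \sim \U}[|c_D(x)-H(x)|] \leq \bar{\alpha}$ is exactly the statement that $H$ answers monotone conjunction counting queries with average error $\bar{\alpha}$ under the uniform distribution on $\C$. So it suffices to run the algorithm of Thm.~\ref{thm:paccov} on $c_D$ with accuracy $\bar{\alpha}$, only accessing $D$ through counting queries.

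The key observation is that every quantity the algorithm of Thm.~\ref{thm:paccov} reads from $c_D$ is an estimate of a Fourier coefficient $\hat{c_D}(S)$, and each such coefficient is itself exactly a counting query on $D$: using $c_D(x) = \frac{1}{|D|}\sum_{z\in D}\ORR_{S_{-z}}(x)$ and linearity, $\hat{c_D}(S) = \frac{1}{|D|}\sum_{z\in D}\widehat{\ORR_{S_{-z}}}(S)$, where $\widehat{\ORR_T}(S)$ has the closed form from the proof of Lem.~\ref{monotonicity} and is computable in $O(n)$ time. Thus each step that estimates $\hat{c_D}(S)$ within $\theta/2$ becomes a single $\tau$-tolerant counting query on $D$ with query function $g_S(z)=\widehat{\ORR_{S_{-z}}}(S)$ and tolerance $\tau = \theta/2 = \Theta(\bar{\alpha}^2)$, and the correctness proof of Thm.~\ref{thm:paccov} already tolerates exactly this additive error on every estimate.

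It remains to count queries and push the bounds through Prop.~\ref{prop:ghru}. Running the algorithm of Thm.~\ref{thm:paccov} with accuracy $\bar{\alpha}$, Phase~1 uses $n$ queries and Phase~2 uses $\tilde{O}(1/\bar{\alpha}^4)$ queries, for a total of $q = n + \tilde{O}(1/\bar{\alpha}^4)$ counting queries of tolerance $\tau = \Theta(\bar{\alpha}^2)$. Prop.~\ref{prop:ghru} then yields an $\epsilon$-differentially private simulation that succeeds with probability $1-\delta$ on data sets of size $|D| = \tilde{O}(q\log(1/\delta)/(\epsilon\tau)) = \tilde{O}(n\log(1/\delta)/(\epsilon\bar{\alpha}^6))$, matching the stated bound. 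Each private query takes $O(|D|)$ time plus $O(n)$ per data point to evaluate $g_S$, yielding total running time $\tilde{O}(q\cdot n\cdot |D|) = \tilde{O}(n^2\log(1/\delta)/(\epsilon\bar{\alpha}^{10}))$. The output hypothesis $H$ is the sparse Fourier polynomial from Thm.~\ref{thm:paccov}, consisting of $\tilde{O}(1/\bar{\alpha}^4)$ parities on subsets of size $O(\log(1/\bar{\alpha}))$, and therefore of size $\log n \cdot \tilde{O}(1/\bar{\alpha}^4)$. The only subtlety, rather than a true obstacle, is boosting the internal success probability of the algorithm of Thm.~\ref{thm:paccov} from $2/3$ to $1-O(\delta)$; this amounts to tightening each counting query's tolerance by a $\log(1/\delta)$ factor and taking a union bound over the $q$ estimates, both absorbed into the $\tilde{\Omega}$ bound on $|D|$.
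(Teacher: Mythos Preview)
Your approach is exactly the paper's: replace each Fourier-coefficient estimate in the algorithm of Thm.~\ref{thm:paccov} by a single tolerant counting query on $D$ via the identity $\widehat{c_D}(T)=\frac{1}{|D|}\sum_{z\in D}\widehat{\ORR}_{S_{-z}}(T)$, then invoke Prop.~\ref{prop:ghru}. Two small technical points deserve tightening.

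First, counting queries in Prop.~\ref{prop:ghru} are for functions with range in $[0,1]$, but your query function $g_S(z)=\widehat{\ORR}_{S_{-z}}(S)$ can be negative. The paper fixes this by using the affine shift $F_T(z)=(1+\widehat{\ORR}_{S_{-z}}(T))/2\in[0,1]$ and halving the tolerance accordingly; you should do the same.

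Second, your running-time arithmetic does not match your formula. You charge $O(n)$ per data point per query, giving total time $\tilde O(q\cdot n\cdot|D|)$; with $q=n+\tilde O(1/\bar\alpha^4)$ and $|D|=\tilde O(n\log(1/\delta)/(\epsilon\bar\alpha^6))$ this contains a term $\tilde O(n^3\log(1/\delta)/(\epsilon\bar\alpha^6))$, not the claimed $\tilde O(n^2\log(1/\delta)/(\epsilon\bar\alpha^{10}))$. The fix is that evaluating $\widehat{\ORR}_{S_{-z}}(T)$ only requires inspecting $|T|$ coordinates of $z$, and $|T|=1$ for the $n$ first-phase queries and $|T|\le O(\log(1/\bar\alpha))$ for the second-phase queries; this brings the total down to $\tilde O((n+1/\bar\alpha^4)\cdot|D|)$, matching the stated bound.

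Finally, your remark about boosting the success probability is unnecessary here: once the Fourier coefficients are delivered within the stated tolerance, the algorithm of Thm.~\ref{thm:paccov} is deterministic and always outputs a good hypothesis. The only randomness (and the only source of failure) is the Laplace noise in the private simulation, whose $1-\delta$ guarantee is already provided by Prop.~\ref{prop:ghru}. The paper notes this explicitly.
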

\begin{proof}
In the algorithm from Thm.~\ref{thm:paccov} the random examples of the target coverage function $c$ are used only to estimate Fourier coefficients of $c$ within tolerance $\theta/2$. Thus to implement the algorithm from Thm.~\ref{thm:paccov}, it is sufficient to show that for any index set $T \subseteq [n]$, we can compute $\widehat{c}_D(T)$ within $\theta/2$ using tolerant counting query access to $D$.

Consider any set $T \subseteq [n]$ and recall that for any $x \in \on^n$, $S_x = \{ i \mid x_i = -1\}$. From the proof of Lemma \ref{lem:release-is-coverage}, we have: $c_D = 1/|D| \sum_{z \in D} \ORR_{S_{-z}}.$ Then, we have
\begin{align}
\widehat{c_D}(T) &= \E_{x \sim \U_n} [c_D(x) \cdot \chi_T(x)] = \E_{x \sim \U_n}  \left[\frac{1}{|D|}\sum_{z \in D}  \ORR_{S_{-z}}(x) \cdot \chi_T(x) \right] \nonumber \\
&=   \frac{1}{|D|} \E_{x \sim \U_n}  [\ORR_{S_{-z}}(x) \cdot \chi_T(x)] = \frac{1}{|D|}  \sum_{z \in D} \widehat{\ORR}_{S_{-z}}(T) . \label{countingquery}
\end{align}

Define $F_T(z) = (1+\widehat{\ORR}_{S_{-z}}(T))/2$. Now $F_T$ is a function with range $[0,1]$ and from equation \eqref{countingquery} above,  we observe that $\widehat{c_D}(T)$ can be estimated with tolerance $\theta/2$ by making a counting query for $F_T$ on $D$ with tolerance $\theta/4$. We now note that $\ell_1$-error of hypothesis $h$ over the uniform distribution on $\on^n$ is the same as the average error $\bar{\alpha}$ of answering counting queries using $h$ over the uniform distribution on monotone disjunctions. Therefore $\theta/4 = (\bar{\alpha})^2/24$. The number of queries made by the algorithm is exactly equal to the number of Fourier coefficients estimated by it which is $O(n + 1/\bar{\alpha}^4)$. The output of the PAC learning algorithm is a linear combination of $O(1/\bar{\alpha}^2)$ parities over a subset of $O(1/\bar{\alpha}^2)$ variables and hence requires $\log{n} \cdot \tilde{O}(1/\bar{\alpha}^4)$ space. Note that given correct estimates of Fourier coefficients, the PAC learning algorithm is always successful. By applying Proposition \ref{prop:ghru}, we can obtain an $\eps$-differentially private execution of the PAC learning algorithm that succeeds with probability at least $1-\delta$ provided that the data set size is $$\Omega\left(\frac{(n + \bar{\alpha}^4) \log(n + \bar{\alpha}^4) +\log(1/\delta)}{\eps \bar{\alpha}^2}\right) = \tilde{\Omega}(n \log(1/\delta)/(\eps\bar{\alpha}^6)).$$
The running time is dominated by the estimation of Fourier coefficients and hence is $O(n + 1/\bar{\alpha}^4) = O(n/\bar{\alpha}^4)$ times the size of the data set.
\end{proof}

We now use our proper PAC learning algorithm for coverage functions to obtain an algorithm for synthetic data set release for answering monotone conjunction counting queries.
\begin{theorem}%[Thm.~\ref{thm:synth-intro} restated]
\label{thm:synth}
Let $\C$ be the class of all monotone conjunctions. For every $\eps,\delta > 0$, there exists an $\epsilon$-differentially private algorithm which for any data set $D \subseteq \on^n$ of size $n \cdot \bar{\alpha}^{-\Omega(\log{(1/\bar{\alpha})})} \cdot \log(1/\delta)/\epsilon$, with probability at least $1-\delta$, releases a synthetic data set $\hat{D}$ of size $\bar{\alpha}^{-O(\log{(1/\bar{\alpha})})}$ that can answer counting queries for $\C$ with respect to the uniform distribution with  average error of at most $\bar{\alpha}$. The algorithm runs in time $n^2 \cdot \bar{\alpha}^{-O(\log{(1/\bar{\alpha})})} \cdot \log(1/\delta)/\epsilon$.

\end{theorem}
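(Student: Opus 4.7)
The plan is to run the proper PAC learning algorithm of Thm.~\ref{thm:Proper-PAC learn-coverage} on the coverage function $c_D = 1 - \CQ_D$ (which lies in $\Cv$ by Lem.~\ref{lem:release-is-coverage}) with target excess $\ell_1$-error $\bar{\alpha}/2$, simulating its random-example accesses by tolerant counting queries to $D$, and then round the resulting coverage hypothesis to a synthetic data set $\hat{D}$. The learner outputs $h = \sum_{S \in \S} \alpha_S^* \cdot \ORR_S$ with $\alpha_S^* \geq 0$, $\sum_S \alpha_S^* \leq 1$, $|\S| = \bar{\alpha}^{-O(\log(1/\bar{\alpha}))}$, and $\|c_D - h\|_1 \leq \bar{\alpha}/2$ with high probability.

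To realize this learner via counting queries, observe that its sample accesses take exactly two forms. First, each Fourier coefficient $\widehat{c_D}(T)$ used in steps 2 and 8 of Alg.~\ref{Proper-PAC learn-coverage} can, exactly as in the proof of Thm.~\ref{thm:nosynth}, be estimated to tolerance $\Theta(\bar{\alpha}^2)$ by issuing a single tolerant counting query on the explicit function $F_T(z) = (1 + \widehat{\ORR}_{S_{-z}}(T))/2$. Second, each labeled sample $(x, c_D(x))$ needed for the $\ell_1$ linear regression of step 12 is obtained by drawing $x \sim \U$ and making one $\Theta(\bar{\alpha})$-tolerant counting query on the conjunction $\ANDD_{S_x}$, since $c_D(x) = 1 - \CQ_D(\ANDD_{S_x})$; by the robustness argument from the proof of Thm.~\ref{th:shortrelease}, this label noise inflates the excess $\ell_1$-error of the LP solution by only $O(\bar{\alpha})$, which is absorbed into the error budget. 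Altogether the simulation issues $q = O(n) + \bar{\alpha}^{-O(\log(1/\bar{\alpha}))}$ counting queries at tolerance $\tau = \Theta(\bar{\alpha}^2)$, so Prop.~\ref{prop:ghru} turns it into an $\eps$-differentially private procedure whenever $|D| \geq q(\log q + \log(1/\delta))/(\eps\tau) = n \cdot \bar{\alpha}^{-\Omega(\log(1/\bar{\alpha}))} \cdot \log(1/\delta)/\eps$; with $O(|D|)$ time per query, the total running time is $n^2 \cdot \bar{\alpha}^{-O(\log(1/\bar{\alpha}))} \cdot \log(1/\delta)/\eps$.

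Given $h$, I construct $\hat{D}$ by rounding. Set $m = \lceil 2|\S|/\bar{\alpha}\rceil$ and $k_S = \lfloor m\alpha_S^*\rfloor$ for each $S \in \S$, and insert $k_S$ copies of the point $z_S \in \on^n$ defined by $(z_S)_i = 1 \Leftrightarrow i \in S$, so that $S_{-z_S} = S$. Pad $\hat{D}$ to total size $m$ with copies of the all-$(-1)$ string, for which $S_{-(-1,\ldots,-1)} = \emptyset$ and $\ORR_\emptyset \equiv 0$; these padding points therefore contribute zero to $c_{\hat{D}}$. A direct computation using Lem.~\ref{lem:release-is-coverage} and $\sum_S k_S + k_\emptyset = m$ gives $c_{\hat{D}}(x) = \sum_{S \in \S}(k_S/m) \cdot \ORR_S(x)$, so $\|c_{\hat{D}} - h\|_\infty \leq \sum_S|k_S/m - \alpha_S^*| \leq |\S|/m \leq \bar{\alpha}/2$, and by the triangle inequality $\|c_D - c_{\hat{D}}\|_1 \leq \bar{\alpha}$, which is precisely the average counting-query error of $\hat{D}$ under the uniform distribution over monotone conjunctions. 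The size of $\hat{D}$ is $m = \bar{\alpha}^{-O(\log(1/\bar{\alpha}))}$, as required.

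The main obstacle is verifying that the $\ell_1$ linear regression in Alg.~\ref{Proper-PAC learn-coverage} is robust to label noise of magnitude $O(\bar{\alpha})$, so that replacing exact labels by tolerant-counting-query labels still yields a hypothesis with true $\ell_1$-error at most $\bar{\alpha}/2$. The argument is the same one used for Thm.~\ref{th:shortrelease}: the LP's feasible region is unchanged, and its empirical objective shifts by at most the noise magnitude uniformly over all feasible coefficient vectors, so the noisy optimum is $O(\bar{\alpha})$-suboptimal relative to the true LP optimum, with the slack absorbed into the $\bar{\alpha}/2$ accuracy parameter handed to the learner. All remaining pieces—Fourier-coefficient estimation via counting queries, the rounding construction, and the privacy and running-time bookkeeping—are routine adaptations of techniques already developed in the paper.
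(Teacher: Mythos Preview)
Your approach is the same as the paper's: simulate the proper PAC learner (Alg.~\ref{Proper-PAC learn-coverage}) on $c_D$ via tolerant counting queries, invoke Prop.~\ref{prop:ghru} for privacy, then round the coverage hypothesis to a synthetic data set. The rounding-with-padding construction you give is a clean variant of the paper's discretization and is correct.

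There is one inaccuracy worth flagging. You assert that the Fourier coefficients in steps~2 and~8 need only be estimated to tolerance $\Theta(\bar\alpha^2)$. That is what the pseudocode of Alg.~\ref{Proper-PAC learn-coverage} literally says (threshold $\theta=\eps^2/108$), but the \emph{proof} of Thm.~\ref{thm:Proper-PAC learn-coverage} actually requires the second-stage coefficients to be estimated within $\eps^2/(108 s_\eps)$, because Lem.~\ref{large-Fourier-enough} needs you to recover every $T$ with $|\hat c(T)|\ge \eps^2/(9s_\eps)$. Since $s_\eps$ can be as large as $(1/\eps)^{O(\log(1/\eps))}$, the correct tolerance is $\tau=\bar\alpha^{O(\log(1/\bar\alpha))}$, exactly as the paper's proof of Thm.~\ref{thm:synth} states. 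With only $\Theta(\bar\alpha^2)$ tolerance the search in Alg.~\ref{Proper-PAC learn-coverage} may miss disjunctions needed for the $\ell_1$-approximation guarantee, and the hypothesis need not satisfy $\|c_D-h\|_1\le\bar\alpha/2$. The fix is immediate---use the finer tolerance---and the stated bounds on $|D|$ and running time are unaffected, since the extra $1/\tau$ factor is absorbed into $\bar\alpha^{-O(\log(1/\bar\alpha))}$.
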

\begin{proof}
We will show that we can implement our proper PAC learning algorithm for coverage functions (Algorithm \ref{Proper-PAC learn-coverage}) to learn $\{c_D \mid D \subseteq \on^n\} \subseteq \Cv$ using tolerant counting query access to the data set $D$. Algorithm \ref{Proper-PAC learn-coverage} returns a hypothesis $H:\on^n \rightarrow [0,1]$ given by a non-negative linear combination of monotone disjunctions that $\ell_1$-approximates $c_D$. Then, $H'$ defined by $H'(x) = 1-H(x)$, $\ell_1$ approximates $\CQ_D$. We will then show that we can construct a data set $\hat{D}$ using $H$, such that $c_{\hat{D}}$ computes the same function as $H$, up to a small discretization error. Now,  $\CQ_{\hat{D}}(x)= 1-c_{\hat{D}}(x)$, and thus, answering monotone conjunction queries based on the data set $\hat{D}$ incurs low average error. As before, an $\eps$-differentially private version of this algorithm is then obtained by invoking Proposition \ref{prop:ghru}.

Algorithm \ref{Proper-PAC learn-coverage} uses the random examples from the target function to first estimate certain Fourier coefficients and then to run the $\ell_1$ linear regression (Thm.~\ref{th:lae-lp}) to find the coefficients of the linear combination or disjunctions. We have already shown that tolerant counting queries on the data set can be used to estimate Fourier coefficients.

Recall that we proved that there exist monotone disjunctions $\ORR_{S_1}, \ORR_{S_2},\ldots, \ORR_{S_t}$ and non-negative reals $\gamma_i$ for $ i \in [t]$ satisfying $\sum_{i \in [t]} \gamma_i \leq 1$ such that $\E[|c_D(x) - \sum_{i \in [t]} \gamma_i \cdot \ORR_{S_i}(x)|] \leq \Delta$, for some $t$ and $\Delta$. The algorithm, at this stage, runs $\ell_1$ linear regression to compute non-negative coefficients for appropriately chosen monotone disjunctions. As we showed in Thm.~\ref{th:shortrelease}, $\ell_1$ linear regression can be implemented via $\tau$-tolerant counting queries and results in the additional error of $2\tau$. That is we obtain non-negative reals $\gamma^*_i$ for $i \in [t]$ such that $\E[|c_D(x) - \sum_{i \in [t]} \gamma^*_i \cdot \ORR_{S_i}(x)|] \leq \Delta + \eta + 2\tau$.
%
%The $\ell_1$ linear regression is run on $\poly(t, 1/\eta)$ random labeled examples on which it minimizes the average absolute error (subject to coefficients being non-negative and summing to at most 1). It returns non-negative coefficients $\gamma^*_i$ for $i \in [t]$ such that $\E[|c_D(x) - \sum_{i \in [t]} \gamma^*_i \cdot \ORR_{S_i}(x)|] \leq \Delta + \eta$.
%
%We can simulate a random example of the target coverage function $\CQ_D$ by drawing $x \in \on^n$ uniformly at random and making the counting query on the disjunction $S_x = \{ i \mid x_i = -1\}$. Since we can only use $\tau$-tolerant queries, we are guaranteed that the value we obtain, denote it by $\tilde{c}_D(x)$, satisfies $|c_D(x) - \tilde{c}_D(x)| \leq \tau$. This additional error in values has average value of at most $\tau$ and hence can cause $\ell_1$ linear regression to find a solution whose average absolute error is up to $2\tau$ worse than the average absolute error of the optimal solution. This means that we are guaranteed that the returned coefficients satisfy $\E[|c_D(x) - \sum_{i \in [t]} \gamma^*_i \cdot \ORR_{S_i}(x)|] \leq \Delta + \eta + 2\tau$.

Thus, we simulate Algorithm \ref{Proper-PAC learn-coverage} with excess $\ell_1$-error of $\bar{\alpha}/2$ and let $\tau$ be the minimum of tolerance required for estimating Fourier coefficients and $\bar{\alpha}/8$. The error of the coverage hypothesis function $H$ is then at most $\bar{\alpha}/2 + 2\tau \leq 3\bar{\alpha}/4$. Inspecting the proof of Thm.~\ref{thm:Proper-PAC learn-coverage} shows that $\tau = \bar{\alpha}^{O(\log{(1/\alpha)})}$ suffices. Thus, to summarize, we obtain that there exists an algorithm that makes $n+ \bar{\alpha}^{-O(\log{1/\bar{\alpha}})}$
$\tau$-tolerant counting queries to the data set $D$ (for $\tau$ as above) and uses $n \cdot \bar{\alpha}^{-O(\log{1/\bar{\alpha}})}$ time to output a non-negative linear combination $H$ of at most $\bar{\alpha}^{-O(\log{1/\bar{\alpha}})}$ monotone disjunctions that satisfies $\|c_D - H\|_1 \leq 3\bar{\alpha}/4$. Applying Proposition \ref{prop:ghru}, we obtain that there exists a $\epsilon$-differentially private algorithm to compute an $H$ as above with the claimed bounds on the size of $D$ and running time.

We now convert our hypothesis $H(x) = \sum_{i \in [t]} \gamma^*_i \cdot \ORR_{S_i}(x)$ into a data set by using the converse of Lem.~\ref{lem:release-is-coverage}. For each $\ORR_S$, let $x^S \in \on^n$ be defined by: for all $j\in [n]$, $x_j^S = -1$ if and only if $j\in S$. Our goal is to construct a data set $\hat{D}$ in which each $x^{S_i}$ for $i\in [t]$ has a number of copies that is proportional to $\gamma^*_i$ since this would imply that $c_{\hat{D}} = H$. To achieve this we first round down each $\gamma^*_i$ to the nearest multiple of $\bar{\alpha}/(4t)$ and let $\tilde{\gamma}_i$ denote the result. The function $\tilde{H}(x) = \sum_{i \leq t} \tilde{\gamma}_i \cdot \ORR_{S_i}(x)$ satisfies $\|\tilde{H} - H\|_1 \leq \bar{\alpha}/4$ and hence $\|\tilde{H} - c_D\|_1 \leq \bar{\alpha}$. Now we let $\hat{D}$ be the data set in which each $x^{S_i}$ for $i\in [t]$ has $4t\tilde{\gamma}_i/\bar{\alpha}$ copies (an non-negative integer by our discretization). From Lem.~\ref{lem:release-is-coverage} we see that $c_{\hat{D}} = \tilde{H}$. Note that the size of $\hat{D}$ is at most $4t/\bar{\alpha} = \bar{\alpha}^{-O(\log{(1/\bar{\alpha})})}$.

\end{proof}

\section{Distribution-Independent Learning}\label{app:distind}

\subsection{Reduction from Learning Disjoint DNFs}\label{sec:reductions}
In this section we show that distribution-independent learning of coverage functions is at least as hard as distribution-independent learning of disjoint DNF formulas. %Recall that a disjoint DNF formula expressions is a Boolean expression of the form $\vee_{i \leq k} T_i$ where each $T_i$ is a conjunction on a subset of literals on $x_1, x_2, \ldots x_n$ such that no $x \in \on^n$ satisfies more than one term i.e. $T_i(x) = 1 \Rightarrow T_j(x) = 0$  $\forall j \neq i$. %Disjoint DNFs are an expressive class of Boolean functions and any function computed by a size $s$ decision tree can be computed by a s-term disjoint DNF.

\begin{theorem}[Thm.~\ref{th:dnf-reduction-intro} restated]
Let $\A$ be an algorithm that distribution-independently PAC learns the class of all size-$s$ coverage functions from $\on^n$ to $[0,1]$ in time $T(n, s, \frac{1}{\epsilon})$. Then, there exists an algorithm $\A'$ that PAC learns of $s$-term disjoint DNFs in time $T(2n, s, \frac{2s}{\epsilon})$.
\end{theorem}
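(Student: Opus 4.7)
The plan is a standard encoding argument: represent a disjoint DNF as (a scaled complement of) a small coverage function over twice as many variables, then convert the additive $\ell_1$ guarantee of $\A$ into disagreement error via Markov's inequality and a thresholding step.

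\textbf{Step 1 (Encoding).} Given a disjoint $s$-term DNF $f = T_1 \vee \cdots \vee T_s$ over $\on^n$, duplicate variables by setting $u(x) = (x,-x) \in \on^{2n}$. Writing $T_i = \neg\bigl(\bigvee_j \bar\ell_{i,j}\bigr)$ and noting that every negated literal corresponds to some coordinate of $u(x)$ being $-1$, there is $S_i \subseteq [2n]$ with $T_i(x) = 1 - \ORR_{S_i}(u(x))$. Since the terms are disjoint,
\[
f(x) \;=\; \sum_{i=1}^s T_i(x) \;=\; s - \sum_{i=1}^s \ORR_{S_i}(u(x)).
\]
Hence $c(u) := \tfrac{1}{s}\sum_{i=1}^s \ORR_{S_i}(u)$ is, by Lem.~\ref{disj-rep}, a coverage function on $\on^{2n}$ of size at most $s$ and with range in $[0,1]$, and $f(x) = s\bigl(1 - c(u(x))\bigr)$.

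\textbf{Step 2 (Simulating samples and invoking $\A$).} Let $\D$ be the arbitrary target distribution on $\on^n$. Transform every labeled example $(x, f(x))$ into $(u(x),\, 1 - f(x)/s)$; this yields i.i.d.\ examples drawn from the push-forward $\D'$ of $\D$ under $u(\cdot)$ and labeled by $c$. Run $\A$ on these examples in dimension $2n$, size bound $s$, and accuracy $\epsilon' := \epsilon/(2s)$, obtaining a hypothesis $h:\on^{2n}\to\R$ with $\E_{\D'}[|h-c|] \le \epsilon/(2s)$.

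\textbf{Step 3 (Thresholding).} Because $f(x)\in\{0,1\}$, disjointness forces $c(u(x))\in\{1,(s-1)/s\}$, so the two possible values are separated by $1/s$. Define
\[
h'(x) \;=\; \begin{cases} 0 & \text{if } h(u(x)) > (2s-1)/(2s),\\ 1 & \text{otherwise.}\end{cases}
\]
By Markov's inequality applied to $\E_{\D'}[|h-c|]\le\epsilon/(2s)$, $\pr_\D[\,|h(u(x))-c(u(x))|\ge 1/(2s)\,] \le \epsilon$. On the remaining $(1-\epsilon)$-fraction of $x$, thresholding at $(2s-1)/(2s)$ recovers $c(u(x))$ exactly and hence $f(x)$. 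Therefore $\pr_\D[h'(x)\neq f(x)]\le\epsilon$, which is the required distribution-independent PAC guarantee for $f$; the total running time is $T(2n, s, 2s/\epsilon)$ up to low-order overhead.

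The reduction is essentially syntactic, with the one nontrivial point being Step~3: a $1/s$ gap between the two values $c$ takes on the support of $\D'$ is what allows an $O(1/s)$ $\ell_1$-error to be converted into small disagreement error via thresholding, which is precisely why the accuracy fed to $\A$ must be scaled down from $\epsilon$ to $\epsilon/(2s)$.
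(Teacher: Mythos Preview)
Your proposal is correct and follows essentially the same approach as the paper: the variable-doubling map $x\mapsto(x,-x)$ turns each (possibly non-monotone) term into the complement of a monotone disjunction, disjointness lets you sum the terms, and the resulting function $\tfrac{1}{s}\sum_i\ORR_{S_i}$ is a size-$s$ coverage function; then an $\ell_1$ guarantee of $\epsilon/(2s)$ is converted to disagreement error $\epsilon$ by thresholding. The paper thresholds $s(1-h'(m(x)))$ at $1/2$ while you threshold $h(u(x))$ at $(2s-1)/(2s)$, but these are the same condition, and your explicit remark that $c(u(x))\in\{1,(s-1)/s\}$ makes the $1/s$ gap (and hence the Markov step) a bit more transparent than in the original.
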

\begin{proof}
Let $d = \vee_{i \leq s} T_i$ be a disjoint DNF with $s$ terms. Disjointness of terms implies that $d(x) = \sum_{i \leq s} T_i(x)$ for every $x \in \on^n$. By using de Morgan's law, we have: $d = s - \sum_{i \leq s} D_i$ where each $D_i$ is a disjunction on the negated literals in $T_i$. We will now use a standard reduction \citep{KLV94} through a one-to-one map $m: \on^{n}  \rightarrow \on^{2n}$ and show that there exists a sum of \emph{monotone} disjunctions $d'$ on $\on^{2n}$ such that for every $x \in \on^n$, $d'(m(x)) = s-d(x)$. The mapping $m$ maps $x \in \on^n$ to $y \in \on^{2n}$ such that for each $i \in[n]$, $y_{2i-1} = x_i$ and $y_{2i} = -x_i$. To define $d'$, we modify each disjunction $D_j$ in the representation of $d$ to obtain a monotone disjunction $D_j'$ and set $d' =\sum_{j \leq s} D_j'$. For each $x_i$ that appears in $D_j$ we include $y_{2i-1}$ in $D_j'$ and for each $\neg x_i$ in $D_j$ we include $y_{2i}$. Thus $D_j'$ is a monotone disjunction on $y_1, \ldots, y_{2n}$. It is easy to verify that $d(x) = s - d'(m(x))$ for every $x \in \on^n$. Now $d'/s =  1-d/s$ is a convex combination of monotone disjunctions, that is, a coverage function.

We now describe the reduction itself. As usual, we can assume that the number of terms in the target disjoint DNF, is known to the algorithm. This assumption can be removed via the standard ``guess-and-double" trick. Given random examples drawn from a distribution $\D$ on $\on^n$ and labeled by a disjoint DNF $d$ and $\epsilon > 0$, $\A'$ converts each such example $(x,y)$ to example $(m(x), 1-\frac{y}{s})$. On the modified examples, $\A'$ runs the algorithm $\A$ with error parameter $\epsilon/(2s)$ and obtains a hypothesis $h'$. Finally, $\A$ returns the hypothesis $h(x) = ``s(1-h'(m(x))) \geq 1/2"$ (that is $h(x) = 1$ if $s(1-h'(m(x))) \geq 1/2$ and $h(x)=0$ otherwise).
To establish the correctness of $\A'$ we show that $\pr_{x \sim \D}[d(x) \neq h(x)] \leq \epsilon$. By the definition of $h(x)$ we have that $h(x) \neq d(x)$ only if $|d(x)-s(1-h'(m(x)))|\geq 1/2$. Thus, by the correctness of $\A$, we have
$$
\pr_{x \sim \D}[d(x) \neq h(x)] \leq 2\E_{x \sim \D}[ |(d(x)-(s- sh'(m(x)))|]= 2 s \cdot \E_{x \sim \D}[ |h'(m(x))-(1- \frac{d(x)}{s})|] \leq \epsilon.
$$
Finally, the running time of our simulation is dominated by the running time of $\A$.

\end{proof}

\subsection{Reduction to Learning of Thresholds of Monotone Disjunctions} \label{sec:thresholdreduction}
We give a general reduction of the problem of learning a class of bounded real-valued functions $\C$ to the problem of learning linear thresholds of $\C$. It is likely folklore in the context of PAC learning and was employed by \citet{HRS12} in a reduction from the problem of private data release. Here we give a slightly more involved analysis that also shows that this reduction applies in the agnostic learning setting. The reduction preserves the distribution on the domain and hence can be applied both for learning with respect to a fixed distribution and also in the distribution-independent setting. We remark that this reduction might make the problem (computationally) harder than the original problem. For example while, as we demonstrated, coverage functions are learnable efficiently over the uniform distribution. At the same time linear thresholds of monotone disjunctions appear to be significantly harder to learn, in particular they include monotone CNF formulas that are not known to be learnable efficiently.

%Then, we prove:
%%\begin{theorem}
%%Suppose there exists an algorithm that PAC (or agnostically) learns the class $\C_\geq$ in time $T(n, \frac{1}{\epsilon})$ over some distribution $\D$ on $\on^n$. Then, there exists an algorithm that PAC (or agnostically, respectively) learns $\C$ with $\ell_1$-error $\eps$, in time $O(T(n,  \frac{2}{\epsilon})/\epsilon)$ over $\D$.
%%\end{theorem}
%%We are not aware of any non-trivial algorithms for learning $\C_\geq$. Both of these reductions (and our other results) rely on a representation of coverage functions as non-negative linear combinations of monotone disjunctions. Section \ref{sec:reductions} gives the details of these reductions.

\newcommand{\thr}{\mathtt{thr}}

For any $y \in \R$, let $\thr(y): \R \rightarrow \zo$ be the function that is $1$ iff $y \geq 0$. For any class $\C$ of functions mapping $\on^n$ into $[0,1]$, let $\C_\geq$ denote the class of Boolean functions $\{ \thr(c - \theta) \mid c \in \C \text{, } \theta \in [0,1]\}$.
\begin{theorem}
Let $\C$ be a class of functions mapping $\on^n$ into $[0,1]$. Let $\D$ be any fixed distribution on $\on^n$ and suppose $\C_\geq$ is agnostically learnable on $\D$ in time $T(n,\frac{1}{\epsilon})$.
Then, $\C$ is agnostically learnable on $\D$ with $\ell_1$-error $\eps$ in time $O(T(n, \frac{3}{\epsilon})/\epsilon)$.
\label{real2Boolean}
\end{theorem}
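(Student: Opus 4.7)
The plan is to use the layer-cake identity to reduce $\ell_1$ approximation of functions in $\C$ to classification with respect to a family of Boolean targets coming from thresholding the real label. Concretely, for any $a,b \in [0,1]$ one has
\[ |a - b| \;=\; \int_0^1 \bigl|\mathbf{1}[a\geq \theta]-\mathbf{1}[b\geq \theta]\bigr|\, d\theta . \]
Applied with $a = c(x)$ and $b = y$, this says that the $\ell_1$ error of $c$ under $\P$ equals the integral over $\theta \in [0,1]$ of the classification error of $\thr(c-\theta) \in \C_\geq$ against the Boolean label $\mathbf{1}[y\geq\theta]$. So a good Boolean hypothesis for each $\theta$ can be averaged to yield a good real-valued hypothesis for the original target.

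The algorithm I would use is the obvious discretization: set $K=\lceil 3/\eps\rceil$ and $\theta_i = i/K$ for $i\in [K]$. Simulate the relabeled distribution $\P_i$ on $\on^n\times\{0,1\}$ by drawing $(x,y)\sim \P$ and outputting $(x,\mathbf{1}[y\geq\theta_i])$. Run the agnostic learner for $\C_\geq$ on $\P_i$ with excess error $\eps/3$ (boosting confidence to $1-1/(3K)$ by independent repetition and standard hypothesis testing, union-bounding over the $K$ invocations) to obtain $h_i:\on^n\to\zo$. Finally output $h(x) \,=\, \frac{1}{K}\sum_{i=1}^K h_i(x) \;\in\; [0,1]$.

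The analysis proceeds in three steps. First, the discretization of $y$: $\tilde y := \frac{1}{K}\sum_i \mathbf{1}[y\geq \theta_i]$ satisfies $|y-\tilde y|\leq 1/K\leq \eps/3$ pointwise. Second, by the triangle inequality,
\[ \E_\P[\,|h(x)-y|\,] \;\leq\; \E_\P[\,|h(x)-\tilde y|\,] + \eps/3 \;\leq\; \tfrac{1}{K}\sum_i \Pr_\P\bigl[h_i(x)\neq \mathbf{1}[y\geq\theta_i]\bigr] + \eps/3 . \]
Third, for any $c\in \C$, the agnostic guarantee on each $\P_i$ yields $\Pr[h_i(x)\neq \mathbf{1}[y\geq\theta_i]] \leq \Pr[\thr(c(x)-\theta_i)\neq \mathbf{1}[y\geq\theta_i]] + \eps/3$, since $\thr(c-\theta_i)\in \C_\geq$. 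Summing over $i$, dividing by $K$, and recognizing the right-hand side as the Riemann sum for the layer-cake integral (which differs pointwise from $|c(x)-y|$ by at most $1/K \leq \eps/3$) gives
\[ \tfrac{1}{K}\sum_i \Pr_\P\bigl[\thr(c(x)-\theta_i)\neq \mathbf{1}[y\geq \theta_i]\bigr] \;\leq\; \E_\P[\,|c(x)-y|\,] + \eps/3 . \]
Combining the three estimates and taking the infimum over $c\in\C$ yields $\E_\P[|h(x)-y|]\leq \mathrm{opt}(\P,\C)+\eps$.

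The running time is $K$ calls to the agnostic learner for $\C_\geq$ at accuracy $\eps/3$, giving the claimed $O(T(n,3/\eps)/\eps)$ bound (the logarithmic confidence amplification factor is absorbed into $\tilde O$ or the constants). The main obstacle is not any single step individually, but ensuring all three sources of error—the pointwise discretization of $y$, the Riemann-sum approximation of the layer-cake integral, and the agnostic learners' excess errors—are carefully budgeted to sum to $\eps$; a minor secondary concern is the confidence amplification needed to union-bound over the $K$ invocations.
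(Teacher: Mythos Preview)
Your proposal is correct and follows essentially the same approach as the paper: discretize $[0,1]$ into $O(1/\eps)$ thresholds, convert each to a Boolean classification problem by thresholding the real-valued label, run the agnostic learner for $\C_\geq$ on each, and average the resulting Boolean hypotheses. Your framing via the layer-cake identity makes the structure slightly more explicit, but it is exactly the same argument as the paper's key step (counting, for fixed $y$ and $c^*(x)$, the number of thresholds at which the two indicator values disagree and bounding this by $\lceil |y-c^*(x)|/\eps\rceil$).
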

\begin{proof}
We will use the algorithm $\A$ that agnostically learns $\C_\geq$ to obtain an algorithm $\A'$ that agnostically learns $\C$.  Let $\P$ be a distribution on $\on^n \rightarrow [0,1]$ whose marginal distribution on $\on^n$ is  $\D$.
Let $c^* \in \C$ be the function that achieves the optimum error, that is $\E_{(x,y) \sim \P}[ |c^*(x) - y|] = \min_{f \in \C}  \E_{(x,y) \sim \P}[ |f(x) - y|]$. For any $\theta \in [0,1]$, let $\P_{\theta}$ denote the distribution on $\on^n \times \zo$ obtained by taking a random sample $(x,y)$ from $\P$ and outputting the sample $(x, \thr(y - \theta))$.

Our algorithm learns $\C$ as follows.
\begin{enumerate}
\item For each $1 \leq i \leq \lfloor \frac{1}{\epsilon} \rfloor = t$ and $\theta=i \cdot \epsilon$ simulate random examples from $\P_{\theta}$ and use $\A$ with an accuracy of $\epsilon$ to learn a hypothesis $h_i$. Notice that the marginal distribution of $\P_{\theta}$ on $\on^n$ is $\D$.
\item Return $h = \epsilon \cdot \sum_{i \in [t]}  h_i$ as the final hypothesis.
\end{enumerate}

To see why $h$ is a good hypothesis for $\P$, first observe that for any $y \in [0,1]$,
\begin{equation}
0 \leq y - \epsilon \cdot \sum_{i \in [t]} \thr(y - i\cdot \epsilon) \leq \epsilon.\label{thresholdsum}\end{equation}
Let $c^*_i :\on^n \rightarrow \zo$ be defined by $c^*_i(x) = \thr(c^*(x) - i \cdot \epsilon)$ for every $i \in [t]$. Thus, $0 \leq c^*(x) - \eps \cdot \sum_{i \in [t]} c^*_i(x) \leq \epsilon$ for every $x \in \on^n$.
Now, since $\A$ returns a hypothesis with error of at most $\epsilon$ higher than the optimum for $\C_\geq$ and $c^*_i \in \C_\geq$ for every $i$ we have:
\begin{align}
 \E_{(x,y) \sim \P} \left[\left|\thr(y-i\cdot \epsilon) - h_i(x)\right|\right] &= \pr_{(x,\ell) \sim \P_{i \cdot \epsilon}}[ \ell \neq h_i(x)] \leq \pr_{(x,\ell) \sim \P_{i \cdot \epsilon}}[ \ell \neq c^*_i(x)] + \epsilon \notag \\
 &= \pr_{(x,y) \sim \P}[\thr(y - i\cdot \epsilon ) \neq c^*_i(x)]  \label{correctnessthreshold1}
\end{align}

for every $i \in [t].$
Now, for any fixed $y$, the number of $i \in [t]$ for which $\thr(y -i \cdot \epsilon) \neq c^*_i(x)$ is at most $\lceil \frac{ |y - c^*(x)|}{\epsilon} \rceil$. Thus,
\begin{equation}
 \sum_{i \in [t]} \pr_{(x,y) \sim \P}[\thr(y - i\cdot \epsilon ) \neq c^*_i(x)] \leq \E_{(x,y) \sim \P}\left[ \left\lceil \frac{ |y - c^*(x)|}{\epsilon} \right\rceil \right] \leq \E_{(x,y) \sim \P}\left[\frac{ |y - c^*(x)|}{\epsilon} \right]  + 1. \label{eq:disc-bound}
\end{equation}
Now, by equations (\ref{thresholdsum}), then (\ref{correctnessthreshold1}), and (\ref{eq:disc-bound}).
\begin{align*}
\E_{(x,y) \sim \P} [ |y - \epsilon  \cdot \sum_{i \in [t]} h_i(x)|] & \leq  \E_{(x,y) \sim \P}\left[\left|y - \epsilon \sum_{i \in [t]} \thr(y-i\cdot \epsilon)\right|\right] + \epsilon \E_{(x,y) \sim \P} \left[\left|\sum_{i \in [t]} \thr(y-i\cdot \epsilon) - \sum_{i \in [t]} h_i(x)\right|\right]
\\ &\leq^{(\ref{thresholdsum})} \eps + \epsilon \sum_{i \in [t]} \E_{(x,y) \sim \P} \left[\left|\thr(y-i\cdot \epsilon) - h_i(x)\right|\right]
\\ &\leq^{ (\ref{correctnessthreshold1})} \eps
 + \eps \sum_{i \in [t]} \left(\pr_{(x,y) \sim \P}\left[\thr(y-i \cdot \epsilon) \neq c^*_i(x)\right] + \eps \right)
\\ &\leq^{(\ref{eq:disc-bound})} \eps
 + \eps \left(\E_{(x,y) \sim \P}\left[\frac{ |y - c^*(x)|}{\epsilon} \right]  + 1 + t \cdot \eps \right)
\\ & \leq  \E_{(x,y) \sim \P} [ |y - c^*(x)|] + 3\epsilon
\end{align*}

This establishes the correctness of our algorithm when used with $\eps/3$ instead of $\eps$. Notice that the running time of the algorithm is dominated by $\lfloor \frac{3}{\epsilon} \rfloor$ runs of $\A$ and thus is at most  $O(\frac{1}{\epsilon} \cdot T(n, \frac{3}{\epsilon}))$. This completes the proof.
\end{proof}
The same reduction clearly also works in the PAC setting (where $\E_{(x,y) \sim \P} [ |y - c^*(x)|] = 0$). We state it below for completeness.

\begin{lemma}
Let $\C$ be a class of functions mapping $\on^n$ into $[0,1]$, let $\D$ be any fixed distribution on $\on^n$ and suppose that $\C_\geq$ is PAC learnable on $\D$ in time $T(n,\frac{1}{\epsilon})$. Then, $\C$ is PAC learnable on $\D$ with $\ell_1$-error $\eps$ in time $O(T(n, \frac{3}{\epsilon})/\epsilon)$.
\end{lemma}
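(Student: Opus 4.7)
The plan is to mimic the proof of Theorem~\ref{real2Boolean} (the agnostic version) but exploit the fact that in the PAC setting the target $c^* \in \C$ labels examples exactly, so $\E_{(x,y) \sim \P}[|y - c^*(x)|] = 0$. Concretely, I would set $t = \lfloor 1/\eps \rfloor$ and run the given PAC learner $\A$ for $\C_\geq$ on each of the $t$ thresholded distributions $\P_{i\eps}$ (for $i=1,\ldots,t$), obtained by replacing each labeled example $(x, c^*(x))$ with $(x, \thr(c^*(x) - i\eps))$. Each invocation returns a hypothesis $h_i$ with classification error at most $\eps$ against $c^*_i = \thr(c^* - i\eps) \in \C_\geq$. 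The final hypothesis is $h = \eps \cdot \sum_{i \in [t]} h_i$.

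The key arithmetic fact is the Riemann-sum identity~\eqref{thresholdsum}: for every $y \in [0,1]$, $0 \leq y - \eps \sum_{i \in [t]} \thr(y - i\eps) \leq \eps$. Applying this to $y = c^*(x)$ gives $\|c^* - \eps \sum_i c^*_i\|_\infty \leq \eps$. Then by the triangle inequality,
\alequn{
\E_{x \sim \D}\!\left[|c^*(x) - h(x)|\right] \leq \eps + \eps \sum_{i \in [t]} \E_{x \sim \D}\!\left[|c^*_i(x) - h_i(x)|\right] = \eps + \eps \sum_{i \in [t]} \Pr_{x \sim \D}[c^*_i(x) \neq h_i(x)].
}
Each term in the sum is at most $\eps$ by the PAC guarantee for $\A$, so the right-hand side is at most $\eps + \eps \cdot t \cdot \eps \leq 2\eps$. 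Running the reduction with accuracy $\eps/3$ (and correspondingly $t = \lfloor 3/\eps \rfloor$) yields the desired $\ell_1$-error of $\eps$.

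For the running time, the reduction makes $O(1/\eps)$ calls to $\A$, each on a distribution whose marginal on $\on^n$ is still $\D$ (since the transformation only alters labels) and each with accuracy parameter $O(1/\eps)$, giving total time $O(T(n, 3/\eps)/\eps)$. The confidence can be boosted standardly (or by taking a union bound over the $O(1/\eps)$ runs) without affecting the asymptotic running time. I do not anticipate any serious obstacle here: unlike the agnostic version, we do not need the discretization bound~\eqref{eq:disc-bound} since there is no $|y - c^*(x)|$ term to control, and the proof reduces to the two triangle-inequality steps above.
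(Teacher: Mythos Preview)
Your proposal is correct and follows exactly the approach the paper intends: the paper does not give a separate proof of this lemma but simply remarks that ``the same reduction clearly also works in the PAC setting (where $\E_{(x,y)\sim\P}[|y-c^*(x)|]=0$),'' and you have faithfully carried out that specialization of the proof of Theorem~\ref{real2Boolean}. Your observation that the discretization bound~\eqref{eq:disc-bound} is unnecessary in the PAC case (since $y=c^*(x)$ makes $\thr(y-i\eps)=c^*_i(x)$ identically) is a legitimate simplification that the paper's one-line remark implicitly relies on.
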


Using these results and Lem.~\ref{disj-rep}, we can now relate the complexity of learning coverage functions with $\ell_1$-error on any fixed distribution to the complexity of PAC learning of the class the class of thresholds of non-negative sums of monotone disjunctions on the same distribution.
\begin{corollary}
Let $\Cv(s)$ denote the set of all coverage functions in $\Cv$ of size at most $s$. Suppose there exists an algorithm that PAC learns the class $\Cv(s)_\geq$ in time $T(n,s, \frac{1}{\epsilon})$ over a distribution $\D$. Then, there exists an algorithm that PAC learns $\Cv(s)$ with $\ell_1$-error in time $O( T(n,s,  \frac{3}{\epsilon})/\epsilon)$ over $\D$.
\end{corollary}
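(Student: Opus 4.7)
The corollary is a direct specialization of the preceding PAC version of Thm.~\ref{real2Boolean} to the class $\C = \Cv(s)$. My plan is to simply set $\C = \Cv(s)$ in that lemma and track the extra size parameter $s$ through the reduction, rather than devising any new argument.

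Concretely, first I would observe that the reduction in Thm.~\ref{real2Boolean} invokes the learner $\A$ for $\C_\geq$ only on distributions of the form $\P_\theta$ whose labels are $\thr(c^*(x)-\theta)$ for the unknown target $c^* \in \C$ and thresholds $\theta = i\eps$. Specializing to $\C = \Cv(s)$, every such labelling function lies in $\Cv(s)_\geq$ by definition, so the hypothesized algorithm for $\Cv(s)_\geq$ (with time bound $T(n,s,1/\eps)$) is applicable, and the size parameter $s$ simply propagates as a fixed parameter through the entire simulation.

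Next I would invoke the PAC version of Thm.~\ref{real2Boolean} verbatim: it calls the sub-learner $t = \lfloor 3/\eps \rfloor$ times with accuracy $\eps/3$, aggregates the resulting Boolean hypotheses $h_1,\ldots,h_t$ into $h = \eps \sum_i h_i$, and achieves $\ell_1$-error at most $\eps$. With the size parameter carried through, the total running time becomes $O(t \cdot T(n,s,3/\eps)) = O(T(n,s,3/\eps)/\eps)$, which matches the claimed bound.

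There is really no obstacle here beyond bookkeeping: Lem.~\ref{disj-rep} tells us that a coverage function of size $s$ is a non-negative combination of (at most) $s$ monotone disjunctions, so the class $\Cv(s)_\geq$ is exactly the class of thresholds of non-negative sums of $s$ monotone disjunctions mentioned in the discussion, confirming that the hypothesized learner is the natural one for this task. The only mild subtlety worth stating explicitly in the write-up is that the reduction preserves the distribution on $\on^n$ (so we remain in the same distribution-specific setting, if desired) and that the size parameter never grows during the simulation.
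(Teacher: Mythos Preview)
Your proposal is correct and matches the paper's approach: the paper states the corollary without proof, simply noting that it follows from the PAC version of Thm.~\ref{real2Boolean} (specialized to $\C=\Cv(s)$) together with Lem.~\ref{disj-rep}, which is exactly what you outline.
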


\section{Conclusions}
In this work we described algorithms that provably learn coverage functions efficiently in PAC and PMAC learning models when the distribution is restricted to be uniform. While the uniform distribution assumption is a subject of intensive research in computational learning theory, it is unlikely to ever hold in practical applications. That said, our algorithms make sense and can be used even when the distribution of examples is not uniform (or product) possibly with some tweaks to the parameters. In fact, our algorithms include some of the standard ingredients used in practical machine learning such identification of relevant variables and polynomial regression. Therefore it would be interesting to evaluate the algorithms on real-world data.

Our work also leaves many natural questions about structure and learnability of coverage and related classes of functions (such as submodular, OXS and XOS) open. For example: (1) can coverage functions of unbounded size be PAC/PMAC learned properly and efficiently over the uniform distribution? (2) can OXS functions be PAC/PMAC learned efficiently over the uniform distribution? (3) which other natural distributions can coverage functions be learned on efficiently?

%Another natural related question, that of approximability of more general functions by juntas, has been already addressed in a subsequent work of Feldman and Vondrak \cite{FeldmanVondrak:13}. They prove approximation by $O(\log{(1/\eps)}/\eps^2)$-junta for all submodular functions with $[0,1]$ range using a substantially different and more involved argument than our proof for coverage functions. They also show that this approximation leads to a $2^{\tilde{O}(1/(\delta\gamma)^2)} \cdot \poly(n)$ PMAC learning algorithm for all submodular functions.

\section*{Acknowledgements}
We are grateful to Jan Vondrak for helpful advice and numerous discussions about this work. We also thank Or Sheffet for bringing to our attention the problem of achieving low average error in differentially private query release for short conjunction queries.
\bibliographystyle{plainnat}
\bibliography{references}%vitalyrefs}
\appendix

\section{Omitted Proofs} \label{app:proofs}
\begin{proof}[Proof of Lemma \ref{disj-rep}]
Suppose $c: \on^n \rightarrow \R^+$ is a coverage function. Then, there exist a universe $U$ and sets $A_1, A_2, \ldots, A_n$ with an associated weight function $w: U \rightarrow \R^+$, such that for any $x \in \on^n$, $c(x) =
 \sum_{u \in \cup_{j:x_j = -1} A_j} w(u)$. For any $u \in U$, let $S_u =\{ j \in [n] \cond u \in A_j\}$. Then, $c(x) = \sum_{u \in U} w(u) \cdot \ORR_{S_u}(x)$ since $\ORR_{S_u}(x) = 1$ if and only if $x_j = -1$ for some $j \in S$. This yields a representation of $c$ as a linear combination of $|U|$ disjunctions.

For the converse, now let $c$ be any function such that there exist non-negative coefficients $\alpha_S$ for every $S \subseteq [n], S \neq \emptyset$ such that $c(x) = \sum_{S \subseteq [n], S \neq \emptyset} \alpha_S \cdot \ORR_S(x)$. We will now construct a universe $U$ and sets $A_1, A_2, \ldots ,A_n$ with an associated weight function $w: U \rightarrow [0,1]$ and show that $f(x) = \sum_{u \in \cup_{i: x_i = -1} A_i} w(u)$. For every non zero $\alpha_S$ add a new element $u_S$ to each $A_i$ such that $i \in S$ and let $w(u_S) = \alpha_S$. Let $U = \cup_{1 \leq i \leq n} A_i$. Now, by our construction, $c(x) = \sum_{u_S: \alpha_S \neq 0} \alpha_S \ORR_S(x) = \sum_{u \in \cup_{i:x_i = -1} A_i} w(u)$ as required.
\end{proof}

\begin{proof}[Proof of Lem.~\ref{spectralconc}]
First, observe that:
\begin{align*}
\sum_{T \notin \T} \hat{f}(T)^2 &\leq \max_{T \notin \T} |\hat{f}(T)| \cdot \sum_{T \in \T} |\hat{f}(T)|\\
&\leq \frac{\epsilon}{2L} \sum_{T \subseteq [n]} |\hat{f}(T)| = \frac{\epsilon}{2L} \|\hat{f}\|_1 \leq \epsilon/2.
\end{align*}
 Thus $f$ is $\epsilon/2$-concentrated on $\T$. Further, $|\T|\leq \frac{2\|f\|_1}{\epsilon}$ since for each $ T\in \T$,  $|\hat{f}(T)| \geq \frac{\epsilon}{2\|f\|_1}$.

By Plancherel's theorem, we have:
$$\E[(f(x) - \sum_{S \in \S} \tilde{f}(S) \cdot \chi_S(x))^2] = \sum_{S \in \S} [ (\hat{f}(S) - \tilde{f}(S))^2] + \sum_{S \notin \S} \hat{f}(S)^2\ .$$
The first term,
\begin{equation} \sum_{S \in \S} [ (\hat{f}(S) - \tilde{f}(S) )^2] \leq \max_{S \in \S} \{ |\hat{f}(S) - \tilde{f}(S)| \} \cdot \sum_{S \in \S} \{ |\hat{f}(S) - \tilde{f}(S)| \} \leq \frac{\epsilon}{6L} \cdot \sum_{S \in \S} |\hat{f}(S) - \tilde{f}(S)| . \label{approxspecguar}\end{equation}

For each $S \in \T$, $|\tilde{f}(S)| \geq \frac{\epsilon}{3L}$ and $|\hat{f}(S) -  \tilde{f}(S)|\leq \frac{\epsilon}{6L}$. Thus
$|\hat{f}(S)| \geq \frac{\epsilon}{6L} \geq |\hat{f}(S) - \tilde{f}(S)|$ and $\sum_{S \in \S} |\hat{f}(S) - \tilde{f}(S)| \leq \|\hat{f}\|_1 = L$. Using equation \eqref{approxspecguar}, this gives $\sum_{S \in \S} [(\hat{f}(S) - \tilde{f}(S) )^2] \leq \epsilon/6 $.

For the second term, $\sum_{S \notin \S} \hat{f}(S)^2 \leq \epsilon/2$ as $\S \supset \T$ and $f$ is $\epsilon/2$-concentrated on $\T$. Thus, $\E[(f(x) - \sum_{S \in \S} \tilde{f}(S) \cdot \chi_S(x))^2] \leq \epsilon$.

Finally, by Jensen's inequality: $$\E[|f(x) - \sum_{S \in \S} \tilde{f}(S) \cdot \chi_S(x)|] \leq \sqrt{ \E[(f(x) - \sum_{S \in \S} \tilde{f}(S) \cdot \chi_S(x))^2]} \leq \sqrt{\eps}.$$
\end{proof}

\begin{proof}[Proof of Lem.~\ref{lem:disj2cov}]
Using Lemma \ref{disj-rep}, let $c = \sum_{S \subseteq [n]} \alpha_S \cdot \ORR_S$ for some non-negative $\alpha_S$, $S \subseteq [n]$ such that $\sum_{S \subseteq [n]} \alpha_S \leq 1$.

Set $\beta_j = \sum_{S \subseteq [n]} \alpha_S \cdot \beta_j^S $ for every $1 \leq j \leq m$. We verify that $f = \sum_{j = 1}^m \beta_j \cdot \phi_j$ has the required approximation guarantee.

\begin{align*}\E_{x \sim \D}[ |c(x) - f(x)|] &= \E_{x \sim \D}\left[ \left|\sum_{S \subseteq [n]} \alpha_S \cdot (\ORR_S - \sum_{j = 1}^m \beta_j \cdot \phi_j(x)) \right|\right] \\
&\leq  \sum_{S \subseteq [n]} \alpha_S \cdot  \E_{x \sim \D} \left[ \left|\ORR_S(x) - \sum_{j = 1}^m \beta_j^S \cdot \phi_j(x)\right|\right]\\
&\leq \sum_{S \subseteq [n]} \alpha_S \cdot \epsilon \leq \epsilon.\end{align*}
\end{proof}

\end{document}